\title{The Planning Spectrum --- One, Two, Three, Infinity}
\author{\name Marco Pistore \email pistore@dit.unitn.it\\
        \addr Department of Information and Communication Technology\\
        University of Trento\\
        Via Sommarive 14, 38050 Povo (Trento), Italy
        \AND
        \name Moshe Y. Vardi \email vardi@cs.rice.edu\\
        \addr Department of Computer Science\\
        Rice University\\
        6100 S. Main Street, Houston, Texas
        }
\newtheorem{definition}{Definition}
\newtheorem{proposition}{Proposition}
\newtheorem{theorem}[proposition]{Theorem}
\newtheorem{lemma}[proposition]{Lemma}
\newtheorem{example}{Example}
\newenvironment{proof}%
  {\trivlist\item[]\textbf{Proof.\ }\parindent=0pt}%
  {\endtrivlist}
\newcommand{\qed}{\nobreak\hfill\nobreak$\Box$}
\newcommand{\partialto}{\rightharpoonup}
\newcommand{\dom}{\mathop{\mathrm{dom}}}
\newcommand{\Nat}{\mathbb{N}}
\newcommand{\aequiv}{\sim}
\newcommand{\aimply}{\rightsquigarrow}
\newcommand{\as}{\mathrel{::=}}
\newcommand{\orbar}{\mathrel{|}}
\newcommand{\arity}{\textit{arity}}
\newcommand{\st}{\mathrel{:}}
\newcommand{\Tau}{\text{\Large $\tau$}}
\newcommand{\Tauw}{\Tau{\!}_w}
\newcommand{\Dcal}{\mathcal{D}}
\newcommand{\pf}[1]{[[#1]]}
\newcommand{\LTL}{\text{LTL}\xspace}
\newcommand{\CTLstar}{\text{CTL\!*}\xspace}
\newcommand{\X}{\mathop{\textrm{\normalfont{X}}}}
\newcommand{\U}{\mathop{\textrm{\normalfont{U}}}}
\newcommand{\F}{\mathop{\textrm{\normalfont{F}}}}
\newcommand{\G}{\mathop{\textrm{\normalfont{G}}}}
\newcommand{\Ap}{\mathop{\textrm{\normalfont{A}}}}
\newcommand{\Ep}{\mathop{\textrm{\normalfont{E}}}}
\newcommand{\AG}{\mathop{\textrm{\normalfont{AG}}}}
\newcommand{\EF}{\mathop{\textrm{\normalfont{EF}}}}
\newcommand{\EXG}{\mathop{\textrm{\normalfont{EXG}}}}
\renewcommand{\AA}{{\!\mathcal{A}}}
\newcommand{\EE}{{\mathcal{E}}}
\renewcommand{\AE}{\mathcal{AE}}
\newcommand{\init}{\mathit{init}}
\newcommand{\Out}{\mathcal{P}}
\begin{document}


\maketitle


\begin{abstract}
  Linear Temporal Logic (LTL) is widely used for defining conditions on
  the execution paths of dynamic systems.
  In the case of dynamic systems that allow for nondeterministic
  evolutions, one has to specify, along with an LTL formula $\varphi$,
  which are the paths that are required to satisfy the formula.
  Two extreme cases are the \emph{universal} interpretation $\AA .
  \varphi$, which requires that the formula be satisfied for all 
  execution paths, and the \emph{existential} interpretation $\EE .
  \varphi$, which requires that the formula be satisfied for some execution
  path.
  
  When LTL is applied to the definition of goals in planning problems
  on nondeterministic domains, these two extreme cases are too
  restrictive.
  It is often impossible to develop plans that achieve the goal in all
  the nondeterministic evolutions of a system, and it is too weak to
  require that the goal is satisfied by some execution.
  
  In this paper we explore alternative interpretations of an LTL
  formula that are between these extreme cases.
  We define a new language that permits an arbitrary combination of
  the $\AA$ and $\EE$ quantifiers, thus allowing, for instance, to
  require that each finite execution can be extended to an execution
  satisfying an LTL formula ($\AA\EE . \varphi$), or that there is
  some finite execution whose extensions all satisfy an LTL formula
  ($\EE\AA . \varphi$).
  We show that only eight of these
  combinations of path quantifiers are relevant,
  corresponding to an alternation of the quantifiers of length one
  ($\AA$ and $\EE$), two ($\AA\EE$ and $\EE\AA$), three ($\AA\EE\AA$
  and $\EE\AA\EE$), and infinity ($(\AA\EE)^\omega$ and
  $(\EE\AA)^\omega$).
  We also present a planning algorithm for the new language
  that is based on an automata-theoretic approach, and study its
  complexity.
\end{abstract}


\section{Introduction}
\label{sec:intro}

In automated task planning \cite{FN71,PW92,GNT04}, given a description of a
dynamic domain and of the basic actions that can be performed on it,
and given a goal that defines a success condition to be achieved, one
has to find a suitable plan, that is, a description of the actions to
be executed on the domain in order to achieve the goal.
``Classical'' planning concentrates on the so called ``reachability''
goals, that is, on goals that define a set of final desired states to
be reached.
Quite often practical applications require plans that deal with goals
that are more general than sets of final states. Several planning
approaches have been recently proposed, where \emph{temporal logic}
formulas are used as goal language, thus allowing for goals that
define conditions on the whole plan execution paths, i.e., on the
sequences of states resulting from the execution of plans
\cite{BK98,BK00,CdGLV02,CM98,DLPT02,dGV99,KD01,PT01}.
Most of these approaches use \emph{Linear Temporal Logic} 
(LTL) \cite{Eme90} as the goal language. LTL allows one to express
reachability goals (e.g., $\F q$ --- reach $q$), maintainability goals
(e.g., $\G q$ --- maintain $q$), as well as goals that combine
reachability and maintainability requirements (e.g., $\F \G q$ --- reach a
set of states where $q$ can be maintained), and Boolean combinations
of these goals.

In \emph{planning in nondeterministic domains}
\cite{CPRT03,Peo92,War76}, actions are allowed to have different
outcomes, and it is not possible to know at planning time which of the
different possible outcomes will actually take place.
Nondeterminism in action outcome is necessary for modeling in a
realistic way several practical domains, ranging from robotics to
autonomous controllers to two-player games.\footnote{See the work of \citeA{GNT04} 
for a deeper discussion on the fundamental role of nondeterminism in
planning problems and in practical applications.}
For instance, in a realistic robotic application one has to take into
account that actions like ``pick up object'' might result in a failure
(e.g., if the object slips out of the robot's hand).
A consequence of nondeterminism is that the execution of a plan may
lead to more than one possible execution path.
Therefore, one has to distinguish whether a given goal has to be
satisfied by all the possible execution paths (in this case we speak
of ``strong'' planning), or only by some of the possible execution
paths (``weak'' planning).
In the case of an LTL goal $\varphi$, strong planning corresponds to
interpreting the formula in a universal way, as $\AA . \varphi$, while weak
planning corresponds to interpreting it in an existential way, as $\EE .
\varphi$.

Weak and strong plans are two extreme ways of satisfying
an LTL formula.
In nondeterministic planning domains, it might be impossible to
achieve goals in 
a strong way: for instance, in the robotic application it might be
impossible to fulfill a given task if objects keep slipping from the
robot's hand.
On the other hand, weak plans are too unreliable, since they achieve
the goal only under overly optimistic assumptions on the outcomes of action
executions.

In the case of reachability goals, \emph{strong cyclic planning}
\cite{CPRT03,DTV99} has been shown to provide a viable compromise
between weak and strong planning.
Formally, a plan is strong cyclic if each possible partial execution
of the plan can always be extended to an execution that reaches some
goal state.
Strong cyclic planning allows for plans that encode iterative
trial-and-error strategies, like ``pick up an object until succeed''.
The execution of such strategies may loop forever only in the case the
action ``pick up object'' continuously fails, and a failure in
achieving the goal for such an unfair execution is usually
acceptable.
Branching-time logics like CTL and \CTLstar allow for expressing goals
that take into account nondeterminism. Indeed, \citeA{DTV99} show how
to encode strong cyclic reachability goals as CTL formulas.
However, in CTL and \CTLstar path quantifiers are interleaved with
temporal operators, making it difficult to extend
the encoding of strong cyclic planning proposed by \citeA{DTV99}
to generic temporal goals.

In this paper we define a new logic that allows for exploring the
different degrees in which an LTL formula $\varphi$ can be satisfied
that exist between the strong goal $\AA . \varphi$ and the weak goal
$\EE . \varphi$.
We consider logic formulas of the form $\alpha .
\varphi$, where $\varphi$ is an LTL formula and $\alpha$ is a path
quantifier that generalizes the $\AA$ and $\EE$ quantifiers used for
strong and weak planning.
A path quantifier is a (finite or infinite) word on alphabet $\{
{\AA}, {\EE} \}$.
The path quantifier can be seen as the definition of a two-player
game for the selection of the outcome of action execution.
Player A (corresponding to symbol $\AA$) chooses the action outcomes
in order to make goal $\varphi$ fail, while player E (corresponding to
symbol $\EE$) chooses the action outcomes in order to satisfy the goal
$\varphi$.
At each turn, the active player controls the outcome of action execution
for a finite number of actions and then passes the control to the
other player.\footnote{If the path quantifier is a finite
word, the player that has the last turn chooses the action outcome for
the rest of the infinite execution.}
We say that a plan satisfies the goal $\alpha . \varphi$ if the player
E has a winning strategy, namely if, for all the possible moves of
player A, player E is always able to build an execution path that
satisfies the LTL formula $\varphi$.

Different path quantifiers define different alternations in the turns
of players A and E.
For instance, with goal $\AA . \varphi$ we require that the formula
$\varphi$ is satisfied independently of how the ``hostile'' player A
chooses the outcomes of actions, that is, we ask for a strong plan.
With goal $\EE . \varphi$ we require that the formula $\varphi$ is
satisfied for some action outcomes chosen by the ``friendly'' player
E, that is, we ask for a weak plan.
With goal $\AA \EE . \varphi$ we require that every plan execution led
by player A can be extended by player E to a successful execution
that satisfies the formula $\varphi$; in the case of a reachability
goal, this corresponds to asking for a strong cyclic solution.
With goal $\EE \AA . \varphi$ we require that, after an initial set of
actions controlled by player E, we have the
guarantee that formula $\varphi$ will be satisfied independently of
how player A will choose the outcome of the following actions.
As a final example, with goal $(\AA \EE)^\omega . \varphi = \AA \EE
\AA \EE \AA \cdots . \varphi$ we require that formula $\varphi$ is
satisfied in all those executions where player E has the
possibility of controlling the action outcome an infinite number of
times.

Path quantifiers can define arbitrary combinations of the
turns of players A and E, and hence different degrees in satisfying
an LTL goal.
We show, however, that, rather surprisingly, only a finite number of
alternatives exist between strong and weak planning: only eight
``canonical'' path quantifiers give rise to plans of different
strength, and every other path quantifier is equivalent to a canonical
one.
The canonical path quantifiers correspond to the games of length one
($\AA$ and $\EE$), two ($\AA\EE$ and $\EE\AA$), and three ($\AA\EE\AA$
and $\EE\AA\EE$), and to the games defining an infinite alternation
between players A and E ($(\AA\EE)^\omega$ and $(\EE\AA)^\omega$).
We also show that, in the case of reachability goals $\varphi = \F
q$, the canonical path quantifiers further collapse.  Only three
different degrees of solution are possible, corresponding to weak
($\EE . \F q$), strong ($\AA . \F q$), and strong cyclic ($\AA\EE . \F
q$) planning.

Finally, we present a planning algorithm for the new goal language and we
study its complexity.
The algorithm is based on an automata-theoretic approach
\cite{EJ88,KVW00}: planning domains and goals are represented as
suitable automata, and planning is reduced to the problem of checking
whether a given automaton is nonempty.
The proposed algorithm has a time complexity that is doubly
exponential in the size of the goal formula.  It is known that the
planning problem is 2EXPTIME-complete for goals of the form $\AA .
\varphi$ \cite{PR90},
and hence the complexity of our algorithm is optimal.

The structure of the paper is as follows.
In Section~\ref{sec:background} we present some preliminaries on
automata theory and on temporal logics.
In Section~\ref{sec:planning} we define planning domains and plans.
In Section~\ref{sec:ae-ltl} we define $\AE$-LTL, our new logic of path
quantifier, and study its basic properties.
In Section~\ref{sec:ae-ltl-planning} we present a planning algorithm
for $\AE$-LTL, while in Section~\ref{sec:reach-and-maintain} we apply
the new logic to the particular cases of reachability and
maintainability goals.
In Section~\ref{sec:conl} we make comparisons with
related works and present some concluding remarks.


\section{Preliminaries}
\label{sec:background}

This section introduces some preliminaries on automata theory and on
temporal logics.


\subsection{Automata Theory}
\label{ssec:aut}

Given a nonempty alphabet $\Sigma$, an infinite word on $\Sigma$ is an
infinite sequence $\sigma_0, \sigma_1, \sigma_2, \ldots$ of symbols
from $\Sigma$.
Finite state automata have been proposed as finite structures that
accept sets of infinite words.
In this paper, we are interested in \emph{tree} automata, namely in
finite state automata that recognize trees on alphabet $\Sigma$,
rather than words.

\begin{definition}[tree]
  \label{d:tree}
  A \emph{(leafless) tree} $\tau$ is a subset of $\Nat^*$ such that:
  \begin{itemize}
   \item $\epsilon \in \tau$ is the root of the tree;
   \item if $x \in \tau$ then there is some $i \in \Nat$ such that $x
    \cdot i \in \tau$;
   \item if $x \cdot i \in \tau$, with $x \in \Nat^*$ and $i \in
    \Nat$, then also $x \in \tau$;
   \item if $x \cdot (i{+}1) \in \tau$, with $x \in \Nat^*$ and $i \in
    \Nat$, then also $x \cdot i \in \tau$.
  \end{itemize}
  The \emph{arity} of $x \in \tau$ is the number of its children,
  namely $\arity(x) = | \{ i \st x \cdot i \in  \tau \} |$.
  Let $\Dcal \subseteq \Nat$. Tree $\tau$ is a \emph{$\Dcal$-tree} if
  $\arity(x) \in \Dcal$ for each $x \in \tau$.
  A \emph{$\Sigma$-labelled tree} is a pair $(\tau, \Tau)$, where
  $\tau$ is a tree and $\Tau: \tau \to \Sigma$.
  In the following, we will denote $\Sigma$-labelled tree $(\tau,
  \Tau)$ as $\Tau$, and let $\tau = \dom(\Tau)$.
\end{definition}
Let $\Tau$ be a $\Sigma$-labelled tree.
A \emph{path} $p$ of $\Tau$ is a (possibly infinite) sequence
$x_0,x_1,\ldots$ of nodes $x_i \in \dom(\Tau)$ such that $x_{k+1}
= x_k \cdot i_{k+1}$.
In the following, we denote with $P^*(\Tau)$ the set of finite paths
and with $P^\omega(\Tau)$ the set of infinite paths of $\Tau$.
Given a (finite or infinite) path $p$, we denote with $\Tau(p)$ the
string $\Tau(x_0) \cdot \Tau(x_1) \cdots$, where $x_0, x_1, \ldots$ is
the sequence of nodes of path $p$.
We say that a finite (resp.\ infinite) path $p'$ is a finite (resp.\ 
infinite) \emph{extension} of the finite path $p$ if the sequence of
nodes of $p$ is a prefix of the sequence of nodes of $p'$.

A tree automaton is an automaton that accepts sets of trees. 
In this paper, we consider a particular family of tree automata,
namely \emph{parity tree automata} \cite{EJ91}. 
\begin{definition}[parity tree automata]
  \label{d:tree-automaton}
  A \emph{parity tree automaton} with parity index $k$ is a tuple $A =
  \langle \Sigma, \Dcal, Q, q_0, \delta, \beta \rangle$, where:
  \begin{itemize}
   \item $\Sigma$ is the finite, nonempty alphabet;
   \item $\Dcal \subseteq \Nat$ is a finite set of arities;
   \item $Q$ is the finite set of states;
   \item $q_0 \in Q$ is the initial state;
   \item $\delta: Q \times \Sigma \times \Dcal \to 2^{Q^*}$ is the
    transition function, where $\delta(q,\sigma,d) \in 2^{Q^d}$;
   \item $\beta: Q \to \{ 0, \ldots, k \}$ is the parity mapping.
  \end{itemize}
\end{definition}
A tree automaton accepts a tree if there is an accepting run of the
automaton on the tree.
Intuitively, when a parity tree automaton is in state $q$ and it is
reading a $d$-ary node of the tree that is labeled by $\sigma$, it
nondeterministically chooses a $d$-tuple $\langle q_1, \ldots, q_{d}
\rangle$ in $\delta(q,\sigma,d)$ and then makes $d$ copies of itself,
one for each child node of the tree, with the state of the $i$-th copy
updated to $q_i$.
A run of the parity tree automaton is accepting if, along every
infinite path, the minimal priority that is visited infinitely often is
an even number.
\begin{definition}[tree acceptance]
  \label{d:tree-acceptance}
  The parity tree automaton $A = \langle \Sigma, \Dcal, Q, q_0,
  \delta, \beta \rangle$ \emph{accepts} the $\Sigma$-labelled
  $\Dcal$-tree $\Tau$ if there exists an \emph{accepting run} $r$ for
  $\Tau$, namely there exists a mapping $r: \tau \to Q$ such that:
  \begin{itemize}
   \item $r(\epsilon) = q_0$;
   \item for each $x \in \tau$ with $\arity(x) = d$ we have \mbox{$\langle
    r(x \cdot 0), \ldots r(x \cdot (d{-}1)) \rangle \in
    \delta(r(x),\Tau(x),d)$};
   \item along every infinite path $x_0, x_1, \ldots$ in $\Tau$, the
    minimal integer $h$ such that $\beta(r(x_i)) = h$ for infinitely many
    nodes $x_i$ is even.
  \end{itemize}
  The tree automaton $A$ is \emph{nonempty} if there exists some tree
  $\Tau$ that is accepted by $A$.
\end{definition}
\citeA{EJ91} have shown that the emptiness of a parity tree
automaton can be decided in a time that is exponential in the parity
index and polynomial in the number of states.
\begin{theorem}
  \label{t:tree-aut-complexity}
  The emptiness of a parity tree automaton with $n$ states and index $k$
  can be determined in time $n^{O(k)}$.
\end{theorem}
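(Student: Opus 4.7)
The plan is to reduce emptiness of $A$ to solving a two-player parity game whose number of positions is polynomial in $n$ and whose parity index remains $k$, and then invoke the known algorithm for parity games.

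First I would construct a game $G_A$ in which an existential player (call her Eve) plays the dual roles of picking a label and of choosing a transition tuple, while a universal player (Adam) picks along which child the play should descend. Concretely, positions owned by Eve are states $q \in Q$; from such a position Eve selects a triple $(\sigma, d, \langle q_1, \ldots, q_d \rangle)$ with $\sigma \in \Sigma$, $d \in \Dcal$, and $\langle q_1, \ldots, q_d \rangle \in \delta(q, \sigma, d)$, and moves to an Adam-owned position that records this triple. From that position, Adam selects an index $i < d$ and the play continues from $q_i$. Priorities on Eve's positions are inherited from $\beta$; Adam's auxiliary positions can be padded with an innocuous priority so that the parity index stays $k$.

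Next I would verify that $A$ is nonempty if and only if Eve has a winning strategy in $G_A$ from $q_0$. By memoryless determinacy of parity games, Eve can be assumed to play positionally; unfolding such a strategy against all possible Adam moves yields a $\Sigma$-labelled $\Dcal$-tree $\Tau$ together with a mapping $r$ from its nodes to $Q$, and every infinite branch of the unfolding coincides with some play of the game, so the minimum priority seen infinitely often is even and $r$ is an accepting run. Conversely, any accepting run on some tree $\Tau$ directly induces an Eve strategy that wins against every Adam choice, since Adam's moves are nothing but a choice of branch through the run.

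Finally, I would invoke the classical fact that a parity game with $N$ positions and parity index $k$ is solvable in time $N^{O(k)}$. Because the construction keeps the index at $k$ and produces a game of size polynomial in $n$, the overall running time is $n^{O(k)}$. The main obstacle, and the deep ingredient I would import rather than redevelop, is precisely the $N^{O(k)}$ bound for parity games; the reduction itself is routine, but one has to check carefully that no step inflates the parity index beyond $k$ and that the blow-up from encoding transition tuples as Adam-positions does not push the game size beyond polynomial in $n$.
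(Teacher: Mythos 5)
The paper does not prove this theorem at all: it is imported verbatim from Emerson and Jutla (1991), so there is no internal proof to compare against. Your argument---reducing emptiness to a parity game whose Eve-positions are the states and whose Adam-positions record chosen transition tuples, then invoking memoryless determinacy and the $N^{O(k)}$ parity-game algorithm---is correct and is essentially the standard derivation underlying that citation, with the honest caveat that the imported parity-game bound carries all of the real weight.
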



\subsection{Temporal Logics}
\label{ssec:logics}

Formulas of \emph{Linear Temporal Logic} (LTL) \cite{Eme90} are built
on top of a set $\mathit{Prop}$ of atomic propositions using the standard
Boolean operators, the unary temporal operator $\X$ (next), and the
binary temporal operator $\U$ (until).
In the following we assume to have a fixed set of atomic propositions
$\mathit{Prop}$, and we define $\Sigma = 2^{\mathit{Prop}}$ as the set of subsets of
$\mathit{Prop}$.

\begin{definition}[LTL]
  \label{d:ltl}
  \LTL formulas $\varphi$ on $\mathit{Prop}$ are defined by the following
  grammar, where $q \in \mathit{Prop}$:
  \begin{equation*}
    \varphi \as q \orbar \neg \varphi \orbar \varphi \wedge \varphi
         \orbar \X \varphi \orbar \varphi \U \varphi
  \end{equation*}
\end{definition}
We define the following auxiliary operators: $\F \varphi = \top \U
\varphi$ (eventually in the future $\varphi$) and $\G \varphi = \neg
\F \neg \varphi$ (always in the future $\varphi$).
LTL formulas are interpreted over infinite words on $\Sigma$. 
In the following, we write $w \models_\LTL \varphi$ whenever the
infinite word $w$ satisfies the \LTL formula $\varphi$.
\begin{definition}[LTL semantics]
  \label{d:ltl-sem}
  Let $w = \sigma_0, \sigma_1, \ldots$ be an infinite word on $\Sigma$
  and let $\varphi$ be an LTL formula. We define $w,i \models_\LTL
  \varphi$, with $i \in \Nat$, as follows:
  \begin{itemize}
   \item $w,i \models_\LTL q$ iff $q \in \sigma_i$;
   \item $w,i \models_\LTL \neg \varphi$ iff it does not hold that $w,i
    \models_\LTL \varphi$;
   \item $w,i \models_\LTL \varphi \wedge \varphi'$ iff $w,i \models_\LTL
    \varphi$ and $w,i \models_\LTL \varphi'$;
   \item $w,i \models_\LTL \X \varphi$ iff $w, i{+}1 \models_\LTL \varphi$;
   \item $w,i \models_\LTL \varphi \U \varphi'$ iff there is some $j \geq i$
    such that $w,k \models_\LTL \varphi$ for all $i \leq k < j$ and $w,j
    \models_\LTL \varphi'$.
  \end{itemize}
  We say that $w$ satisfies $\varphi$, written $w \models_\LTL \varphi$, if
  $w,0 \models_\LTL \varphi$.
\end{definition}

\CTLstar \cite{Eme90} is an example of ``branching-time'' logic. Path
quantifiers $\Ap$ (``for all paths'') and $\Ep$ (``for some path'') can
prefix arbitrary combinations of linear time operators.
\begin{definition}[\CTLstar]
  \label{d:ctlstar}
  \CTLstar formulas $\psi$ on $\mathit{Prop}$ are defined by the following
  grammar, where $q \in \mathit{Prop}$:
  \begin{eqnarray*}
    \psi    & \as & q \orbar \neg \psi \orbar \psi \wedge \psi \orbar
                    \Ap \varphi \orbar \Ep \varphi\\
    \varphi & \as & \psi \orbar \neg \varphi \orbar
                    \varphi \wedge \varphi \orbar
                    \X \varphi \orbar \varphi \U \varphi
  \end{eqnarray*}
\end{definition}
\CTLstar formulas are interpreted over $\Sigma$-labelled trees.
In the following, we write $\Tau \models_\CTLstar \psi$ whenever
$\Tau$ satisfies the \CTLstar formula $\psi$.
\begin{definition}[\CTLstar semantics]
  \label{d:ctlstar-sem}
  Let $\Tau$ be a $\Sigma$-labelled tree and let $\psi$ be a \CTLstar
  formula. We define $\Tau,x \models_\CTLstar \psi$, with $x \in
  \tau$, as follows:
  \begin{itemize}
   \item $\Tau,x \models_\CTLstar q$ iff $q \in \Tau(x)$;
   \item $\Tau,x \models_\CTLstar \neg \psi$ iff it does not hold
    that $\Tau,x \models_\CTLstar \psi$;
   \item $\Tau,x \models_\CTLstar \psi \wedge \psi'$ iff $\Tau,x
    \models_\CTLstar \psi$ and $\Tau,x \models_\CTLstar \psi'$;
   \item $\Tau,x \models_\CTLstar \Ap \varphi$ iff $\Tau,p \models_\CTLstar
    \varphi$ holds for all infinite paths $p = x_0, x_1, \ldots$ with
    $x_0 = x$;
   \item $\Tau,x \models_\CTLstar \Ep \varphi$ iff $\Tau,p \models_\CTLstar
    \varphi$ holds for some infinite path $p = x_0, x_1, \ldots$ with
    $x_0 = x$;
  \end{itemize}
  where $\Tau,p \models_\CTLstar \phi$, with $p \in P^\omega(\Tau)$,
  is defined as follows:
  \begin{itemize}
   \item $\Tau,p \models_\CTLstar \psi$ iff $p = x_0, x_1, \ldots$
    and $\Tau,x_0 \models_\CTLstar \psi$;
   \item $\Tau,p \models_\CTLstar \neg \varphi$ iff it does not hold
    that $\Tau,p \models_\CTLstar \varphi$;
   \item $\Tau,p \models_\CTLstar \varphi \wedge \varphi'$ iff $\Tau,p
    \models_\CTLstar \varphi$ and $\Tau,p \models_\CTLstar \varphi'$;
   \item $\Tau,p \models_\CTLstar \X \varphi$ iff $\Tau,p'
    \models_\CTLstar \varphi$, where $p' = x_1, x_2, \ldots$ if $p =
    x_0, x_1, x_2, \ldots$;
   \item $\Tau,p \models_\CTLstar \varphi \U \varphi'$ iff there is
    some $j \geq 0$ such that $\Tau,p_k \models_\CTLstar \varphi$ for
    all $0 \leq k < j$ and $\Tau,p_j \models_\CTLstar \varphi'$, where
    $p_i = x_i, x_{i+1}, \ldots$ if $p = x_0, x_1, \ldots$.
  \end{itemize}
  We say that $\Tau$ satisfies the \CTLstar formula $\psi$, written
  $\Tau \models_\CTLstar \psi$, if $\Tau, \epsilon \models_\CTLstar
  \psi$.
\end{definition}

The following theorem states that it is possible to build a tree
automaton that accepts all the trees satisfying a \CTLstar formula.
The tree automaton has a number of states that is doubly exponential and a
parity index that is exponential in the length of the formula. A proof
of this theorem has been given by \citeA{EJ88}.
\begin{theorem}
  \label{t:ctlstar-tree-aut}
  Let $\psi$ be a \CTLstar formula, and let $\Dcal \subseteq \Nat^*$
  be a finite set of arities.
  One can build a parity tree automaton $A^\Dcal_\psi$ that accepts
  exactly the $\Sigma$-labelled $\Dcal$-trees that satisfy
  $\psi$.
  The automaton $A^\Dcal_\psi$ has $2^{2^{O(|\psi|)}}$ states and parity
  index $2^{O(|\psi|)}$, where $|\psi|$ is the length of formula
  $\psi$.
\end{theorem}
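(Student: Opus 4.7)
The plan is to proceed by induction on the structure of $\psi$, reducing the construction for path-quantified subformulas to the automata-theoretic machinery for LTL. For atomic propositions the construction is a one-state automaton, and the Boolean cases $\neg\psi$ and $\psi\wedge\psi'$ are handled by intersecting/complementing tree automata; since complementation of parity tree automata is linear (by dualizing the transition function and shifting the priorities) and intersection is a product with index bounded by the sum, these steps preserve the bounds up to constants. The real content lies in the two cases $\Ap\varphi$ and $\Ep\varphi$.

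For a subformula $\Ap\varphi$ (the case $\Ep\varphi$ is dual via $\Ep\varphi\equiv\neg\Ap\neg\varphi$), let $\psi_1,\dots,\psi_m$ be the maximal proper state subformulas occurring in $\varphi$. Treat each $\psi_j$ as a fresh propositional letter, so that $\varphi$ becomes a pure LTL formula $\varphi'$ over the enlarged alphabet $\Sigma'=\Sigma\times 2^{\{\psi_1,\dots,\psi_m\}}$. First I would apply the standard Vardi--Wolper translation to obtain a nondeterministic B\"uchi word automaton $B_{\varphi'}$ of size $2^{O(|\varphi|)}$ accepting exactly the infinite words over $\Sigma'$ satisfying $\varphi'$. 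Next I would determinize $B_{\varphi'}$, via Safra's construction (followed by index reduction) or Piterman's construction, into a deterministic parity word automaton $D_{\varphi'}$ with $2^{2^{O(|\varphi|)}}$ states and parity index $2^{O(|\varphi|)}$.

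The word automaton $D_{\varphi'}$ is then lifted to a parity tree automaton $T_{\Ap\varphi}$ that runs a copy of $D_{\varphi'}$ along every branch: on a $d$-ary node labelled $\sigma'$, with current state $q$, the unique successor state $\delta_{D}(q,\sigma')$ is sent to all $d$ children. Acceptance via the parity condition on each infinite branch exactly captures ``$\varphi'$ holds on every path'', and the state count and index are inherited from $D_{\varphi'}$. Finally, to eliminate the auxiliary propositions $\psi_j$, I would take the product of $T_{\Ap\varphi}$ with the inductively constructed tree automata $A^{\mathcal{D}}_{\psi_j}$, synchronizing on the truth values guessed for $\psi_j$ at each node and restricting to consistent runs; taking products over finitely many automata of doubly exponential size and exponential parity index yields an automaton of the same asymptotic complexity.

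The main obstacle is preventing a tower of exponentials from building up through nested path quantifiers. A naive induction would redo an exponential blowup at each quantifier level, giving a bound proportional to $|\psi|$ in the height of the tower rather than a fixed doubly exponential bound. The standard remedy, and the one I would follow, is to collect all state subformulas at once, perform a single LTL-to-DPW determinization per maximal path subformula, and combine the resulting tree automata only by product; since every path subformula $\Ap\varphi$ or $\Ep\varphi$ has $|\varphi|\le|\psi|$, each determinization costs at most $2^{2^{O(|\psi|)}}$ states and parity index $2^{O(|\psi|)}$, and taking a product over at most $|\psi|$ such automata keeps the state count at $2^{2^{O(|\psi|)}}$ and the parity index at $2^{O(|\psi|)}$. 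Verifying this tight bookkeeping, particularly that the parity index does not blow up under repeated product, is the most delicate part of the argument.
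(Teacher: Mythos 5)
The paper does not actually prove Theorem~\ref{t:ctlstar-tree-aut}: it is quoted as a known result and attributed to Emerson and Jutla (1988), so there is no internal proof to compare against. Your sketch reconstructs the standard argument behind that citation --- treat the maximal state subformulas of each path formula as fresh atoms, translate the resulting LTL formula to a nondeterministic B\"uchi word automaton, determinize it to a deterministic parity word automaton, run that automaton down every branch to handle $\Ap\varphi$, dualize the LTL formula to handle $\Ep\varphi$, and recombine the pieces by product --- and your diagnosis of the one real danger (an exponential blow-up repeated at every level of quantifier nesting) together with the remedy (one determinization per maximal path subformula, products only thereafter) is exactly the right bookkeeping. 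In outline this is the correct and standard route.

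Two of your auxiliary claims are wrong as stated, though both are repairable. First, complementation of the \emph{nondeterministic} parity tree automata of Definition~\ref{d:tree-automaton} is not linear: dualizing the transition function of a nondeterministic automaton yields a universal automaton, not a nondeterministic one, and genuine complementation of nondeterministic parity tree automata costs an exponential. The standard fix is to never complement a tree automaton at all: put $\psi$ into positive normal form first, pushing negations down to atomic propositions via $\neg\Ap\varphi\equiv\Ep\neg\varphi$ and De Morgan, so that only intersection and union of tree automata are ever required. Second, the intersection of two nondeterministic parity tree automata is not simply ``a product with index bounded by the sum'': the conjunction of two parity conditions is a Streett condition, and converting the product back into a parity automaton incurs an index-appearance-record blow-up that is roughly factorial in the number of pairs. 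With index $2^{O(|\psi|)}$ this factorial is still $2^{2^{O(|\psi|)}}$, so the advertised state bound and index bound survive, but this is precisely the ``delicate bookkeeping'' you flag in your last sentence, and it needs to be carried out rather than asserted.
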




\section{Planning Domains and Plans}
\label{sec:planning}

A \emph{(nondeterministic) planning domain} \cite{CPRT03} can be
expressed in terms of a set of \emph{states}, one of which is
designated as the \emph{initial state}, a set of \emph{actions},
and a \emph{transition function} describing how (the execution of)
an action leads from one state to possibly many different states.
\begin{definition}[planning domain]
  \label{d:domain}
  A \emph{planning domain} is a tuple $D = \langle \Sigma,\sigma_0,A,R
  \rangle$ where:
  \begin{itemize}
   \item $\Sigma$ is the finite set of states;
   \item $\sigma_0 \in \Sigma$ is the initial state;
   \item $A$ is the finite set of actions;
   \item $R: \Sigma \times A \to 2^\Sigma$ is the transition relation.
  \end{itemize}
  We require that for each $\sigma \in \Sigma$ there is some $a \in
  A$ and some $\sigma' \in \Sigma$ such that $\sigma' \in R(\sigma,
  a)$.
  We assume that states $\Sigma$ are ordered, and we write
  $R(\sigma,a) = \langle \sigma_1, \sigma_2, \ldots, \sigma_n \rangle$
  whenever $R(\sigma,a) = \{ \sigma_1, \sigma_2, \ldots, \sigma_n \}$
  and $\sigma_1 < \sigma_2 < \cdots < \sigma_n$.
\end{definition}

\begin{figure}
\includegraphics{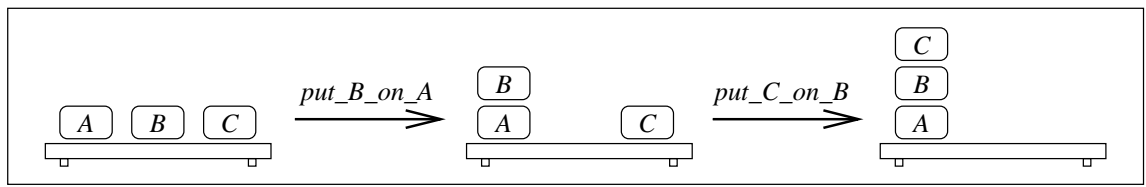}
\caption{A possible scenario in the blocks-world domain.}
\label{fig:bw-exa}
\end{figure}

\begin{example}
  \label{exa:bw}
  Consider a blocks-world domain consisting of a set of blocks, which
  are initially on a table, and which can be stacked on top of each
  other in order to build towers (see Figure~\ref{fig:bw-exa}).

  The states $\Sigma$ of this domain are the possible configurations
  of the blocks: in the case of three blocks there are 13 states,
  corresponding to all the blocks on the table (1 configuration), a
  2-block tower and the remaining block on the table (6
  configurations), and a 3-block tower (6 possible configurations).
  We assume that initially all blocks are on the table.

  The actions in this domain are $\mathit{put\_X\_on\_Y}$,
  $\mathit{put\_X\_on\_table}$, and $\mathit{wait}$, where $X$ and $Y$
  are two (different) blocks.
  Actions $\mathit{put\_X\_on\_Y}$ and $\mathit{put\_X\_on\_table}$
  are possible only if there are no blocks on top of $X$ (otherwise we
  could not pick up $X$). In addition, action $\mathit{put\_X\_on\_Y}$
  requires that there are no blocks on top of $Y$ (otherwise we could
  not put $X$ on top of $Y$).

  We assume that the outcome of action $\mathit{put\_X\_on\_Y}$ is
  nondeterministic: indeed, trying to put a block on top of a tower
  may fail, in which case the tower is destroyed.
  Also action $\mathit{wait}$ is nondeterministic: it is possible that
  the table is bumped and that all its towers are destroyed.
\end{example}

A \emph{plan} guides the evolution of a planning domain by issuing
actions to be executed.
In the case of nondeterministic domains, \emph{conditional plans}
\cite{CPRT03,PT01} are required, that is, the next action issued by the
plan may depend on the outcome of the previous actions.
Here we consider a very general definition of plans: a plan is a
mapping from a sequence of states, representing the past history of
the domain evolution, to an action to be executed.
\begin{definition}[plan]
  \label{d:plan}
  A \emph{plan} is a partial function $\pi : \Sigma^+ \partialto A$ such that:
  \begin{itemize}
   \item if $\pi(w \cdot \sigma) = a$, then $\sigma' \in R(\sigma,a)$ for
    some $\sigma'$;
   \item if $\pi(w \cdot \sigma) = a$, then $\sigma' \in R(\sigma,a)$ iff
    $w \cdot \sigma \cdot \sigma' \in \dom(\pi)$;
   \item if $w \cdot \sigma \in \dom(\pi)$ with $w \neq \epsilon$,
    then $w \in \dom(\pi)$;
   \item $\pi(\sigma)$ is defined iff $\sigma = \sigma_0$ is the
    initial state of the domain.
  \end{itemize}
\end{definition}
The conditions in the previous definition ensure that a plan defines
an action to be executed for exactly the finite paths $w \in
\Sigma^+$ that can be reached executing the plan from the initial
state of the domain.

\begin{example}
  \label{exa:bw-plan}
  A possible plan for the blocks-world domain of Example~\ref{exa:bw}
  is represented in Figure~\ref{fig:bw-plan}.
  We remark the importance of having plans in which the action to be
  executed depends on the whole sequence of states corresponding to
  the past history of the evolution.
  Indeed, according to the plan if Figure~\ref{fig:bw-plan}, two
  different actions $\mathit{put\_C\_on\_A}$ and
  $\mathit{put\_C\_on\_table}$ are performed in the state with block
  $B$ on top of $A$, depending on the past history.
\end{example}
\begin{figure}
  \def\FA{\framebox{\hbox to 1.2cm{\vbox to 1cm {\vfill\hbox{A}}\ {B}\ {C} \hfill}}}
  \def\FB{\framebox{\hbox to 1.2cm{\vbox to 1cm {\vfill\hbox{B}\vspace{2pt}\hbox{A}}\ {C} \hfill}}}
  \def\FC{\framebox{\hbox to 1.2cm{\vbox to 1cm {\vfill\hbox{C}\vspace{2pt}\hbox{B}\vspace{2pt}\hbox{A}} \hfill}}}
  \begin{center}
    \begin{tabular}{|l|l|}
      \hline
      \hfill $w$ \hfill\mbox{} & $\hfill \pi(w)$ \hfill \mbox{}\\[2pt]\hline 
      \ & \ \\[-3mm]
      $\FA$ & $\mathit{put\_B\_on\_A}$ \\[2pt]\hline
      \ & \ \\[-3mm]
      $\FA\cdot\FB$ & $\mathit{put\_C\_on\_B}$ \\[2pt]\hline
      \ & \ \\[-3mm]
      $\FA\cdot\FB\cdot\FC$ & $\mathit{put\_C\_on\_table}$ \\[2pt]\hline
      \ & \ \\[-3mm]
      $\FA\cdot\FB\cdot\FC\cdot\FB$ & $\mathit{put\_B\_on\_table}$ \\[2pt]\hline
      \ & \ \\[-3mm]
      $\FA\cdot\FB\cdot\FC\cdot\FB\cdot\FA$ &$\mathit{wait}$\\[2pt]\hline
      \ & \ \\
      \hfill \textit{any other history} \hfill\mbox{}&$\mathit{wait}$\\[2pt]\hline
    \end{tabular}
 \end{center}  
 \caption{A plan for the blocks-world domain.}
 \label{fig:bw-plan}
\end{figure}

Since we consider nondeterministic planning domains, the execution of
an action may lead to different outcomes. Therefore, the execution of
a plan on a planning domain can be described as a
$(\Sigma{\times}A)$-labelled tree. Component $\Sigma$ of the label of
the tree corresponds to a state in the planning domain, while
component $A$ describes the action to be executed in that state.

\begin{definition}[execution tree]
  \label{d:execution-tree}
  The \emph{execution tree} for domain $D$ and plan $\pi$ is the
  $(\Sigma{\times}A)$-labelled tree $\Tau$ defined as follows:
  \begin{itemize}
   \item $\Tau(\epsilon) = (\sigma_0,a_0)$ where $\sigma_0$ is the
    initial state of the domain and $a_0 = \pi(\sigma_0)$;
   \item if $p = x_0, \ldots, x_n \in P^*(\Tau)$ with $\Tau(p) =
    (\sigma_0,a_0) \cdot (\sigma_1,a_1) \cdots (\sigma_n,a_n)$, and if
    $R(\sigma_n,a_n) = \langle \sigma'_0, \ldots, \sigma'_{d-1}
    \rangle$, then for every $0 \leq i < d$ the following conditions
    hold:
    $x_n \cdot i \in \dom(\Tau)$ and $\Tau(x_n \cdot i)
    = (\sigma'_i,a'_i)$ with $a'_i = \pi(\sigma_0 \cdot \sigma_1 \cdots
    \sigma_n \cdot \sigma'_i)$.
  \end{itemize}
\end{definition}

A \emph{planning problem} consists of a planning domain and of a goal
$g$ that defines the set of desired behaviors. In the following, we
assume that the goal $g$ defines a set of execution trees, namely the
execution trees that exhibit the behaviors described by the goal (we
say that these execution trees satisfy the goal).

\begin{definition}[planning problem]
  \label{d:planning-problem}
  A \emph{planning problem} is a pair $(D,g)$, where $D$ is a
  planning domain and $g$ is a \emph{goal}.
  A \emph{solution} to a planning problem $(D,g)$ is a plan $\pi$ such
  that the execution tree for $\pi$ satisfies the goal $g$.
\end{definition}


\section{A Logic of Path Quantifiers}
\label{sec:ae-ltl}

In this section we define a new logic that is based on LTL and that
extends it with the possibility of defining conditions on the sets of
paths that satisfy the LTL property.
We start by motivating why such a logic is necessary for defining
planning goals.

\begin{example}
  \label{exa:ltl-goals}
  Consider the blocks-world domain introduced in the previous section.
  Intuitively, the plan of Example~\ref{exa:bw-plan} is a solution to
  the goal of building a tower consisting of blocks $A$, $B$, $C$
  and then of destroying it. This goal can be easily formulated as an
  LTL formula:
  \begin{equation*}
    \varphi_1 = \F\ ((\mathit{C\_on\_B} \wedge \mathit {B\_on\_A} \wedge \mathit{A\_on\_table}) \wedge
    \F\ (\mathit{C\_on\_table} \wedge \mathit {B\_on\_table} \wedge \mathit{A\_on\_table})).
  \end{equation*}
  Notice however that, due to the nondeterminism in the outcome of
  actions, this plan may fail to satisfy the goal.  It is possible,
  for instance, that action $\mathit{put\_C\_on\_B}$ fails and the
  tower is destroyed. In this case, the plan proceeds performing
  $\mathit{wait}$ actions, and hence the tower is never finished.
  Formally, the plan is a solution to the goal which requires that
  there is \emph{some} path in the execution structure that satisfies
  the \LTL formula $\varphi_1$.

  Clearly, there are better ways to achieve the goal of building a
  tower and then destroying it: if we fail building the tower, rather
  than giving up, we can restart building it and keep trying until we
  succeed.
  This strategy allows for achieving the goal in ``most of the
  paths'': only if we keep destroying the tower when we try to build
  it we will not achieve the goal. As we will see,
  the logic of path quantifiers that
  we are going to define will allow us to formalize what we mean by
  ``most of the paths''.
  
  Consider now the following LTL formula:
  \begin{equation*}
    \varphi_2 = \F\,\G\ ((\mathit{C\_on\_B} \wedge \mathit {B\_on\_A} \wedge \mathit{A\_on\_table}).
  \end{equation*}
  The formula requires building a tower and maintaining it.
  In this case we have two possible ways to fail to achieve the goal.
  We can fail to build the tower; or, once built, we can fail to
  maintain it (remember that a $\mathit{wait}$ action may nondeterministically
  lead to a destruction of the tower).
  Similarly to the case of formula $\phi_1$, a planning goal that
  requires satisfying the formula $\phi_2$ in \emph{all} paths of the
  execution tree is unsatisfiable. On the other hand, a goal that
  requires satisfying it on \emph{some} paths is very weak; our logic
  allows us to be more demanding on the paths that satisfy the formula.

  Finally, consider the following LTL formula:
  \begin{equation*}
    \varphi_3 = \G\,\F\ ((\mathit{C\_on\_B} \wedge \mathit {B\_on\_A} \wedge \mathit{A\_on\_table}).
  \end{equation*}
  It requires that the tower exists infinitely many time, i.e., if the
  tower gets destroyed, then we have to rebuild it.
  Intuitively, this goal admits plans that can achieve it more often,
  i.e., on ``more paths'', than $\varphi_2$. Once again, a path logic
  is needed to give a formal meaning to ``more paths''.
\end{example}

In order to be able to represent the planning goals discussed
in the previous example, we consider logic formulas of the form $\alpha .
\varphi$, where $\varphi$ is an LTL formula and $\alpha$ is a path
quantifier and defines a set of infinite paths on which the formula
$\varphi$ should be checked.
Two extreme cases are the path quantifier $\AA$, which is used to denote
that $\varphi$ must hold on \emph{all} the paths, and the path
quantifier $\EE$, which is used to denote that $\varphi$ must hold on
\emph{some} paths.
In general, a path quantifier is a (finite or infinite) word on
alphabet $\{ {\AA}, {\EE} \}$ and defines an alternation in the
selection of the two modalities corresponding to $\EE$ and $\AA$.
For instance, by writing $\AA \EE . \varphi$ we require that all finite
paths have some infinite extension that satisfies $\varphi$, while by
writing $\EE \AA . \varphi$ we require that all the extensions of some
finite path satisfy $\varphi$.

The path quantifier can be seen as the definition of a two-player
game for the selection of the paths that should satisfy the LTL
formula.
Player A (corresponding to $\AA$) tries to build a path that does not
satisfy the LTL formula, while player E (corresponding to $\EE$) tries
to build the path so that the LTL formula holds.
Different path quantifiers define different alternations in the turns
of players A and E.
The game starts from the path consisting only of the initial state,
and, during their turns, players A and E extend the path by a finite
number of nodes. In the case the path quantifier is a finite word, the
player that moves last in the game extends the finite path built so
far to an infinite path.
The formula is satisfied if player E has a winning
strategy, namely if, for all the possible moves of the player A, it is
always able to build a path that satisfies the LTL formula.

\begin{example}
  Let us consider the three LTL formulas defined in
  Example~\ref{exa:ltl-goals}, and let us see how the path quantifiers
  we just introduced can be applied.

  In the case of formula $\varphi_1$, the plan presented in
  Example~\ref{exa:bw-plan} satisfies requirement $\EE. \varphi_1$:
  there is a path on which the tower is built and then destroyed.  It
  also satisfies the ``stronger'' requirement $\EE\AA. \varphi_1$
  that stresses the fact that, in this case, once the tower has been
  built and destroyed, we can safely give the control to player A.
  Formula $\varphi_1$ can be satisfied in a stronger way, however.
  Indeed, the plan that keeps trying to build the tower satisfies the
  requirement $\AA\EE. \varphi_1$, as well as the requirement
  $\AA\EE\AA. \varphi_1$: player A cannot
  reach a state where the satisfaction of the goal is prevented.

  Let us now consider the formula $\varphi_2$. In this case, we can find
  plans satisfying $\AA\EE. \varphi_2$, but no plan can satisfy
  requirement $\AA\EE\AA. \varphi_2$. Indeed, player A has a simple
  strategy to win, if he gets the control after we built the tower:
  bump the table.
  Similar considerations hold also for formula $\varphi_3$. Also in
  this case, we can find plans for requirement $\AA\EE. \varphi_3$,
  but not for requirement $\AA\EE\AA. \varphi_3$. In this case,
  however, plans exist also for requirement $\AA\EE\AA\EE\AA\EE\cdots.
  \varphi_3$: if player E gets the control infinitely often, then it
  can rebuild the tower if needed.
\end{example}

In the rest of the section we give a formal definition and study the
basic properties of this logic of path quantifiers.


\subsection{Finite Games}
\label{ssec:ae-ltl-finite}

We start considering only games with a finite number of moves, that is
path quantifiers corresponding to finite words on $\{ {\AA}, {\EE} \}$.

\begin{definition}[$\AE$-LTL]
  \label{d:ae-ltl}
  An $\AE$-LTL formula is a pair $g = \alpha . \varphi$, where
  $\varphi$ is an LTL formula and $\alpha \in \{ \AA, \EE \}^+$ is a
  path quantifier.
\end{definition}

The following definition describes the games corresponding to the
finite path quantifiers.
\begin{definition}[semantics of $\AE$-LTL]
  \label{d:ae-ltl-sem}
  Let $p$ be a finite path of a $\Sigma$-labelled tree $\Tau$. Then:
  \begin{itemize}
   \item $p \models \AA \alpha . \varphi$ if for all finite extensions
    $p'$ of $p$ it holds that $p' \models \alpha . \varphi$.
   \item $p \models \EE \alpha . \varphi$ if for some finite extension
    $p'$ of $p$ it holds that $p' \models \alpha . \varphi$.
   \item $p \models \AA . \varphi$ if for all infinite extensions $p'$ of
    $p$ it holds that $\Tau(p') \models_\LTL \varphi$.
   \item $p \models \EE . \varphi$ if for some infinite extension $p'$ of
    $p$ it holds that $\Tau(p') \models_\LTL \varphi$.
  \end{itemize}

  \noindent
  We say that the $\Sigma$-labelled tree $\Tau$ satisfies the
  $\AE$-LTL formula $g$, and we write $\Tau \models g$, if 
  $p_0 \models g$, where $p_0 = \epsilon$ is the root of
  $\Tau$.
\end{definition}

$\AE$-LTL allows for path quantifiers consisting of an arbitrary
combination of $\AA$s and $\EE$s.
Each combination corresponds to a different set of rules for the game
between A and E.
In Theorem~\ref{t:123} we show that all this freedom in the
definition of the path quantifier is not needed.
Only six path quantifiers are sufficient to capture all the possible
games.
This result is based on the concept of \emph{equivalent path
quantifiers}.  

Consider formulas $\AA . \F p$ and $\AA \EE . \F p$. It is easy to see
that the two formulas are equi-satisfiable, i.e., if a tree $\Tau$
satisfies $\AA . \F p$ then it also satisfies $\AA \EE. \F p$, and
vice-versa.
In this case, path quantifiers $\AA$ and $\AA\EE$ have the same
``power'', but this depends on the fact that we use the path
quantifiers in combination with the LTL
formula $\F p$.  If we combine the two path quantifiers with different
LTL formulas, such as $\G p$, it is possible to find trees that
satisfy the latter path quantifier but not the former.  For this
reason, we cannot consider the two path quantifiers equivalent.
Indeed, in order for two path quantifiers to be equivalent, they have
to be equi-satisfiable for \emph{all} the LTL formulas. This intuition
is formalized in the following definition.

\begin{definition}[equivalent path quantifiers]
  \label{d:strictness}
  Let $\alpha$ and $\alpha'$ be two path quantifiers.
  We say that $\alpha$ \emph{implies} $\alpha'$, written $\alpha
  \aimply \alpha'$, if for all $\Sigma$-labelled trees $\Tau$ and
  for all LTL formulas $\varphi$, $\Tau \models \alpha .  \varphi$
  implies $\Tau \models \alpha' . \varphi$.
  We say that $\alpha$ is \emph{equivalent} to $\alpha'$, written $\alpha
  \aequiv \alpha'$, if $\alpha \aimply \alpha'$ and $\alpha' \aimply
  \alpha$.
\end{definition}

The following lemma describes some basic properties of path
quantifiers and of the equivalences among them. We will exploit these
results in the proof of Theorem~\ref{t:123}.

\begin{lemma}
  \label{l:ae-ltl-prop}
  Let $\alpha, \alpha' \in \{\AA, \EE\}^*$. The following
  implications and equivalences hold.
  \begin{enumerate}
   \item $\alpha \AA \AA \alpha' \aequiv \alpha \AA \alpha'$ and
    $\alpha \EE \EE \alpha' \aequiv \alpha \EE \alpha'$.
   \item $\alpha \AA \alpha' \aimply \alpha \alpha'$ and $\alpha
    \alpha' \aimply \alpha \EE \alpha'$, if $\alpha\alpha'$ is not empty.
   \item $\alpha \AA \alpha' \aimply \alpha \AA \EE \AA \alpha'$ and
    $\alpha \EE \AA \EE \alpha' \aimply \alpha \EE \alpha'$.
   \item $\alpha \AA \EE \AA \EE \alpha' \aequiv \alpha \AA \EE
    \alpha'$ and $\alpha \EE \AA \EE \AA \alpha' \aequiv \alpha \EE
    \AA \alpha'$.
  \end{enumerate}
\end{lemma}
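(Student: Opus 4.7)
The approach is to exploit two elementary structural properties of the extension relation on tree paths: (a) it is reflexive and transitive on finite paths, and (b) every infinite extension of a finite extension of $p$ is itself an infinite extension of $p$. Together with the recursive form of Definition~\ref{d:ae-ltl-sem}, these will drive every claim. As a preliminary I would prove a \emph{monotonicity principle}, formulated at the path level rather than just the root: if for every tree $\Tau$, every path $p$, and every LTL formula $\varphi$ the relation $p \models \beta.\varphi$ entails $p \models \beta'.\varphi$, then the analogous implication holds with $\beta,\beta'$ prepended by any compatible prefix $\alpha$. This is a one-line induction on $|\alpha|$, peeling off the outermost quantifier. Since $\Tau \models g$ is just $\epsilon \models g$, such a path-level implication immediately yields the tree-level implication required by $\aimply$. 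Each of the four items then reduces to a handful of short implications at the innermost level, with the shared prefix $\alpha$ stripped.

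Part~1 uses (a) alone. For $\AA\AA\alpha' \aimply \AA\alpha'$ I instantiate the outer universal at $p_1 = p$ (reflexivity) to conclude $p \models \AA\alpha'.\varphi$. For the converse, given $p \models \AA\alpha'.\varphi$ and any finite extension $p_1$ of $p$, transitivity puts every extension of $p_1$ among the extensions of $p$, so $p_1 \models \AA\alpha'.\varphi$ and hence $p \models \AA\AA\alpha'.\varphi$. The $\EE\EE\alpha' \aequiv \EE\alpha'$ case is symmetric. Part~2 is even shorter: $\AA\alpha' \aimply \alpha'$ by instantiating the universal at $p$ itself, and $\alpha' \aimply \EE\alpha'$ by taking $p$ as its own existential witness.

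For part~3's first half, $\AA\alpha' \aimply \AA\EE\AA\alpha'$: given $p \models \AA\alpha'.\varphi$ and any extension $p_1$ of $p$, I choose $p_2 = p_1$ as the witness for the inner $\EE$; then every extension $p_3$ of $p_2 = p_1$ is also an extension of $p$ by transitivity, hence satisfies $\alpha'.\varphi$, so $p_2 \models \AA\alpha'.\varphi$. The dual $\EE\AA\EE\alpha' \aimply \EE\alpha'$ is symmetric: pick the outer existential witness $p_1 \models \AA\EE\alpha'.\varphi$, instantiate the inner $\AA$ at $p_2 = p_1$ to obtain $p_1 \models \EE\alpha'.\varphi$, and transitivity lifts the innermost witness to an extension of $p$. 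Part~4 then drops out: the forward direction $\AA\EE\alpha' \aimply \AA\EE\AA\EE\alpha'$ is part~3 with $\alpha'$ replaced by $\EE\alpha'$, while the reverse $\AA\EE\AA\EE\alpha' \aimply \AA\EE\alpha'$ follows by applying the monotonicity principle, with prefix $\AA$, to the part~3 implication $\EE\AA\EE\alpha' \aimply \EE\alpha'$; the dual $\EE\AA\EE\AA\alpha' \aequiv \EE\AA\alpha'$ is analogous. I do not anticipate a significant obstacle. The one detail demanding care is the base case of the recursion, in which the innermost quantifier refers to infinite extensions rather than finite ones; observation~(b) is exactly what makes the transitivity arguments above go through uniformly in that case.
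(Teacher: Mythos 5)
Your proposal is correct and follows essentially the same route as the paper: your monotonicity principle is exactly the paper's ``prefix induction'', and each innermost implication is discharged by the same reflexivity/transitivity facts about path extensions (the paper derives item~3 from items 1 and 2 rather than directly, but that is cosmetic). The only point you gloss is that in item~2 with $\alpha'$ empty the expression $\alpha'.\varphi$ is not a well-formed formula, so one must instead case-split on the last symbol of $\alpha$ and prove $\EE\AA \aimply \EE$ at the level of infinite extensions --- which is precisely where your observation~(b) is needed, as your closing remark anticipates.
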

\begin{proof}
  In the proof of this lemma, in order to prove that $\alpha\alpha'
  \aimply \alpha\alpha''$ we prove that, given an arbitrary
  tree $\Tau$ and an arbitrary LTL formula $\varphi$, $p
  \models \alpha' . \varphi$ implies $p \models \alpha'' . \varphi$
  for every finite path $p$ of $\Tau$.
  Indeed, if $p \models \alpha' . \varphi$ implies $p
  \models \alpha'' . \varphi$ for all finite paths $p$, then it is
  easy to prove, by induction on $\alpha$, that $p \models \alpha
  \alpha' .  \varphi$ implies $p \models \alpha \alpha'' . \varphi$
  for all finite paths $p$.
  In the following, we will refer to this proof technique as prefix
  induction.
  
  \begin{enumerate}
   \item We show that, for every finite path $p$, $p \models \AA\AA
    \alpha' . \varphi$ if and only if $p \models \AA \alpha' .
    \varphi$: then the equivalence of $\alpha \AA\AA \alpha'$ and
    $\alpha \AA \alpha'$ follows by prefix induction.
    
    Let us assume that $p \models \AA\AA \alpha' . \varphi$.  We prove
    that $p \models \AA \alpha' . \varphi$, that is, that $p' \models
    \alpha' . \varphi$ for every finite\footnote{We assume that
    $\alpha'$ is not the empty word. The proof in the case $\alpha'$
    is the empty word is similar.} extension $p'$ of $p$.
    Since $p \models \AA \AA \alpha' . \varphi$, by
    Definition~\ref{d:ae-ltl-sem} we know that, for every finite
    extension $p'$ of $p$, $p' \models \AA \alpha' .  \varphi$.
    Hence, again by Definition~\ref{d:ae-ltl-sem}, we know that for
    every finite extension $p''$ of $p'$, $p'' \models \alpha' .
    \varphi$.
    Since $p'$ is a finite extension of $p'$, we can conclude that $p'
    \models \alpha' . \varphi$. Therefore, $p' \models \alpha' .
    \varphi$ holds for all finite extensions $p'$ of $p$.

    Let us now assume that $p \models \AA \alpha' . \varphi$.
    We prove that $p \models \AA\AA \alpha' . \varphi$, that is,
    for all finite extensions $p'$ of $p$, and for all finite
    extensions $p''$ of $p'$, $p'' \models \alpha' . \varphi$.
    We remark that the finite path $p''$ is also a finite extension of
    $p$, and therefore $p'' \models \alpha' . \varphi$ holds since $p
    \models \AA \alpha' . \varphi$.
    
    This concludes the proof of the equivalence of $\alpha \AA \AA
    \alpha'$ and $\alpha \AA \alpha'$.
    The proof of the equivalence of $\alpha \EE \EE \alpha'$ and
    $\alpha \EE \alpha'$ is similar.
    
   \item Let us assume first that $\alpha'$ is not an empty word.
    We distinguish two cases, depending on the first symbol of
    $\alpha'$.
    If $\alpha' = \AA \alpha''$, then we should prove that
    $\alpha \AA \AA \alpha'' \aimply \alpha \AA \alpha''$,
    which we already did in item 1 of this lemma.
    If $\alpha' = \EE \alpha''$, then we show that, for every finite
    path $p$, if $p \models \AA \EE \alpha'' . \varphi$ then $p
    \models \EE \alpha'' .  \varphi$: then $\alpha \AA \alpha' \aimply
    \alpha \alpha'$ follows by prefix induction.
    Let us assume that $p \models \AA \EE \alpha'' . \varphi$.
    Then, for all finite extensions $p'$ of $p$ there exists some
    finite\footnote{We assume that $\alpha''$ is
    not the empty word. The proof in the case where $\alpha''$ is empty is
    similar.} extension $p''$ of $p'$ such that $p'' \models \alpha' .
    \varphi$.
    Let us take $p' = p$. Then we know that there is some finite
    extension $p''$ of $p$ such that $p'' \models \alpha' . \varphi$,
    that is, according to Definition~\ref{d:ae-ltl-sem}, $p \models
    \EE \alpha' . \varphi$.

    Let us now assume that $\alpha'$ is the empty word. By hypothesis,
    $\alpha\alpha' \neq \epsilon$, so $\alpha$ is not empty.
    We distinguish two cases, depending on the
    last symbol of $\alpha$.
    If $\alpha = \alpha'' \AA$, then we should prove that $\alpha''
    \AA \AA \aimply \alpha'' \AA$, which we already did in item 1
    of this lemma.
    If $\alpha = \alpha'' \EE$, then we prove that for every finite
    path $p$, if $p \models \EE \AA . \varphi$ then $p \models \EE .
    \varphi$: then $\alpha'' \EE \AA \aimply \alpha'' \EE$ follows by
    prefix induction.
    Let us assume that $p \models \EE \AA . \varphi$. By
    Definition~\ref{d:ae-ltl-sem}, there exists some finite extension
    $p'$ of $p$ such that, for every infinite extension $p''$ of $p'$
    we have $\Tau(p'') \models_\LTL \varphi$.
    Let $p''$ be any infinite extension of $p'$. We know that $p''$ is
    also an infinite extension of $p$, and that $\Tau(p'')
    \models_\LTL \varphi$. Then, by Definition~\ref{d:ae-ltl-sem} we
    deduce that $p \models \EE .  \varphi$.
    
    This concludes the proof that $\alpha \AA \alpha' \aimply \alpha
    \alpha'$.
    The proof that $\alpha \alpha' \aimply \alpha \EE \alpha'$ is
    similar.
    
   \item By item 1 of this lemma we know that $\alpha \AA \alpha'
    \aimply \alpha \AA \AA \alpha'$ and by item 2 we know that $\alpha
    \AA \AA \alpha' \aimply \alpha \AA \EE \AA \alpha'$.
    This concludes the proof that $\alpha \AA \alpha' \aimply \alpha
    \AA \EE \AA \alpha'$.
    The proof that $\alpha \EE \AA \EE \alpha' \aimply \alpha \EE
    \alpha'$ is similar.
    
   \item By item 3 of this lemma we know that $(\alpha \AA) \EE \AA
    \EE \alpha' \aimply (\alpha \AA) \EE \alpha'$.
    Moreover, again by item 3, we know that $\alpha \AA (\EE \alpha')
    \aimply \alpha \AA \EE \AA (\EE \alpha')$.
    Therefore, we deduce $\alpha \AA \EE \alpha' \aequiv \alpha \AA
    \EE \AA \EE \alpha'$.
    The proof that $\alpha \EE \AA \alpha' \aequiv \alpha \EE \AA \EE
    \AA \alpha'$ is similar.
    \qed
  \end{enumerate}
\end{proof}

We can now prove the first main result of the paper: each finite path
quantifier is equivalent to a \emph{canonical path quantifier} of
length at most three.
\begin{theorem}
  \label{t:123}
  For each finite path quantifier $\alpha$ there is a canonical finite
  path quantifier $$\alpha' \in \{ \AA, \EE, \AA\EE, \EE\AA, \AA\EE\AA,
  \EE\AA\EE \}$$ such that $\alpha \aequiv \alpha'$.
  Moreover, the following implications hold between the
  canonical finite path quantifiers:
  \begin{equation}
    \label{eq:123}
    \begin{xy}
      \xymatrix{
        \AA \ar@{~>}[r] &
        \AA\EE\AA \ar@{~>}[r]\ar@{~>}[d] &
        \AA\EE \ar@{~>}[d] \\
        &
        \EE\AA \ar@{~>}[r] &
        \EE\AA\EE \ar@{~>}[r] &
        \EE
      }
    \end{xy}
  \end{equation}
\end{theorem}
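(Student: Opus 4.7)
The plan is to lean entirely on Lemma~\ref{l:ae-ltl-prop}, whose four items collectively form a rewriting system powerful enough to both reduce any finite quantifier to a canonical form and to certify each arrow of diagram~(\ref{eq:123}). The proof will split naturally into a canonicity part (handled by items~1 and~4) and a diagram-chasing part (handled by items~2 and~3).

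First I would handle canonicity by a two-stage normalization. Given $\alpha \in \{\AA,\EE\}^+$, item~1 of the lemma lets me exhaustively rewrite every maximal run $\AA\cdots\AA$ to a single $\AA$ and every maximal run $\EE\cdots\EE$ to a single $\EE$, yielding a strictly alternating word $\hat{\alpha}$ equivalent to $\alpha$. If $|\hat{\alpha}| \leq 3$ then $\hat{\alpha}$ is already one of the six listed canonical forms. Otherwise $|\hat{\alpha}| \geq 4$ and $\hat{\alpha}$ starts with either $\AA\EE\AA\EE$ or $\EE\AA\EE\AA$; item~4 (applied with empty left context and the tail of $\hat{\alpha}$ as right context) then collapses this length-four prefix to length two, preserving equivalence. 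Strict alternation is preserved because the symbol of $\hat{\alpha}$ that follows the collapsed prefix is forced by alternation, so no two equal symbols become adjacent after the rewrite. An induction on $|\hat{\alpha}|$ terminates at a canonical word.

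Second, I would read off each of the six arrows of the diagram as a one-line instance of item~2 or item~3. Explicitly: $\AA \aimply \AA\EE\AA$ and $\EE\AA\EE \aimply \EE$ are the two halves of item~3 with $\alpha = \alpha' = \epsilon$; the two arrows $\AA\EE\AA \aimply \AA\EE$ and $\AA\EE\AA \aimply \EE\AA$ are the first half of item~2 ($\alpha \AA \alpha' \aimply \alpha\alpha'$) with $(\alpha,\alpha') = (\AA\EE,\epsilon)$ and $(\epsilon,\EE\AA)$ respectively; and the two arrows $\AA\EE \aimply \EE\AA\EE$ and $\EE\AA \aimply \EE\AA\EE$ are the second half of item~2 ($\alpha\alpha' \aimply \alpha \EE \alpha'$) with $(\alpha,\alpha') = (\epsilon,\AA\EE)$ and $(\EE\AA,\epsilon)$ respectively.

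Since the algebraic heavy lifting has already been done in Lemma~\ref{l:ae-ltl-prop}, there is no substantive obstacle. The one step that needs care is the termination argument for the normalization, in particular the verification that collapsing the prefix $\AA\EE\AA\EE$ (or $\EE\AA\EE\AA$) does not break strict alternation with the remainder of the word; once this is observed, the induction on $|\hat{\alpha}|$ is routine, and the six diagram implications reduce to bookkeeping.
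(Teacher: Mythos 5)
Your proposal is correct and follows essentially the same route as the paper: normalize via Lemma~\ref{l:ae-ltl-prop}(1) to a strictly alternating word, collapse to length at most three via iterated applications of Lemma~\ref{l:ae-ltl-prop}(4), and read off the six arrows from items~2 and~3 with the instantiations you list (all of which check out). Your extra care about the collapse preserving strict alternation is a detail the paper glosses over, but it does not change the argument.
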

\begin{proof}
  We first prove that each path quantifier $\alpha$ is equivalent to
  some canonical path quantifier $\alpha'$.
  By an iterative application of Lemma~\ref{l:ae-ltl-prop}(1), we
  obtain from $\alpha$ a path quantifier $\alpha''$ such that $\alpha
  \aequiv \alpha''$ and $\alpha''$ does not contain two adjacent $\AA$
  or $\EE$.
  Then, by an iterative application of Lemma~\ref{l:ae-ltl-prop}(4),
  we can transform $\alpha''$ into an equivalent path quantifier
  $\alpha'$ of length at most 3.
  The canonical path quantifiers in (\ref{eq:123}) are precisely those
  quantifiers of length at most 3 that do not contain two adjacent
  $\AA$ or $\EE$.

  For the implications in (\ref{eq:123}):
  \begin{itemize}
   \item $\AA \aimply \AA\EE\AA$ and $\EE\AA\EE \aimply \EE$
    come from Lemma~\ref{l:ae-ltl-prop}(3);
   \item $\AA\EE\AA \aimply \EE\AA$ and $\AA\EE \aimply \EE\AA\EE$
    come from Lemma~\ref{l:ae-ltl-prop}(2);
   \item $\AA\EE\AA \aimply \AA\EE$ and $\EE\AA \aimply \EE\AA\EE$
    come from Lemma~\ref{l:ae-ltl-prop}(2).
    \qed
  \end{itemize}
\end{proof}

We remark that Lemma~\ref{l:ae-ltl-prop} and Theorem~\ref{t:123} do
not depend on the usage of LTL for formula $\varphi$.
They depend on the general observation that $\alpha \aimply \alpha'$
whenever player E can select for game $\alpha'$ a set of paths which
is a subset of those selected for game $\alpha$.


\subsection{Infinite Games}
\label{ssec:ae-ltl-infinite}

We now consider infinite games, namely path quantifiers consisting of
infinite words on alphabet $\{ {\AA}, {\EE} \}$.
We will see that infinite games can express all the finite path
quantifiers that we have studied in the previous subsection, but that
there are some infinite games, corresponding to an infinite
alternation of the two players A and E, which cannot be expressed with
finite path quantifiers.

In the case of infinite games, we assume that player E moves
according to a strategy $\xi$ that suggests how to extend each finite
path.
We say that $\Tau \models \alpha . \varphi$, where $\alpha$ is an
infinite game, if there is some winning strategy $\xi$ for player E.
A strategy $\xi$ is winning if, whenever $p$ is an infinite path of
$\Tau$ obtained according to $\alpha$ --- 
i.e., by allowing player A to play in an arbitrary way and by
requiring that player E follows strategy $\xi$ --- then $p$ satisfies the
LTL formula $\varphi$.

\begin{definition}[strategy]
  \label{d:strategy}
  A strategy for a $\Sigma$-labelled tree $\Tau$ is a mapping $\xi:
  P^*(\Tau) \to P^*(\Tau)$ that maps every finite path $p$ to one of
  its finite extensions $\xi(p)$.
\end{definition}

\begin{definition}[semantics of $\AE$-LTL]
  \label{d:ae-ltl-inf-sem}
  Let $\alpha = \Pi_0 \Pi_1 \cdots$ with $\Pi_i \in
  \{ {\AA}, {\EE} \}$ be an infinite path quantifier.
  An infinite path $p$ is a \emph{possible outcome} of game $\alpha$
  with strategy $\xi$ if there is a \emph{generating sequence} for it,
  namely, an infinite sequence $p_0, p_1, \ldots$ of finite paths such
  that:
  \begin{itemize}
   \item $p_i$ are finite prefixes of $p$;
   \item $p_0 = \epsilon$ is the root of tree $\Tau$;
   \item if $\Pi_i = \EE$ then $p_{i+1} = \xi(p_i)$;
   \item if $\Pi_i = \AA$ then $p_{i+1}$ is an (arbitrary) extension of
    $p_i$.
  \end{itemize}
  We denote with $\Out_\Tau(\alpha, \xi)$ the set of infinite paths of
  $\Tau$ that are possible outcomes of game $\alpha$ with strategy
  $\xi$.
  The tree $\Tau$ satisfies the $\AE$-LTL formula $g = \alpha .
  \varphi$, written $\Tau \models g$, if there is some strategy $\xi$
  such that $\Tau(p) \models_\LTL \varphi$ for all paths $p \in
  \Out_\Tau(\alpha, \xi)$.
\end{definition}

We remark that it is possible that the paths in a generating sequence
stop growing, i.e., that there is some $p_i$ such that $p_i = p_j$ for
all $j \geq i$.
In this case, according to the previous definition, all infinite paths
$p$ that extend $p_i$ are possible outcomes.

In the next lemmas we extend the
analysis of equivalence among path quantifiers
to infinite games.\footnote{The
definitions of the implication and equivalence relations
(Definition~\ref{d:strictness}) also apply to the case of infinite
path quantifiers.}
The first lemma shows that finite path quantifiers are just particular
cases of infinite path quantifiers, namely, they correspond to those
infinite path quantifiers that end with an infinite sequence of $\AA$
or of $\EE$.

\begin{lemma}
  \label{l:ae-ltl-inf-fin}
  Let $\alpha$ be a finite path quantifier. Then $\alpha (\AA)^\omega
  \aequiv \alpha \AA$ and $\alpha (\EE)^\omega \aequiv \alpha \EE$.
\end{lemma}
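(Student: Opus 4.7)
The plan is to argue both equivalences by directly comparing Definitions~\ref{d:ae-ltl-sem} and~\ref{d:ae-ltl-inf-sem}, with the governing intuition that once the tail of the path quantifier consists of a single symbol repeated forever, that symbol's player has complete control over the remainder of the path, which is exactly what a single final $\AA$ or $\EE$ delivers in the finite semantics.

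For $\alpha (\AA)^\omega \aequiv \alpha \AA$ I would proceed as follows. For the direction $\alpha \AA \aimply \alpha (\AA)^\omega$, Skolemize the nested quantifiers of Definition~\ref{d:ae-ltl-sem} to turn the hypothesis into a strategy $\xi$ that prescribes E's moves inside $\alpha$ as a function of the current finite path (extend $\xi$ arbitrarily to all other paths). Along any generating sequence of $\alpha (\AA)^\omega$ under $\xi$, the prefix $p_{|\alpha|}$ obtained after the first $|\alpha|$ steps satisfies $p_{|\alpha|} \models \AA.\varphi$ in the finite sense, i.e., every infinite extension satisfies $\varphi$; but every outcome of the infinite game from that point on is precisely such an infinite extension, so $\xi$ is winning. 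For the converse $\alpha (\AA)^\omega \aimply \alpha \AA$, take a winning $\xi$ and use its choices on prefix-play paths as Skolem witnesses for E in $\alpha \AA$. The crux is that, for the $p_{|\alpha|}$ obtained after any A-play through $\alpha$, every infinite extension $p$ of $p_{|\alpha|}$ is realizable as an outcome under $\xi$: construct the generating sequence from the prefix play followed by prefixes of $p$ of strictly increasing length. Hence all such $p$ satisfy $\varphi$, giving $p_{|\alpha|} \models \AA.\varphi$, which folds back up through the finite semantics of $\alpha \AA$.

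For $\alpha (\EE)^\omega \aequiv \alpha \EE$ the argument is symmetric. For $\alpha \EE \aimply \alpha (\EE)^\omega$, Skolem witnesses give E's prefix choices together with an infinite extension $p^*$ of the resulting $p_{|\alpha|}$ on which $\varphi$ holds; define $\xi$ to use those prefix choices, and post-prefix to map each prefix of $p^*$ of length $\geq |\alpha|$ to the next-longer prefix. The unique outcome under $\xi$ is then $p^*$, which satisfies $\varphi$. For $\alpha (\EE)^\omega \aimply \alpha \EE$, reading $\xi$'s prefix choices as Skolem witnesses for E in $\alpha$ again, examine the post-prefix iteration $p_{|\alpha|} \subseteq p_{|\alpha|+1} \subseteq \cdots$ produced by $\xi$. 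If it grows unboundedly, its limit $p^*$ is the unique outcome, hence satisfies $\varphi$ and witnesses $p_{|\alpha|} \models \EE.\varphi$; if it stabilizes at some $p_k$, then every infinite extension of $p_k$ is an outcome and therefore satisfies $\varphi$, so any such extension witnesses $p_{|\alpha|} \models \EE.\varphi$.

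The one technical wrinkle is the stabilization case in this last step, which needs the small case split just described; everything else amounts to unfolding Definitions~\ref{d:ae-ltl-sem} and~\ref{d:ae-ltl-inf-sem} and matching Skolem functions with strategies. Note that the argument never uses any specific property of $\varphi$ being an LTL formula, only that $\models_\LTL$ is a property of the labeled infinite path.
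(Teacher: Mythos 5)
Your overall architecture is sound, and two of the four implications are handled well: your proof of $\alpha(\AA)^\omega \aimply \alpha\AA$ matches the paper's (reuse $\xi$ as the Skolem witness and observe the selected paths coincide with the outcomes), and your treatment of $\alpha(\EE)^\omega \aimply \alpha\EE$, including the case split on whether the post-prefix iteration of $\xi$ stabilizes, addresses a genuine subtlety that the paper hides behind ``the proof of the other equivalence is similar'' --- you correctly invoke the convention that a stalled generating sequence makes every infinite extension an outcome. Where you genuinely diverge from the paper is the direction $\alpha\AA \aimply \alpha(\AA)^\omega$ (and its $\EE$ twin): the paper does \emph{not} Skolemize the general finite semantics. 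It first collapses $\alpha\AA$ to one of the canonical forms $\AA$, $\EE\AA$, or $\AA\EE\AA$ via Lemma~\ref{l:ae-ltl-prop}, so that player E has at most one move to specify, and the single witness (the path $\bar p$, or the map $p \mapsto p'$) immediately yields a well-defined strategy.

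That detour is not cosmetic; it sidesteps the one step you gloss over. Definition~\ref{d:strategy} makes $\xi$ a \emph{single} function of the current finite path, applied at \emph{every} $\EE$-turn, whereas Skolemizing Definition~\ref{d:ae-ltl-sem} for a general $\alpha$ yields one witness function per occurrence of $\EE$ in $\alpha$. These need not cohere into one $\xi$: since moves may be trivial extensions, the same finite path can be the current path at two different $\EE$-positions, whose witnesses may prescribe different extensions. For instance, with $\alpha\AA = \EE\AA\EE\AA$, if E's first witness is $\epsilon$ itself and A then also passes, the witness for the third position at $\epsilon$ collides with the first. This is repairable --- either by canonicalizing first, as the paper does, or by imposing a consistency condition on the witnesses of the kind the paper uses for the map $m$ in the proof of Theorem~\ref{t:abstract-tree} --- but as written, ``Skolemize into a strategy $\xi$, extended arbitrarily elsewhere'' asserts the existence of an object your hypotheses do not directly provide. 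The same issue appears, more mildly, in $\alpha\EE \aimply \alpha(\EE)^\omega$, where the prefixes of the chosen path $p^*$ for one A-play through $\alpha$ may themselves be reachable current paths for another A-play. Your closing remark that nothing LTL-specific is used is correct and agrees with the paper's comment following Theorem~\ref{t:123}.
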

\begin{proof}
  We prove that $\alpha (\AA)^\omega \aequiv \alpha \AA$. The
  proof of the other equivalence is similar.

  First, we prove that $\alpha (\AA)^\omega \aimply \alpha \AA$.
  Let $\Tau$ be a tree and $\varphi$ be an LTL formula such that $\Tau
  \models \alpha (\AA)^\omega . \varphi$.
  Moreover, let $\xi$ be any strategy such that all $p \in
  \Out_\Tau(\alpha(\AA)^\omega, \xi)$ satisfy $\varphi$.
  In order to prove that $\Tau \models \alpha \AA . \varphi$ it is
  sufficient to use the strategy $\xi$ in the moves of player E, namely,
  whenever we need to prove that $p \models \EE \alpha' . \varphi$
  according to Definition~\ref{d:ae-ltl-sem}, we take $p' = \xi(p)$
  and we move to prove that $p' \models \alpha' . \varphi$.
  In this way, the infinite paths selected by
  Definition~\ref{d:ae-ltl-sem} for $\alpha \AA$ coincide with the
  possible outcomes of game $\alpha (\AA)^\omega$, and hence satisfy
  the LTL formula $\varphi$.
  
  This concludes the proof that $\alpha (\AA)^\omega \aimply \alpha
  \AA$.
  We now prove that $\alpha \AA \aimply \alpha (\AA)^\omega$.
  We distinguish three cases.
  \begin{itemize}
   \item \textbf{Case $\alpha = (\AA)^n$, with $n \geq 0$}.
  
    In this case, $\alpha \AA \aequiv \AA$
    (Lemma~\ref{l:ae-ltl-prop}(1)) and $\alpha (\AA)^\omega =
    (\AA)^\omega$.
    Let $\Tau$ be a tree and $\varphi$ be an LTL formula.
    Then $\Tau \models \AA . \varphi$ if and only if all the paths
    of $\Tau$ satisfy formula $\varphi$.
    It is easy to check that also $\Tau \models (\AA)^\omega .
    \varphi$ if and only if all the paths of $\Tau$ satisfy formula
    $\varphi$.
    This is sufficient to conclude that $(\AA)^n \AA \aequiv (\AA)^n
    (\AA)^\omega$.

   \item \textbf{Case $\alpha = \EE \alpha'$}.
  
    In this case, $\alpha \AA \aequiv \EE\AA$. Indeed, $\alpha \AA$ is
    an arbitrary path quantifier that starts with $\EE$ and ends with
    $\AA$.
    By Lemma~\ref{l:ae-ltl-prop}(1), we can collapse adjacent
    occurrences of $\AA$ and of $\EE$ , thus obtaining $\alpha \AA
    \aequiv (\EE\AA)^n$ for some $n > 0$.
    Moreover, by Lemma~\ref{l:ae-ltl-prop}(4) we have $(\EE\AA)^n
    \aequiv \EE\AA$.
    
    Let $\Tau$ be a tree and $\varphi$ be an LTL formula.
    Then $\Tau \models \EE \AA . \varphi$ if and only if there is
    some finite path $\bar p$ of $\Tau$ such that all the infinite
    extensions of $\bar p$ satisfy $\varphi$.
    Now, let $\xi$ be any strategy such that $\xi(\epsilon) = \bar
    p$.
    Then every infinite path $p \in \Out_\Tau(\EE\alpha'(\AA)^\omega,
    \xi)$ satisfies $\varphi$.
    Indeed, since player E has the first turn, all the possible
    outcomes are infinite extensions of $\xi(\epsilon) = \bar
    p$.
    
    This concludes the proof that $\EE \alpha' \AA \aimply \EE \alpha'
    (\AA)^\omega$.
  
   \item \textbf{Case $\alpha = (\AA)^n\EE\alpha'$, with $n > 0$}.
  
    Reasoning as in the proof of the previous case, it is easy to show
    that $\alpha \AA \aequiv \AA\EE\AA$.
    
    Let $\Tau$ be a tree and $\varphi$ be an LTL formula.
    Then $\Tau \models \AA \EE \AA . \varphi$ if and only if for
    every finite path $p$ of $\Tau$ there is some finite extension
    $p'$ of $p$ such that all the infinite extensions of $p'$
    satisfy the formula $\varphi$.
    Let $\xi$ be any strategy such that $p' = \xi(p)$ is a finite
    extension of $p$ such that all the infinite extensions of $p'$
    satisfy $\varphi$.
    Then every infinite path $p \in
    \Out_\Tau((\AA)^n\EE\alpha'(\AA)^\omega, \xi)$
    satisfies $\varphi$.
    Indeed, let $p_0,p_1,\ldots,p_n,p_{n+1},\ldots$ be a generating
    sequence for $p$. Then $p_{n+1} = \xi(p_n)$ and $p$ is an infinite
    extension of $p_{n+1}$. By construction of $\xi$ we know that $p$
    satisfies $\varphi$.
    
    This concludes the proof that $(\AA)^n\EE\alpha' \AA \aimply
    (\AA)^n\EE\alpha' (\AA)^\omega$.
  \end{itemize}

  Every finite path quantifier $\alpha$ falls in one of
  the three considered cases.
  Therefore, we can conclude that $\alpha \AA \aimply \alpha
  (\AA)^\omega$ for every finite path quantifier $\alpha$.
  \qed
\end{proof}

The next lemma defines a sufficient condition for proving that $\alpha
\aimply \alpha'$. This condition is useful for the proofs of the
forthcoming lemmas.
\begin{lemma}
  \label{l:ae-ltl-inf-imply}
  Let $\alpha$ and $\alpha'$ be two infinite path quantifiers.
  Let us assume that for all $\Sigma$-labelled trees and for each strategy
  $\xi$ there is some strategy $\xi'$ such that
  $\Out_\Tau(\alpha',\xi') \subseteq \Out_\Tau(\alpha,\xi)$.
  Then $\alpha \aimply \alpha'$.
\end{lemma}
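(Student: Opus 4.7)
The plan is to prove this by directly unfolding the definition of $\Tau \models \alpha . \varphi$ for infinite games (Definition~\ref{d:ae-ltl-inf-sem}) and then using the given subset inclusion to transfer a winning strategy for $\alpha$ into a winning strategy for $\alpha'$. Recall that $\Tau \models \alpha . \varphi$ means there exists a strategy $\xi$ such that every outcome $p \in \Out_\Tau(\alpha,\xi)$ satisfies the LTL property $\Tau(p) \models_\LTL \varphi$. So to establish $\alpha \aimply \alpha'$, I need to show that for an arbitrary tree $\Tau$ and LTL formula $\varphi$, the existence of such a $\xi$ for game $\alpha$ entails the existence of a corresponding strategy $\xi'$ for game $\alpha'$ that also makes every one of its outcomes satisfy $\varphi$.

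First I would fix $\Tau$ and $\varphi$ and assume $\Tau \models \alpha . \varphi$, obtaining a witnessing strategy $\xi$ such that $\Tau(p) \models_\LTL \varphi$ holds for every $p \in \Out_\Tau(\alpha,\xi)$. Next I would invoke the hypothesis of the lemma applied to this specific tree $\Tau$ and strategy $\xi$, which yields a strategy $\xi'$ satisfying $\Out_\Tau(\alpha',\xi') \subseteq \Out_\Tau(\alpha,\xi)$. Then for any $p \in \Out_\Tau(\alpha',\xi')$, the inclusion gives $p \in \Out_\Tau(\alpha,\xi)$, so $\Tau(p) \models_\LTL \varphi$. Hence $\xi'$ is a winning strategy for game $\alpha'$ with respect to $\varphi$, and therefore $\Tau \models \alpha' . \varphi$. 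Since $\Tau$ and $\varphi$ were arbitrary, $\alpha \aimply \alpha'$ by Definition~\ref{d:strictness}.

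There is essentially no obstacle here: the lemma is a direct semantic monotonicity statement, and the whole argument is a single application of the hypothesis after unfolding the two definitions. The only stylistic care needed is to make explicit that the strategy $\xi'$ provided by the hypothesis is allowed to depend on $\xi$ (and on $\Tau$), which is exactly what the quantifier order in the statement ``for each strategy $\xi$ there is some strategy $\xi'$'' permits; this suffices because we only need one $\xi'$ per tree/strategy pair to witness $\Tau \models \alpha' . \varphi$.
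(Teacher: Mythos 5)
Your proof is correct and follows exactly the same route as the paper: fix $\Tau$ and $\varphi$, take a winning strategy $\xi$ for $\alpha$, obtain $\xi'$ from the hypothesis with $\Out_\Tau(\alpha',\xi') \subseteq \Out_\Tau(\alpha,\xi)$, and conclude that every outcome of $\alpha'$ under $\xi'$ satisfies $\varphi$. There is nothing to add.
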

\begin{proof}
  Let us assume that $\Tau \models \alpha . \varphi$. 
  Then there is a suitable strategy $\xi$ such that all $p \in
  \Out_\Tau(\alpha,\xi)$ satisfy the LTL formula $\varphi$.
  Let $\xi'$ be a strategy such that all $\Out_\Tau(\alpha',\xi')
  \subseteq \Out_\Tau(\alpha,\xi)$.
  By hypothesis, all possible outcomes for game $\alpha'$ and
  strategy $\xi'$ satisfy the LTL formula $\varphi$, and hence $\Tau
  \models \alpha' . \varphi$.
  This concludes the proof that $\alpha \aimply \alpha'$.
  \qed
\end{proof}

In the next lemma we show that all the games where players A and E
alternate infinitely often are equivalent to one of the two games
$(\AA\EE)^\omega$ and $(\EE\AA)^\omega$.
That is, we can assume that each player extends the path only once
before the turn passes to the other player.
\begin{lemma}
  \label{l:alternating}
  Let $\alpha$ be an infinite path quantifier that contains an
  infinite number of $\AA$ and an infinite number of $\EE$. Then
  $\alpha \aequiv ({\AA\EE})^\omega$ or $\alpha \aequiv
  ({\EE\AA})^\omega$.
\end{lemma}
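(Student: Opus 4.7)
The plan is to apply Lemma \ref{l:ae-ltl-inf-imply} twice, once in each direction. Since $\alpha$ contains infinitely many $\AA$'s and infinitely many $\EE$'s, the two kinds of symbols must alternate infinitely often, so $\alpha$ decomposes uniquely into an infinite succession of maximal finite blocks of like symbols:
\begin{equation*}
  \alpha = \Pi_1^{n_1}\, \Pi_2^{n_2}\, \Pi_3^{n_3}\, \cdots,
\end{equation*}
with $\Pi_i \in \{\AA,\EE\}$, $\Pi_{i+1}\neq\Pi_i$, and each $n_i\geq 1$ finite. Depending on whether $\Pi_1=\AA$ or $\Pi_1=\EE$, the target quantifier is $\alpha^\star=(\AA\EE)^\omega$ or $(\EE\AA)^\omega$ respectively; I will show $\alpha\aequiv\alpha^\star$.

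For $\alpha\aimply\alpha^\star$, given a winning strategy $\xi$ for $\alpha$, I build $\xi^\star$ for $\alpha^\star$ by simulation, identifying one move in $\alpha^\star$ with an entire block in $\alpha$. On the $k$-th $\EE$-turn in $\alpha^\star$, I compose $\xi$ with itself $n_{j_k}$ times (where $n_{j_k}$ is the length of the matching $\EE$-block of $\alpha$) and return the resulting extension as $\xi^\star$'s output. On the $k$-th $\AA$-turn in $\alpha^\star$, whatever extension player A chooses is reinterpreted on the $\alpha$-side as A performing this entire extension in the first step of its block and then \emph{passing} (extending by zero nodes, i.e., taking the new path equal to the old one) in the remaining $n_{j_k}-1$ steps. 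Every outcome in $\Out_\Tau(\alpha^\star,\xi^\star)$ then admits a generating sequence for $\alpha$ under $\xi$ with the same limit infinite path, so $\Out_\Tau(\alpha^\star,\xi^\star)\subseteq\Out_\Tau(\alpha,\xi)$, and Lemma \ref{l:ae-ltl-inf-imply} yields the implication.

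For $\alpha^\star\aimply\alpha$, the construction is symmetric. Given $\xi^\star$ winning for $\alpha^\star$, define $\xi$ for $\alpha$ to invoke $\xi^\star$ on the first move of each $\EE$-block and to pass on the remaining moves of that block; the successive A-moves within an A-block of $\alpha$ combine into a single arbitrary extension, which is a legitimate single $\AA$-move in $\alpha^\star$. This again matches generating sequences term-for-term after padding, giving $\Out_\Tau(\alpha,\xi)\subseteq\Out_\Tau(\alpha^\star,\xi^\star)$, and Lemma \ref{l:ae-ltl-inf-imply} concludes.

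The main obstacle is less mathematical than notational. The argument hinges on the ``passing'' maneuver being formally legal, i.e., on $\xi(p)=p$ being a valid output; this is fine because a finite path is a non-strict finite extension of itself under the paper's definition of extension, as already exploited implicitly by the arbitrary-extension clause of Definition \ref{d:ae-ltl-inf-sem}. The remaining care is bookkeeping: synchronizing the two generating sequences (one of length matching $\alpha$, the other matching $\alpha^\star$), in which many consecutive entries of the $\alpha$-sequence may coincide because of padding, and checking that the induced limit paths agree. Once this is set up, both inclusions are routine, and combining them gives $\alpha\aequiv\alpha^\star$ as required.
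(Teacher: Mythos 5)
Your overall plan --- decompose $\alpha$ into maximal blocks, simulate one game inside the other by collapsing a block into a single move (or padding a single move into a block), and invoke Lemma~\ref{l:ae-ltl-inf-imply} in both directions --- is exactly the structure of the paper's proof, and the ``passing'' manoeuvre you single out as the main worry is indeed legal. The genuine gap is elsewhere: both of your simulating strategies are specified \emph{per turn} (``on the $k$-th $\EE$-turn \dots compose $\xi$ with itself $n_{j_k}$ times''; ``invoke $\xi^\star$ on the first move of each $\EE$-block and pass on the remaining moves''), but a strategy in this paper is a mapping $P^*(\Tau)\to P^*(\Tau)$ of the current finite path alone (Definition~\ref{d:strategy}), and Definition~\ref{d:ae-ltl-inf-sem} applies the \emph{same} $\xi$ at every $\EE$-turn. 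The turn index is not recoverable from the path: both players may extend by zero nodes and player A's extensions have arbitrary length, so the same finite path $p$ can be presented to player E as its first turn in one play and as its $k$-th turn in another; your $\xi^\star$ would then have to return both $\xi^{n_{j_1}}(p)$ and $\xi^{n_{j_k}}(p)$, which differ as soon as the block lengths do. As written, neither $\xi^\star$ nor your ``pass on the remaining moves'' strategy is a well-defined object, so Lemma~\ref{l:ae-ltl-inf-imply} cannot yet be applied.

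Both defects are repairable, and the repairs are where the real content of the paper's argument lies. For $\alpha\aimply(\AA\EE)^\omega$ the paper sets $\bar\xi(\bar p)=\xi^{k_{\bar p}}(\bar p)$ with $k_{\bar p}=\sum_{i=1}^{|\bar p|}n_i$, a quantity computable from the path itself; this \emph{overshoots} whichever block length $n_i$ could currently be in play, and overshooting is harmless because the surplus applications of $\xi$ can be absorbed into the following (arbitrary) $\AA$-move when the generating sequence for $\alpha$ is unfolded. For the converse direction no modified strategy is needed at all: keep $\xi$ unchanged, let the single $\EE$-move of $(\AA\EE)^\omega$ be the \emph{first} application of $\xi$ in the corresponding $\EE$-block, and absorb the remaining $n_i-1$ applications together with the entire next $\AA$-block into the one arbitrary $\AA$-move. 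You should rework both constructions so that every strategy's output depends only on its input path; with that change your bookkeeping goes through and matches the paper's proof.
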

\begin{proof}
  Let $\alpha = (\AA)^{m_1} (\EE)^{n_1} (\AA)^{m_2} (\EE)^{n_2}
  \cdots$ with $m_i,n_i > 0$.
  We show that $\alpha \aequiv ({\AA\EE})^\omega$.
  
  First, we prove that $({\AA\EE})^\omega \aimply \alpha$.
  Let $\xi$ be a strategy for the tree $\Tau$ and let $p$ be an infinite
  path of $\Tau$. We show that if $p \in \Out_\Tau(\alpha,\xi)$ then
  $p \in \Out_\tau((\AA\EE)^\omega,\xi)$. By
  Lemma~\ref{l:ae-ltl-inf-imply} this is sufficient for proving that  
  $({\AA\EE})^\omega \aimply \alpha$.
  
  Let $p_0,p_1,\ldots$ be a generating sequence for $p$ according to
  $\alpha$ and $\xi$.
  Moreover, let $p'_0 = \epsilon$, $p'_{2i+1} =
  p_{m_1+n_1+\cdots+m_{i-1}+n_{i-1}+m_i}$ and  and $p'_{2i+2}
  = p_{m_1+n_1+\cdots+m_{i-1}+n_{i-1}+m_i+1}$.
  It is easy to check that $p'_0, p'_1, p'_2, \ldots$ is a valid
  generating sequence for $p$ according to game $(\AA \EE)^\omega$ and
  strategy $\xi$.
  Indeed, extensions $p'_0 \to p'_1$, $p'_2 \to p'_3$, $p'_4 \to
  p'_5$, \dots{} are moves of player A, and hence can be arbitrary.
  Extensions $p'_1 \to p'_2$, $p'_3 \to p'_4$, \dots{} correspond to
  extensions $p_{m_1} \to p_{m_1+1}$, $p_{m_1+n_1+m_2} \to
  p_{m_1+n_1+m_2+1}$, \dots, which are moves of player E and hence
  respect strategy $\xi$.
  
  We now prove that $\alpha \aimply ({\AA\EE})^\omega$.
  Let $\xi$ be a strategy for the tree $\Tau$. We define a strategy
  $\bar \xi$ such that if $p \in \Out_\Tau((\AA\EE)^\omega, \bar
  \xi)$, then $p\in \Out_\Tau(\alpha,\xi)$.
  By Lemma~\ref{l:ae-ltl-inf-imply} this is sufficient for proving
  that $\alpha \aimply ({\AA\EE})^\omega$.
  
  Let $\bar p$ be a finite path. Then $\bar \xi(\bar p) = \xi^{k_{\bar
  p}}(\bar p)$ with $k_{\bar p} = \sum_{i=1}^{|\bar p|} n_i$. That is,
  strategy $\bar \xi$ on path $\bar p$ is obtained by applying
  $k_{\bar p}$ times strategy $\xi$. The number of times strategy
  $\xi$ is applied depends on the length $|\bar p|$ of path $\bar p$.
  
  We show that, if $p$ is a possible outcome of the game $\alpha$ with
  strategy $\bar \xi$, then $p$ is a possible outcome of the game
  $(\AA\EE)^\omega$ with strategy $\xi$.
  Let $p_0, p_1, \ldots$ be a generating sequence for $p$ according to
  $({\AA\EE})^\omega$ and $\bar \xi$.
  Then 
  \begin{gather*}
    p_0,
    \underbrace{p_1,..., p_1}_{\text{$m_1$ times}}, 
    \underbrace{\xi(p_1), \xi^2(p_1),..., \xi^{n_1}(p_1)}_{\text{$n_1$ times}}, 
    \underbrace{p_3,..., p_3}_{\text{$m_2$ times}},\hspace{1cm}\\
    \hspace{2cm} \underbrace{\xi(p_3), \xi^2(p_3),..., \xi^{n_2}(p_3)}_{\text{$n_2$ times}},
    \underbrace{p_5,..., p_5}_{\text{$m_3$ times}},
    ...
  \end{gather*}
  is a valid generating sequence for $p$ according to $\alpha$ and $\xi$.
  The extensions corresponding to an occurrence of symbol $\EE$ in
  $\alpha$ consist of an application of the strategy $\xi$ 
  and are hence valid for player E.
  Moreover, extension $\xi^{n_i}(p_{2i-1}) \to p_{2i+1}$ is a valid
  move for player A because $p_{2i+1}$ is an extension of
  $\xi^{n_i}(p_{2i-1})$.
  Indeed, $\xi^{n_i}(p_{2i-1})$ is a prefix of $p_{2i}$ (and hence of
  $p_{2i+1}$) since $p_{2i} = \bar \xi(p_{2i-1}) =
  \xi^{k_{p_{2i-1}}}(p_{2i-1})$ and $k_{p_{2i-1}} =
  \sum_{x=1}^{|p_{2i-1}|} n_x \geq n_i$, since $|p_{2i-1}| \geq i$.
  The other conditions of Definition~\ref{d:ae-ltl-inf-sem} can be
  easily checked.
  
  This concludes the proof that $\alpha \aequiv ({\AA\EE})^\omega$ for
  $\alpha = (\AA)^{m_1} (\EE)^{n_1} (\AA)^{m_2} (\EE)^{n_2} \cdots$.
  The proof that $\alpha \aequiv ({\EE\AA})^\omega$ for $\alpha =
  (\EE)^{m_1} (\AA)^{n_1} (\EE)^{m_2} (\AA)^{n_2} \cdots$ is similar.
  \qed
\end{proof}

The next lemma contains other auxiliary results on path quantifiers.
\begin{lemma}
  \label{l:ae-ltl-prop-bis}
  Let $\alpha$ be a finite path quantifier and $\alpha'$ be an
  infinite path quantifier.
  \begin{enumerate}
   \item $\alpha \AA \alpha' \aimply \alpha \alpha'$ and $\alpha
    \alpha' \aimply \alpha \EE \alpha'$.
   \item $\alpha (\AA)^\omega \aimply \alpha \AA \alpha'$ and $\alpha
    \EE \alpha' \aimply \alpha (\EE)^\omega$.
  \end{enumerate}
\end{lemma}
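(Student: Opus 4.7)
\emph{Plan.} Each of the four implications will be established by Lemma~\ref{l:ae-ltl-inf-imply}: given any strategy $\xi$ for the left-hand game, I will take $\xi' = \xi$ and show that the outcomes of the right-hand game under $\xi$ are contained in those of the left-hand game under $\xi$, by transforming each generating sequence of the former into one of the latter through a local insertion or deletion at the position where the two games differ.

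The three easier implications follow by direct construction. For $\alpha \AA \alpha' \aimply \alpha \alpha'$, I take any generating sequence $(r_i)$ for $\alpha \alpha'$ under $\xi$ and insert a duplicate of $r_n$ (a trivial $\AA$-extension) at the position of the extra $\AA$, obtaining a valid generating sequence for $\alpha \AA \alpha'$ under $\xi$; all subsequent moves align because $\xi$ is a function of the finite path. For $\alpha (\AA)^\omega \aimply \alpha \AA \alpha'$, every generating sequence for $\alpha \AA \alpha'$ under $\xi$ is already a generating sequence for $\alpha (\AA)^\omega$ under $\xi$, since arbitrary $\AA$-extensions subsume $\xi$'s specific choices at the $\EE$-positions of $\alpha'$. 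Dually, for $\alpha \EE \alpha' \aimply \alpha (\EE)^\omega$, a generating sequence $(t_i)$ for $\alpha (\EE)^\omega$ under $\xi$ (where $t_{n+k+1} = \xi(t_{n+k})$) can be reused for $\alpha \EE \alpha'$: the value $t_{n+1} = \xi(t_n)$ serves as the extra $\EE$-move, and each subsequent $\xi$-forced value $t_{n+k+1}$ serves both as a $\xi$-move at an $\EE$-position of $\alpha'$ and as an admissible arbitrary extension at an $\AA$-position of $\alpha'$.

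The delicate implication is $\alpha \alpha' \aimply \alpha \EE \alpha'$. Given a generating sequence $(s_i)$ for $\alpha \EE \alpha'$ under $\xi$ with $s_{n+1} = \xi(s_n)$ and outcome $p$, I will case-split on the first symbol $\Pi_1$ of $\alpha'$. If $\Pi_1 = \AA$, I drop $s_{n+1}$ and set $r_i = s_i$ for $i \le n$ and $r_i = s_{i+1}$ for $i > n$: the transition $r_n = s_n \to r_{n+1} = s_{n+2}$ is a valid $\AA$-move since $s_{n+2}$ extends $s_n$ via $s_{n+1}$, and all later positions align with the shifted $\alpha'$-symbols. If $\Pi_1 = \EE$, the $\EE$-rule at position $n$ of $\alpha \alpha'$ forces $r_{n+1} = \xi(r_n) = \xi(s_n) = s_{n+1}$, so I keep $r_i = s_i$ on the initial segment $i \le n+1$ and continue the alignment position by position.

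The main obstacle will be sustaining the alignment in the tail of the $\Pi_1 = \EE$ case when $\alpha'$ mixes $\AA$ and $\EE$ symbols: every $\xi$-forced move in the $\alpha \alpha'$-game must remain a prefix of the fixed outcome $p$. To handle this I plan to exploit the freedom to re-choose the $\AA$-extensions within $(s_i)$ without changing the outcome $p$ (any prefix of $p$ extending $s_i$ is an admissible $\AA$-choice), so that at each $\EE$-position of $\alpha \alpha'$ the call to $\xi$ lands on a prefix of $p$ that already appears in (a suitably re-chosen) $(s_i)$; this inductive realignment, together with the boundary case handled above, will yield the required generating sequence for $\alpha \alpha'$ under $\xi$.
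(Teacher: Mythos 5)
Your proposal is correct and, for three of the four implications, coincides with the paper's argument: the same reduction via Lemma~\ref{l:ae-ltl-inf-imply} with $\xi'=\xi$, the duplicated entry $r_n \to r_n$ as a trivial $\AA$-move, and the observation that $\xi$-forced extensions are in particular admissible $\AA$-extensions. The divergence is in the implication $\alpha\alpha' \aimply \alpha\EE\alpha'$, precisely where you flag a "delicate" case. The paper makes that case disappear by a normalization you did not spot: it first disposes of $\alpha'=(\EE)^\omega$ (where $\alpha\EE\alpha'=\alpha\alpha'$ literally), and otherwise folds the leading $\EE$'s of $\alpha'$ into the finite prefix $\alpha$, so that without loss of generality $\alpha'$ begins with $\AA$; then deleting the extra $\xi$-move $s_{n+1}$ is always absorbed by the immediately following arbitrary $\AA$-extension, and the tail aligns with no further work. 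Your direct treatment of the $\Pi_1=\EE$ case is salvageable but should be tightened: up to the first occurrence of $\AA$ in $\alpha'$ every move in both games is $\xi$-forced from the same path, so the two generating sequences agree there with the target lagging one $\xi$-application behind; the lag is then absorbed in a single step at that first $\AA$-position by choosing the $\AA$-extension \emph{in the target sequence} $(r_i)$ to jump to the source's next entry, after which $r_j = s_{j+1}$ propagates. Note that the slack lives in the sequence you are constructing, not in re-choosing the $\AA$-extensions of the given $(s_i)$ as you propose — the latter risks altering which $\xi$-values appear and is unnecessary. Either route works; the paper's WLOG is shorter and avoids the induction entirely.
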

\begin{proof}
  \begin{enumerate}
   \item We prove that $\alpha \AA \alpha' \aimply \alpha \alpha'$.
    Let $\xi$ be a strategy for tree $\Tau$ and let $p$ be an infinite
    path of $\Tau$. We show that if $p \in \Out_\Tau(\alpha \alpha',
    \xi)$ then $p \in \Out_\Tau(\alpha \AA \alpha', \xi)$.
    Let $p_0,p_1,\ldots$ be a generating sequence for $p$ according to
    $\alpha \alpha'$ and $\xi$.
    Then it is easy to check that $p_0, p_1, \ldots, p_{i-1}, p_i,
    p_i, p_{i+1}, \ldots$, where $i$ is the length of $\alpha$, is a
    valid generating sequence for $p$ according to $\alpha \AA
    \alpha'$ and $\xi$.
    Indeed, the extension $p_i \to p_i$ is a valid move for player A.
    This concludes the proof that $\alpha \AA \alpha' \aimply \alpha \alpha'$.

    Now we prove that $\alpha \alpha' \aimply \alpha \EE \alpha'$.
    If $\alpha' = (\EE)^\omega$, then $\alpha \EE \alpha' = \alpha \EE
    (\EE)^\omega = \alpha (\EE)^\omega = \alpha \alpha'$, and 
    $\alpha \EE \alpha' \aimply \alpha \alpha'$ is trivially true.
    If $\alpha' \neq (\EE)^\omega$, we can assume, without loss of
    generality, that $\alpha' = \AA \alpha''$.
    In this case, let $\xi$ be a strategy for tree $\Tau$ and let $p$ be a
    path of $\Tau$. We show that if $p \in \Out_\Tau( \alpha \EE
    \alpha', \xi)$ then $p \in \Out_\Tau(\alpha \alpha',\xi)$.
    Let $p_0,p_1,\ldots$ be a generating sequence for $p$ according to
    $\alpha \EE \alpha'$ and $\xi$.
    Then it is easy to check that $p_0, p_1, \ldots, p_i, p_{i+2},
    \ldots$, where $i$ is the length of $\alpha$, is a valid
    generating sequence for $p$ according to $\alpha \alpha'$ and $\xi$.
    Indeed, extension $p_i \to p_{i+2}$ is valid, as it
    corresponds to the first symbol of $\alpha'$ and we have assumed
    it to be symbol $\AA$.
    This concludes the proof that $\alpha \alpha' \aimply \alpha \EE \alpha'$.
    
   \item We prove that $\alpha (\AA)^\omega \aimply \alpha \alpha'$.
    The proof that $\alpha \alpha' \aimply \alpha (\EE)^\omega$ is
    similar.

    Let $\xi$ be a strategy for tree $\Tau$ and let $p$ be an infinite
    path of $\Tau$. We show that if $p \in \Out_\Tau(\alpha
    (\AA)^\omega, \xi) $ then $p \in \Out_\Tau(\alpha \alpha', \xi)$.
    Let $p_0,p_1,\ldots$ be a generating sequence for $p$ according to
    $\alpha \alpha'$ and $\xi$.
    Then it is easy to check that $p_0, p_1, \ldots$ is a valid
    generating sequence for $p$ according to $\alpha (\AA)^\omega$ and
    $\xi$.
    In fact, $\alpha (\AA)^\omega$ defines less restrictive conditions
    on generating sequences than $\alpha \alpha'$.

    This is sufficient to conclude that $\alpha (\AA)^\omega \aimply
    \alpha \alpha'$.
    \qed
  \end{enumerate}
\end{proof}

We can now complete the picture of Theorem~\ref{t:123}: each finite or
infinite path quantifier is equivalent to a \emph{canonical path
quantifier} that defines a game consisting of alternated moves of
players A and E of length one, two, three, or infinity.
\begin{theorem}
  \label{t:123inf}
  For each finite or infinite path quantifier $\alpha$ there is a
  canonical path quantifier $$\alpha' \in \{ \AA, \EE, \AA\EE, \EE\AA,
  \AA\EE\AA, \EE\AA\EE, ({\AA\EE})^\omega, ({\EE\AA})^\omega \}$$
  such that $\alpha \aequiv \alpha'$.
  Moreover, the following implications hold between the
  canonical path quantifiers:
  \begin{equation}
    \label{eq:123inf}
    \begin{xy}
      \xymatrix{
        \AA \ar@{~>}[r] &
        \AA\EE\AA \ar@{~>}[r]\ar@{~>}[d] &
        (\AA\EE)^\omega \ar@{~>}[r]\ar@{~>}[d] &
        \AA\EE \ar@{~>}[d] \\
        &
        \EE\AA \ar@{~>}[r] &
        (\EE\AA)^\omega \ar@{~>}[r] &
        \EE\AA\EE \ar@{~>}[r] &
        \EE
      }
    \end{xy}
  \end{equation}
\end{theorem}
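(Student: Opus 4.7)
The plan is to reduce every path quantifier to one in the canonical list by a case split on how $\AA$ and $\EE$ occur, and then to check the new implications in diagram~(\ref{eq:123inf}) using the machinery already developed in Lemmas~\ref{l:ae-ltl-inf-fin}, \ref{l:alternating}, and~\ref{l:ae-ltl-prop-bis}.

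For the canonical-form part, I would split on the multiplicities of $\AA$ and $\EE$ in $\alpha$. If $\alpha$ is finite, Theorem~\ref{t:123} gives the result directly. If $\alpha$ is infinite but only one of the two symbols occurs infinitely often, then $\alpha = \beta (\AA)^\omega$ or $\alpha = \beta (\EE)^\omega$ for some finite word $\beta$; by Lemma~\ref{l:ae-ltl-inf-fin}, $\alpha$ is equivalent to the finite quantifier $\beta\AA$ or $\beta\EE$, and Theorem~\ref{t:123} again provides a canonical representative. Finally, if both symbols occur infinitely often, Lemma~\ref{l:alternating} shows that $\alpha$ is equivalent to $(\AA\EE)^\omega$ or $(\EE\AA)^\omega$, depending on which symbol appears first in the block decomposition. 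This exhausts all cases.

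For the implications, the arrows among the six finite canonical quantifiers are inherited from Theorem~\ref{t:123}. The four genuinely new arrows are $\AA\EE\AA \aimply (\AA\EE)^\omega \aimply \AA\EE$ and $\EE\AA \aimply (\EE\AA)^\omega \aimply \EE\AA\EE$, and all four follow by combining Lemma~\ref{l:ae-ltl-inf-fin} with Lemma~\ref{l:ae-ltl-prop-bis}(2). For example, $\AA\EE\AA \aequiv \AA\EE(\AA)^\omega$ by Lemma~\ref{l:ae-ltl-inf-fin}, and then Lemma~\ref{l:ae-ltl-prop-bis}(2) applied with finite prefix $\AA\EE$ and infinite suffix $(\EE\AA)^\omega$ gives $\AA\EE(\AA)^\omega \aimply \AA\EE\AA(\EE\AA)^\omega = (\AA\EE)^\omega$. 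Symmetrically, $\AA\EE \aequiv \AA(\EE)^\omega$ and Lemma~\ref{l:ae-ltl-prop-bis}(2) with prefix $\AA$ and suffix $(\AA\EE)^\omega$ yields $(\AA\EE)^\omega = \AA\EE(\AA\EE)^\omega \aimply \AA(\EE)^\omega \aequiv \AA\EE$. The two $\EE\AA$-side arrows are proved in the same way, mutatis mutandis.

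The main bookkeeping obstacle, and the only place where one must be careful, is matching finite/infinite splittings to the signature of Lemma~\ref{l:ae-ltl-prop-bis}(2) and observing the self-composition identities $\AA\EE(\EE\AA)^\omega = (\AA\EE)^\omega$ and $\EE\AA(\AA\EE)^\omega = (\EE\AA)^\omega$; once these are recognized, each arrow is a one-line application of the lemma. No new game-theoretic argument is required beyond what Lemma~\ref{l:ae-ltl-inf-imply} already packages for us.
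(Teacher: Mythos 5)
Your reduction to canonical form is exactly the paper's argument: finite quantifiers via Theorem~\ref{t:123}, infinite quantifiers with one symbol occurring finitely often via Lemma~\ref{l:ae-ltl-inf-fin} followed by Theorem~\ref{t:123}, and genuinely alternating quantifiers via Lemma~\ref{l:alternating}. Your treatment of the arrows $\AA\EE\AA \aimply (\AA\EE)^\omega$, $(\AA\EE)^\omega \aimply \AA\EE$, $\EE\AA \aimply (\EE\AA)^\omega$, and $(\EE\AA)^\omega \aimply \EE\AA\EE$ via Lemma~\ref{l:ae-ltl-inf-fin} together with Lemma~\ref{l:ae-ltl-prop-bis}(2) is also the paper's route, and your applications of the lemma check out.

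However, your enumeration of the new arrows is incomplete: diagram~(\ref{eq:123inf}) contains a fifth arrow absent from~(\ref{eq:123}), namely the vertical implication $(\AA\EE)^\omega \aimply (\EE\AA)^\omega$. This one cannot be recovered by composing the arrows you do prove --- the only other edge leaving $(\AA\EE)^\omega$ goes to $\AA\EE$, and $\AA\EE$ does not imply $(\EE\AA)^\omega$ (both only imply $\EE\AA\EE$, and the implications are strict). It also cannot be handled by your stated toolkit, since neither side has the form $\alpha(\AA)^\omega$ or $\alpha(\EE)^\omega$ required by Lemma~\ref{l:ae-ltl-prop-bis}(2). The paper derives it from Lemma~\ref{l:ae-ltl-prop-bis}(1), taking $\alpha$ to be the empty word and $\alpha' = (\AA\EE)^\omega$, so that $\alpha\alpha' \aimply \alpha\EE\alpha'$ reads $(\AA\EE)^\omega \aimply \EE(\AA\EE)^\omega = (\EE\AA)^\omega$. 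The fix is one line, but as written your proof omits one of the claimed implications.
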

\begin{proof}
  We first prove that each path quantifier is equivalent to a
  canonical path quantifier.
  By Theorem~\ref{t:123}, this is true for the finite path
  quantifiers, so we only consider infinite path quantifiers.
  
  Let $\alpha$ be an infinite path quantifier. We distinguish three
  cases:
  \begin{itemize}
   \item $\alpha$ contains an infinite number of $\AA$ and an infinite
    number of $\EE$: then, by Lemma~\ref{l:alternating}, $\alpha$
    is equivalent to one of the canonical games $({\AA\EE})^\omega$ or
    $({\EE\AA})^\omega$.
   \item $\alpha$ contains a finite number of $\AA$: in this case,
    $\alpha$ ends with an infinite sequence of $\EE$, and, by
    Lemma~\ref{l:ae-ltl-inf-fin}, $\alpha \aequiv \alpha''$ for
    some finite path quantifier $\alpha''$. By Theorem~\ref{t:123},
    $\alpha''$ is equivalent to some canonical path quantifier, and
    this concludes the proof for this case.
   \item $\alpha$ contains a finite number of $\EE$: this case is
    similar to the previous one.
  \end{itemize}

  For the implications in (\ref{eq:123inf}):
  \begin{itemize}
   \item $({\AA\EE})^\omega \aimply ({\EE\AA})^\omega$ comes from
    Lemma~\ref{l:ae-ltl-prop-bis}(1), by taking the empty word for
    $\alpha$ and $\alpha' = (\AA\EE)^\omega$.
   \item $\AA\EE\AA \aimply ({\AA\EE})^\omega$, $({\AA\EE})^\omega
    \aimply \AA\EE$, $\EE\AA \aimply ({\EE\AA})^\omega$, and
    $({\EE\AA})^\omega \aimply \EE\AA\EE$ come from
    Lemmas~\ref{l:ae-ltl-inf-fin} and~\ref{l:ae-ltl-prop-bis}(2).
   \item The other implications come from Theorem~\ref{t:123}.
    \qed
  \end{itemize}
\end{proof}


\subsection{Strictness of the Implications}

We conclude this section by showing that all the arrows in the diagram
of Theorem~\ref{t:123inf} describe strict implications, namely,
the eight canonical path quantifiers are all different.
Let us consider the following $\{i,p,q\}$-labelled binary tree, where
the root is labelled by $i$ and each node has two children labelled
with $p$ and $q$:
\begin{equation*}
  \newcommand{\state}[1]{*+[o][F-]{#1}}
  \xymatrix @C=0.1pc @R=1.2pc {
  &&&&&&& \state{i} \ar[rrrrd]\ar[lllld] \\
  &&& \state{p} \ar[rrd]\ar[lld] &&&&&&&&
  \state{q} \ar[rrd]\ar[lld] \\
  & \state{p} \ar[rd]\ar[ld] &&&& \state{q} \ar[rd]\ar[ld] &&&&
  \state{p} \ar[rd]\ar[ld] &&&& \state{q} \ar[rd]\ar[ld] \\
  \state{p} \ar@{.}[d] && \state{q} \ar@{.}[d] &&
  \state{p} \ar@{.}[d] && \state{q} \ar@{.}[d] &&
  \state{p} \ar@{.}[d] && \state{q} \ar@{.}[d] &&
  \state{p} \ar@{.}[d] && \state{q} \ar@{.}[d] \\
  &&&&&&&&&&&&&&&
  }
\end{equation*}
Let us consider the following LTL formulas:
\begin{itemize}
 \item $\F p$: player E can satisfy this formula if he moves at least
  once, by visiting a $p$-labelled node.
 \item $\G\F p$: player E can satisfy this formula if he can visit
  an infinite number of $p$-labelled nodes, that is, if he
  has the final move in a finite game, or if he moves infinitely often
  in an infinite game.
 \item $\F\G p$: player E can satisfy this formula only if he takes
  control of the game from a certain point on, that is, only if he has
  the final move in a finite game.
 \item $\G\neg q$: player E can satisfy this formula only if player A
  never plays, since player A can immediately visit a
  $q$-labelled node.
 \item $\X p$: player E can satisfy this formula by playing the
  first turn and moving to the left child of the root node.
\end{itemize}
The following graph shows which formulas hold for which path
quantifiers:
\begin{equation*}
  \begin{xy}
    \xymatrix @C=1.2pc {
      & \F p & \G\F p & \F\G p & \G \neg q \\
      \AA \ar@{~>}[r] &
      \AA\EE\AA \ar@{~>}[r]\ar@{~>}[d] &
      (\AA\EE)^\omega \ar@{~>}[r]\ar@{~>}[d] &
      \AA\EE \ar@{~>}[d] \\
      \X p &
      \EE\AA \ar@{~>}[r] &
      (\EE\AA)^\omega \ar@{~>}[r] &
      \EE\AA\EE \ar@{~>}[r] &
      \EE 
      \save
      "3,5"!/d15pt/."1,5"*\frm<8pt>{.}
           ."1,4"*\frm<8pt>{.}
           ."1,3"*\frm<8pt>{.}
           ."1,2"*\frm<8pt>{.},
      "3,5"."3,1"*+\frm<8pt>{.}
      \restore
    }
  \end{xy}
\end{equation*}


\section{A Planning Algorithm for $\AE$-LTL}
\label{sec:ae-ltl-planning}

In this section we present a planning algorithm for $\AE$-LTL goals.
We start by showing how to build a parity tree automaton that accepts
all the trees that satisfy a given $\AE$-LTL formula.
Then we show how this tree automaton can be adapted, so that it
accepts only trees that 
correspond to valid plans for a given planning domain.
In this way, the problem of checking whether there exists some plan
for a given domain and for an $\AE$-LTL goal is reduced to the emptiness
problem on tree automata.
Finally, we study the complexity of planning for $\AE$-LTL goals and
we prove that this problem is 2EXPTIME-complete.


\subsection{Tree Automata and $\AE$-LTL Formulas}
\label{ssec:ae-ltl-automata}

\citeA{BGK03} have shown that $\AE$-LTL formulas can be expressed
directly as \CTLstar formulas. The reduction exploits the equivalence
of expressive power of \CTLstar and monadic path logic \cite{MR99}.
A tree automaton can be obtained for an $\AE$-LTL formula using this reduction
and Theorem~\ref{t:ctlstar-tree-aut}.
However, the translation proposed by \citeA{BGK03} has an upper bound of
non-elementary complexity, and is hence not useful for our complexity
analysis.
In this paper we describe a different, more direct reduction that is
better suited for our purposes.

A $\Sigma$-labelled tree $\Tau$ satisfies a formula $\alpha . \varphi$
if there is a suitable subset of paths of the tree that satisfy
$\varphi$. The subset of paths should be chosen according to $\alpha$.
In order to characterize the suitable subsets of paths, we assume to
have a $w$-marking of the tree $\Tau$, and we use the labels $w$ to
define the selected paths.

\begin{definition}[$w$-marking]
  A $w$-marking of the $\Sigma$-labelled tree $\Tau$ is a
  $(\Sigma{\times}\{w,\overline w\})$-la\-belled tree $\Tauw$ such that
  $\dom(\Tau) = \dom(\Tauw)$ and, whenever $\Tau(x) = \sigma$, then
  $\Tauw(x) = (\sigma, w)$ or $\Tauw(x) = (\sigma, \overline w)$.
\end{definition}
We exploit $w$-markings as follows. We associate to each $\AE$-LTL
formula $\alpha .  \varphi$ a \CTLstar formula $\pf{\alpha.\varphi}$
such that the tree $\Tau$ satisfies the formula $\alpha . \varphi$ if
and only if there is a $w$-marking of $\Tau$ that satisfies
$\pf{\alpha .  \varphi}$.

\begin{definition}[$\AE$-LTL and \CTLstar]
  \label{d:ae-ltl-vs-ctl}
  Let $\alpha . \varphi$ be an $\AE$-LTL formula. The \CTLstar formula
  $\pf{\alpha.\varphi}$ is defined as follows:
  \begin{eqnarray*}
    \pf{\AA . \varphi} & = & \Ap \varphi                    \\
    \pf{\EE . \varphi} & = & \Ep \varphi                    \\
    \pf{\EE\AA . \varphi}
    & = & \EF w  \,\wedge\, \Ap(\F w \to \varphi)           \\
    \pf{\AA\EE\AA . \varphi}
    & = & \AG \EF w \,\wedge\, \Ap(\F w \to \varphi)        \\
    \pf{\AA\EE . \varphi}
    & = & \AG \EXG w \,\wedge\, \Ap(\F\G w \to \varphi)     \\
    \pf{\EE\AA\EE . \varphi}
    & = & \EF \AG \EXG w \,\wedge\, \Ap(\F\G w \to \varphi) \\
    \pf{(\AA\EE)^\omega . \varphi}
    & = & \AG \EF w \,\wedge\, \Ap(\G\F w \to \varphi)      \\
    \pf{(\EE\AA)^\omega . \varphi}
    & = & \EF \AG \EF w \,\wedge\, \Ap(\G\F w \to \varphi)
  \end{eqnarray*}
\end{definition}
In the case of path quantifiers $\AA$ and $\EE$, there is a direct
translation into \CTLstar that does not exploit the $w$-marking.
In the other cases, the \CTLstar formula $\pf{\alpha.\varphi}$ is the
conjunction of two sub-formulas. The first one characterizes the good
markings according to the path quantifier $\alpha$, while the second
one guarantees that the paths selected according to the marking
satisfy the LTL formula $\varphi$.
In the case of path quantifiers $\EE\AA$ and $\AA\EE\AA$, we
mark with $w$ the nodes that, once reached, guarantee that
the formula $\varphi$ is satisfied. The selected paths are hence those
that contain a node labelled by $w$ (formula $\F w$).
In the case of path quantifiers $\AA\EE$ and $\EE\AA\EE$, we
mark with $w$ all the descendants of a node that define an
infinite path that satisfies $\varphi$.  The selected paths are hence
those that, from a certain node on, are continuously labelled by $w$
(formula $\F\G w$). 
In the case of path quantifiers $(\AA\EE)^\omega$ and
$(\EE\AA)^\omega$, finally, we mark with $w$ all the nodes
that player E wants to reach according to its strategy before passing
the turn to player A.  The selected paths are hence those that contain
an infinite number of nodes labelled by $w$ (formula $\G\F w$), that
is, the paths along which player E moves infinitely often.

\begin{theorem}
  \label{t:abstract-tree}
  A $\Sigma$-labelled tree $\Tau$ satisfies the $\AE$-LTL formula
  $\alpha.\varphi$ if and only if there is some $w$-marking of
  $\Tau$ that satisfies formula $\pf{\alpha.\varphi}$.
\end{theorem}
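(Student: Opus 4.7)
The plan is a case analysis on the eight canonical path quantifiers of Theorem~\ref{t:123inf}, establishing both directions of the equivalence for each. The two base cases $\alpha \in \{\AA, \EE\}$ are immediate: $\pf{\AA.\varphi} = \Ap\varphi$ and $\pf{\EE.\varphi} = \Ep\varphi$ make no reference to $w$, and the definitions of $\AA.\varphi$ and $\EE.\varphi$ in Definition~\ref{d:ae-ltl-sem} coincide with those of $\Ap\varphi$ and $\Ep\varphi$ in Definition~\ref{d:ctlstar-sem}, once one observes that the infinite extensions of the root path in $\Tau$ are exactly the infinite paths of $\Tau$ starting at the root.

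For each of the six remaining cases, the forward direction (from $\Tau \models \alpha.\varphi$ to a witnessing $w$-marking) proceeds by fixing a winning object for the $\AE$-LTL game and using it to label nodes with $w$. The unifying intuition is that $w$ records player E's commitments. For $\EE\AA$ and $\AA\EE\AA$, I would mark the endpoint of each finite extension chosen by E, so that $\F w$ pinpoints exactly the paths passing through a committed prefix; the second conjunct $\Ap(\F w \to \varphi)$ then follows directly from the choice of extension. For $\AA\EE$ and $\EE\AA\EE$, E's final move extends to infinity, so I would mark every node of each chosen infinite good continuation, making $\F\G w$ identify precisely the paths that eventually stay on a committed continuation. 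For $(\AA\EE)^\omega$ and $(\EE\AA)^\omega$, I would mark the endpoints of E's turns along a winning strategy $\xi$, so that $\G\F w$ characterises the paths in $\Out_\Tau(\alpha,\xi)$.

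The backward direction reads player E's strategy off the marking. For instance, in the $\AA\EE\AA$ case, given a marking that satisfies $\AG\EF w$, whenever E must extend a finite path $p$ we use $\EF w$ at the endpoint of $p$ to pick a $w$-ending finite extension $\xi(p)$; the conjunct $\Ap(\F w \to \varphi)$ then guarantees that every infinite extension of $\xi(p)$ satisfies $\varphi$, which is exactly what Definition~\ref{d:ae-ltl-sem} requires. Analogous readings — using $\EXG w$ to commit to an entire all-$w$ tail in the $\F\G w$ cases, and invoking $\AG\EF w$ repeatedly along the game in the $\omega$-alternation cases to produce the required generating sequence of Definition~\ref{d:ae-ltl-inf-sem} — yield winning strategies for each remaining canonical quantifier.

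The main obstacle I expect is the forward direction in the two $\F\G w$ cases ($\AA\EE$ and $\EE\AA\EE$), because the $w$-marking is a \emph{global} decoration of $\Tau$ while the $\AE$-LTL witness only supplies one good continuation per finite path. A naive union of all chosen continuations can violate $\Ap(\F\G w \to \varphi)$, since a path may stitch together tails from several different chosen continuations and thereby become eventually-always-$w$-marked without itself satisfying $\varphi$. I would bypass this by taking the marked set $W$ as a greatest fixed point: the largest set such that, for every $y \in W$, the root-to-$y$ path admits an infinite extension that satisfies $\varphi$ and whose every subsequent node again lies in $W$. Then $\Tau \models \alpha.\varphi$ can be used to show $W$ is rich enough to witness $\AG\EXG w$ (respectively $\EF\AG\EXG w$), while the fixed-point property itself forces $\Ap(\F\G w \to \varphi)$. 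A similar fixed-point refinement will be needed in the $\omega$-alternation cases to ensure the marking encodes a single coherent strategy rather than an inconsistent patchwork of local extensions.
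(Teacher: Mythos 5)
Your overall strategy matches the paper's: a case analysis over the canonical quantifiers, marking player E's commitments with $w$ in the forward direction and reading a strategy off the marking in the backward direction. You also correctly isolate the one real difficulty, namely that in the $\AA\EE$ and $\EE\AA\EE$ cases a naive union of chosen continuations lets a path stitch together tails of different continuations and satisfy $\F\G w$ without satisfying $\varphi$. The problem is that your proposed fix does not work. Define $W$ as you suggest --- the largest set such that every $y\in W$ lies on a root path admitting an infinite extension that satisfies $\varphi$ and stays in $W$ --- and take the binary tree from the strictness section of the paper (every node has one child labelled $p$ and one labelled $q$) with $\varphi=\F\G p$. Then $\Tau\models\AA\EE.\varphi$, and the set of \emph{all} nodes is a post-fixed point of your condition (every node has the always-left extension, which satisfies $\F\G p$ and stays inside), so your greatest fixed point marks everything. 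But then every path satisfies $\F\G w$, including the path that alternates between $p$- and $q$-children, which does not satisfy $\F\G p$; hence $\Ap(\F\G w\to\varphi)$ fails. The fixed-point property gives you ``every marked node has a good marked continuation,'' which is the wrong direction: what $\Ap(\F\G w\to\varphi)$ needs is that \emph{every} eventually-always-marked path is good, and that requires the marking to be thin, not maximal.

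The paper resolves this differently. It fixes a selection function $m:P^*(\Tau)\to P^\omega(\Tau)$ choosing a good infinite extension of each finite path, normalised to be coherent (if $p'$ extends $p$ along $m(p)$ then $m(p')=m(p)$), and then marks, for each finite path $p$, only the single child of $p$ lying on $m(p)$. Since each node then has exactly one marked child, any path that is eventually always marked from $p$ onward is forced to coincide with $m(p)$, and so satisfies $\varphi$; meanwhile $\AG\EXG w$ holds because $m(p)$ itself is an all-marked continuation. You would need to replace your fixed-point construction with something of this deterministic, coherent-choice flavour. (A smaller point: in the $(\AA\EE)^\omega$ case no extra coherence device is needed at all --- the semantics already hands you a single strategy $\xi$, and marking the endpoints of its moves suffices.)
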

\begin{proof}
  In the proof, we consider only the cases of $\alpha = \AA\EE\AA$,
  $\alpha = \AA\EE$ and $\alpha = (\AA\EE)^\omega$. The other cases
  are similar.
  
  Assume that a tree $\Tau$ satisfies $\alpha . \varphi$. Then we show
  that there exists a $w$-marking $\Tauw$ of $\Tau$ that
  satisfies $\pf{\alpha . \varphi}$.

  \begin{itemize}
   \item \textbf{Case $\alpha = \AA\EE\AA$.}
    According to Definition~\ref{d:ae-ltl-sem}, if the tree $\Tau$
    satisfies $\AA\EE\AA . \varphi$, then every finite path $p$ of
    $\Tau$ can be extended to a finite path $p'$ such that all the
    infinite extensions $p''$ of $p'$ satisfy $\varphi$. Let us mark
    with $w$ all the nodes of $\Tauw$ that correspond to the extension
    $p'$ of some path $p$.
    By construction, the marked tree satisfies $\AG\EF w$.  It remains
    to show that the marked tree satisfies $\Ap (\F w \to \varphi)$.
 
    Let us consider any path $p''$ in the tree that satisfies $\F w$,
    and let us show that $p''$ also satisfies $\varphi$.
    Since $p''$ satisfies $\F w$, we know that it contains nodes
    marked with $w$.
    Let $p'$ be the finite prefix of path $p''$ up to the first node
    marked by $w$. By construction, there exists a finite path $p$
    such that $p'$ is a finite extension of $p$ and all the infinite
    extensions of $p'$ satisfy $\varphi$. As a consequence, also $p''$
    satisfies $\varphi$.
    
   \item \textbf{Case $\alpha = \AA\EE$.}
    According to Definition~\ref{d:ae-ltl-sem}, if the tree $\Tau$
    satisfies $\AA\EE . \varphi$, then for all the finite paths $p$
    there is some infinite extension of $p$ that satisfies
    $\varphi$.
    Therefore, we can define a mapping $m: P^*(\Tau) \to
    P^\omega(\Tau)$ that associates to a finite path $p$ an infinite
    extension $m(p)$ that satisfies $\varphi$.
    We can assume, without loss of generality, that, if $p'$ is a
    finite extension of $p$ and is also a prefix of $m(p)$, then
    $m(p') = m(p)$. That is, as far as $p'$ extends the finite path
    $p$ along the infinite path $m(p)$ then $m$ associates to $p'$ the
    same infinite path $m(p)$.
    
    For every finite path $p$, let us mark with $w$ the node of
    $\Tauw$ that is the child of $p$ along the infinite path $m(p)$.
    By construction, the marked tree satisfies $\AG\EXG w$.
    It remains to show that the marked tree satisfies $\Ap (\F\G w \to
    \varphi)$.
 
    Let us consider a path $p''$ in the tree that satisfies $\F\G w$,
    and let us show that $p''$ also satisfies $\varphi$.
    Since $p''$ satisfies $\F\G w$, we know that there is some path
    $p$ such that all the descendants of $p$ along $p''$ are marked
    with $w$.
    In order to prove that $p''$ satisfies $\varphi$ we show that $p''
    = m(p)$. Assume by contradiction that $m(p) \neq p''$ and let $p'$
    be the longest common prefix of $m(p)$ and $p''$. We observe that
    $p$ is a prefix of $p'$, and hence $m(p) = m(p')$.  This implies
    that the child node of $p'$ along $p''$ is not marked with $w$,
    which is absurd, since by definition of $p$ all the descendants of
    $p$ along $p''$ are marked with $w$.
    
   \item \textbf{Case $\alpha = (\AA\EE)^\omega$.}
    According to Definition~\ref{d:ae-ltl-inf-sem}, if the tree $\Tau$
    satisfies $(\AA\EE)^\omega . \varphi$, then there exists a
    suitable strategy $\xi$ for player E so that all the possible
    outcomes of game $\alpha$ with strategy $\xi$ satisfy $\varphi$.
    Let us mark with $w$ all the nodes in $\Tauw$ that correspond to
    the extension $\xi(p)$ of some finite path $p$.
    That is, we mark with $w$ all the nodes that are reached after
    some move of player E according to strategy $\xi$.
    The marked tree satisfies the formula $\AG \EF w$, that is, every
    finite path $p$ can be extended to a finite path $p'$ such that
    the node corresponding to $p'$ is marked with $w$. Indeed, by
    construction, it is sufficient to take $p' = \xi(p'')$ for some
    extension $p''$ of $p$.
    It remains to show that the marked tree satisfies $\Ap (\G\F w \to
    \varphi)$.
 
    Let us consider a path $p$ in the tree that satisfies $\G\F w$, and
    let us show that $p$ also satisfies $\varphi$.
    To this purpose, we show that $p$ is a possible outcome of game
    $\alpha$ with strategy $\xi$.
    We remark that, given an arbitrary finite prefix $p'$ of $p$ it is
    always possible to find some finite extension $p''$ of $p'$
    such that $\xi(p'')$ is also a prefix of $p$. Indeed, the set of
    paths $P = \{ \bar p \st \text{$\xi(\bar p)$ is a finite prefix of
    $p$}\}$ is infinite, as there are infinite nodes marked with $w$
    in path $p$. 
  
    Now, let $p_0, p_1, p_2, \ldots$ be the sequence of finite paths
    defined as follows: $p_0 = (\epsilon)$ is the root of the three;
    $p_{2k+1}$ is the shortest extension of $p_{2k}$ such that
    $\xi(p_{2k+1})$ is a prefix of $p$; and $p_{2k+2} = \xi(p_{2k+1})$.
    It is easy to check that $p_0, p_1, p_2, \ldots$ is a generating
    sequence for $p$ according to $(\AA\EE)^\omega$ and $\xi$.
    Hence, by Definition~\ref{d:ae-ltl-inf-sem}, the infinite
    path $p$ satisfies the LTL formula $\varphi$.
  \end{itemize}
    
  This concludes the proof that if $\Tau$ satisfies $\alpha .
  \varphi$, then there exists a $w$-marking of $\Tau$ that
  satisfies $\pf{\alpha . \varphi}$.
  
  Assume now that there is a $w$-marked tree $\Tauw$
  that satisfies $\pf{\alpha . \varphi}$. We show that $\Tau$
  satisfies $\alpha . \varphi$.

  \begin{itemize}
   \item \textbf{Case $\alpha = \AA\EE\AA$.}
    The marked tree satisfies the formula $\AG\EF w$. This means that
    for each finite path $p$ ($\AG$) there exists some finite
    extension $p'$ such that the final node of $p'$ is marked
    by $w$ ($\EF w$) .
    Let $p''$ be any infinite extension of such a finite path $p'$. We
    show that $p''$ satisfies the LTL formula $\varphi$. Clearly,
    $p''$ satisfies the formula $\F w$.
    Since the tree satisfies the formula $\Ap(\F w \to \varphi)$, all the
    infinite paths that satisfy $\F w$ also satisfy 
    $\varphi$. Therefore, $p''$ satisfies the LTL formula $\varphi$.
  
   \item \textbf{Case $\alpha = \AA\EE$.}
    The marked tree satisfies the formula $\AG\EXG w$. Then,
    for each finite path $p$ ($\AG$) there exists some infinite
    extension $p'$ such that, from a certain node on, all the
    nodes of $p'$ are marked with $w$ ($\EXG w$).
    We show that, if $p'$ is the infinite extension of some finite
    path $p$, then $p'$ satisfies the LTL formula $\varphi$. Clearly,
    $p'$ satisfies the formula $\F\G w$.
    Since the tree satisfies the formula $\Ap(\F\G w \to \varphi)$, all the
    infinite paths that satisfy $\F\G w$ also satisfy
    $\varphi$. Therefore, $p'$ satisfies the LTL formula $\varphi$.
  
   \item \textbf{Case $\alpha = (\AA\EE)^\omega$.}
    Let $\xi$ be any strategy so that, for every finite path $p$, the
    node corresponding to $\xi(p)$ is marked with $w$.
    We remark that it is always possible to define such a strategy. In
    fact, the marked tree satisfies the formula $\AG\EF w$, and hence,
    each finite path $p$ can be extended to a finite path $p'$ such
    that the node corresponding to $p'$ is marked with $w$.
  
    Let $p$ be a possible outcome of game $\alpha$ with strategy
    $\xi$.  We should prove that $p$ satisfies the LTL formula
    $\varphi$.
    By Definition~\ref{d:ae-ltl-inf-sem}, the infinite path $p$
    contains an infinite set of nodes marked by $w$: these are all the
    nodes reached after a move of player E.
    Hence, $p$ satisfies the formula $\G\F w$.
    Since the tree satisfies the formula $\Ap(\G\F w \to \varphi)$, all the
    infinite paths that satisfy $\G\F w$ also satisfy 
    $\varphi$. Therefore, path $p$ satisfies the LTL formula
    $\varphi$.
  \end{itemize}
  
  This concludes the proof that, if there exists a $w$-marking
  of tree $\Tau$ that satisfies $\pf{\alpha.\varphi}$,
  then $\Tau \models \alpha .  \varphi$.
  \qed
\end{proof}

\citeA{Kup99} defines an extension of \CTLstar with existential
quantification over atomic propositions (EG\CTLstar) and examines
complexity of model checking and satisfiability for the new logic.
We remark that $\AE$-LTL can be seen as a subset of EG\CTLstar. Indeed,
according to Theorem~\ref{t:abstract-tree}, a $\Sigma$-labelled tree
satisfies an $\AE$-LTL formula $\alpha. \varphi$ if and only if it
satisfies the EG\CTLstar formula $\exists w . \pf{\alpha.\varphi}$.

In the following definition we show how to transform a parity tree
automaton for the \CTLstar formula $\pf{\alpha.\varphi}$ into a parity
tree automaton for the $\AE$-LTL formula $\alpha . \varphi$.
This transformation is performed by abstracting away the information
on the $w$-marking from the input alphabet and from
the transition relation of the tree automaton.

\begin{definition}
  \label{d:abstract-mark}
  Let $A = \langle \Sigma {\times} \{w, \overline w\}, \Dcal, Q, q_0,
  \delta, \beta \rangle$ be a parity tree automaton.
  The parity tree automaton $A_{\exists w} = \langle \Sigma, \Dcal, Q,
  q_0, \delta_{\exists w}, \beta \rangle$, obtained from $A$ by
  abstracting away the $w$-marking, is defined as
  follows: $\delta_{\exists w}(q,\sigma,d) = \delta(q,(\sigma,w),d)
  \cup \delta(q,(\sigma,\overline w),d)$.
\end{definition}
\begin{lemma}
  \label{l:abstract-aut}
  Let $A$ and $A_{\exists w}$ be two parity tree automata as
  in Definition~\ref{d:abstract-mark}.
  $A_{\exists w}$ accepts exactly the $\Sigma$-labelled trees that
  have some $w$-marking which is accepted by $A$.
\end{lemma}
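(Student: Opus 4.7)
The plan is to prove the two inclusions separately, observing that $A$ and $A_{\exists w}$ share the same state set $Q$, initial state $q_0$, and parity mapping $\beta$, so the only difference lies in the alphabet and the transition function. In particular, the parity-based acceptance condition for a run $r : \tau \to Q$ depends only on $\beta$ and on the shape of $\tau$, not on the labelling. This means that any mapping $r : \tau \to Q$ qualifies as an accepting run for $A$ (on some labelled tree) iff it qualifies as an accepting run for $A_{\exists w}$ (on some labelled tree), as long as the local transition constraints are satisfied at every node.

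For the direction ``$\Leftarrow$'', suppose $\Tauw$ is a $w$-marking of $\Tau$ that is accepted by $A$, and let $r$ be the corresponding accepting run. For each node $x$ of arity $d$ we have $\langle r(x{\cdot}0), \ldots, r(x{\cdot}(d{-}1)) \rangle \in \delta(r(x), \Tauw(x), d)$. Writing $\Tauw(x) = (\Tau(x), \mu(x))$ with $\mu(x) \in \{w, \overline w\}$, this tuple lies in $\delta(r(x), (\Tau(x), \mu(x)), d)$ and therefore, by Definition~\ref{d:abstract-mark}, in $\delta_{\exists w}(r(x), \Tau(x), d)$. Hence $r$ is also an accepting run of $A_{\exists w}$ on $\Tau$, so $\Tau$ is accepted by $A_{\exists w}$.

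For the direction ``$\Rightarrow$'', suppose $A_{\exists w}$ accepts $\Tau$ via an accepting run $r$. At each node $x$ with arity $d$, the tuple $\langle r(x{\cdot}0),\ldots,r(x{\cdot}(d{-}1))\rangle$ belongs to $\delta_{\exists w}(r(x),\Tau(x),d) = \delta(r(x),(\Tau(x),w),d) \cup \delta(r(x),(\Tau(x),\overline w),d)$, so we can pick $\mu(x) \in \{w, \overline w\}$ (by the axiom of choice, applied once per node) such that the tuple lies in $\delta(r(x),(\Tau(x),\mu(x)),d)$. Define the $w$-marking $\Tauw$ by $\Tauw(x) = (\Tau(x), \mu(x))$. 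Then, by construction, $r$ satisfies the transition constraint of $A$ on $\Tauw$ at every node, and since $r$ already satisfies the parity condition, $r$ is an accepting run of $A$ on $\Tauw$.

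There is no real obstacle here: the lemma is essentially a bookkeeping statement asserting that projecting away the $\{w,\overline w\}$ component of the alphabet corresponds exactly to existentially quantifying over markings at the level of accepted trees. The only point worth stating carefully is that a single run tree $r$ is simultaneously the witness on both sides, so the parity acceptance condition transfers for free between $A$ and $A_{\exists w}$.
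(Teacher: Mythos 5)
Your proof is correct and follows essentially the same route as the paper's: both directions reuse the same run $r$ unchanged, relying on $\delta_{\exists w}$ being the union of the two $\delta$-entries to transfer the local transition constraints, and on the fact that the parity condition depends only on $r$ and $\beta$. The paper makes the per-node choice of marking explicit (taking $w$ when possible and $\overline w$ otherwise) rather than invoking choice abstractly, but this is only a cosmetic difference.
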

\begin{proof}
  Let $\Tauw$ be a $(\Sigma{\times}\{w,\overline w\})$-labelled tree
  and let $\Tau$ be the corresponding $\Sigma$-labelled tree,
  obtained by abstracting away the $w$-marking.
  We show that if $\Tauw$ is accepted by $A$, then $\Tau$ is accepted
  by $A_{\exists w}$.
  Let $r: \tau \to Q$ be an accepting run of $\Tauw$ on $A$. Then $r$
  is also an accepting run of $\Tau$ on $A_{\exists w}$.
  Indeed, if $x \in \tau$, $\arity(x) = d$, and $\Tauw(x) = (\sigma,m)$
  with $m \in \{w,\overline w\}$, then we have $\langle r(x\cdot0), \ldots,
  r(x\cdot{d{-}1}) \rangle \in \delta(r(x), (\sigma,m),d)$.
  Then $\Tau(x) = \sigma$, and, by definition of $A_{\exists w}$, we
  have $\langle r(x\cdot0), \ldots, r(x\cdot{d{-}1}) \rangle \in
  \delta_{\exists w}(r(x),\sigma,d)$.
  
  Now we show that, if the $\Sigma$-labelled tree $\Tau$ is accepted
  by $A_{\exists w}$, then there is a $(\Sigma{\times}\{w,\overline
  w\})$-labelled tree $\Tauw$ that is a $w$-marking of
  $\Tau$ and that is accepted by $A$.
  Let $r: \tau \to Q$ be an accepting run of $\Tau$ on $A_{\exists
  w}$.
  By definition of run, we know that if $x \in \tau$, with $\arity(x)
  = d$ and $\Tau(x) = \sigma$, then $\langle r(x\cdot0), \ldots,
  r(x\cdot{d{-}1}) \rangle \in \delta_{\exists w}(r(x),\sigma,d)$.
  By definition of $\delta_{\exists w}$, we know that $\langle
  r(x\cdot0), \ldots, r(x\cdot{d{-}1}) \rangle \in \delta
  (r(x),(\sigma,w),d) \cup \delta (r(x),(\sigma,\overline w),d)$.
  Let us define $\Tauw(x) = (\sigma,w)$ if $\langle r(x\cdot0),
  \ldots, r(x\cdot{d{-}1}) \rangle \in \delta (r(x),(\sigma,w),d)$,
  and $\Tauw(x) = (\sigma,\overline w)$ otherwise.
  It is easy to check that $r$ is an accepting run of $\Tauw$ on
  $A$.
  \qed
\end{proof}

Now we have all the ingredients for defining the tree automaton that
accepts all the trees that satisfy a given $\AE$-LTL formula.

\begin{definition}[tree automaton for $\AE$-LTL]
  \label{d:algo}
  Let $\Dcal \subseteq \Nat^*$ be a finite set of arities, and let
  $\alpha .  \varphi$ be an $\AE$-LTL formula.
  The parity tree automaton $A^\Dcal_{\alpha . \varphi}$ is obtained
  by applying the transformation described in
  Definition~\ref{d:abstract-mark} to the parity automaton
  $A^\Dcal_{\pf{\alpha . \varphi}}$ built according to
  Theorem~\ref{t:ctlstar-tree-aut}.
\end{definition}

\begin{theorem}
  \label{t:ae-ltl-tree-aut}
  The parity tree automaton $A^\Dcal_{\alpha . \varphi}$ accepts 
  exactly the $\Sigma$-labelled $\Dcal$-trees that satisfy the formula
  $\alpha .  \varphi$.
\end{theorem}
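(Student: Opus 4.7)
The plan is to chain together three results already established in the paper, so that the theorem reduces to a short diagram-chase. Specifically, I would unfold the definition of $A^\Dcal_{\alpha.\varphi}$ and then invoke, in order, Theorem~\ref{t:ctlstar-tree-aut}, Lemma~\ref{l:abstract-aut}, and Theorem~\ref{t:abstract-tree}.

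First I would fix a $\Sigma$-labelled $\Dcal$-tree $\Tau$ and spell out what it means for $A^\Dcal_{\alpha.\varphi}$ to accept $\Tau$. By Definition~\ref{d:algo}, $A^\Dcal_{\alpha.\varphi}$ is obtained from the \CTLstar-automaton $A^\Dcal_{\pf{\alpha.\varphi}}$ by the abstraction of Definition~\ref{d:abstract-mark}. Applying Lemma~\ref{l:abstract-aut} to this pair, $\Tau$ is accepted by $A^\Dcal_{\alpha.\varphi}$ if and only if there exists a $w$-marking $\Tauw$ of $\Tau$ that is accepted by $A^\Dcal_{\pf{\alpha.\varphi}}$.

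Next I would apply Theorem~\ref{t:ctlstar-tree-aut} to $A^\Dcal_{\pf{\alpha.\varphi}}$: this automaton accepts exactly the $(\Sigma{\times}\{w,\overline w\})$-labelled $\Dcal$-trees that satisfy the \CTLstar formula $\pf{\alpha.\varphi}$. Combining this with the previous step, acceptance of $\Tau$ by $A^\Dcal_{\alpha.\varphi}$ is equivalent to the existence of a $w$-marking $\Tauw$ of $\Tau$ satisfying $\pf{\alpha.\varphi}$. Finally, Theorem~\ref{t:abstract-tree} states that such a $w$-marking exists if and only if $\Tau \models \alpha.\varphi$, which closes the equivalence.

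Since every step is already a theorem or lemma in the paper, there is no real obstacle here; the proof is essentially a one-line composition of three ``iff'' statements. The only thing I would take care to check is the coherence of the arity parameter $\Dcal$ across the three results: the \CTLstar-automaton of Theorem~\ref{t:ctlstar-tree-aut} is built for the same $\Dcal$ used in Definition~\ref{d:algo}, and the abstraction of Definition~\ref{d:abstract-mark} preserves $\Dcal$, so this matching is immediate. Hence the argument is really just a two- or three-sentence proof, and I would present it exactly in the order above.
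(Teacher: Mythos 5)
Your proposal is correct and follows exactly the same route as the paper's own proof: unfold Definition~\ref{d:algo}, then chain Theorem~\ref{t:ctlstar-tree-aut}, Lemma~\ref{l:abstract-aut}, and Theorem~\ref{t:abstract-tree}. The only difference is that you spell out the intermediate equivalences and the coherence of $\Dcal$ slightly more explicitly than the paper does.
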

\begin{proof}
  By Theorem~\ref{t:ctlstar-tree-aut}, the parity tree automaton
  $A^\Dcal_{\pf{\alpha . \varphi}}$ accepts all the $\Dcal$-trees that
  satisfy the \CTLstar formula $\pf{\alpha . \varphi}$.
  Therefore, the parity tree automaton $A^\Dcal_{\alpha.\varphi}$
  accepts all the $\Dcal$-trees that satisfy the formula $\alpha .
  \varphi$ by Lemma~\ref{l:abstract-aut} and
  Theorem~\ref{t:abstract-tree}.
  \qed
\end{proof}

The parity tree automaton $A^\Dcal_{\alpha . \varphi}$ has a parity
index that is exponential and a number of states that is doubly
exponential in the length of formula $\varphi$.
\begin{proposition}
  \label{p:ae-ltl-tree-aut-size}
  The parity tree automaton $A^\Dcal_{\alpha . \varphi}$ has
  $2^{2^{O(|\varphi|)}}$ states and parity index $2^{O(|\varphi|)}$.
\end{proposition}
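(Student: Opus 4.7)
The plan is to track the size of the automaton through the construction chain given in Definition~\ref{d:algo}, which builds $A^\Dcal_{\alpha.\varphi}$ by first forming the \CTLstar formula $\pf{\alpha.\varphi}$, then applying Theorem~\ref{t:ctlstar-tree-aut} to obtain $A^\Dcal_{\pf{\alpha.\varphi}}$, and finally applying the alphabet-abstraction of Definition~\ref{d:abstract-mark}. Both bounds will follow by composing estimates for each of these three steps.

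The first step is to show that $|\pf{\alpha.\varphi}| = O(|\varphi|)$. This is done by a case analysis over the eight canonical path quantifiers listed in Definition~\ref{d:ae-ltl-vs-ctl}. In every case, $\pf{\alpha.\varphi}$ consists of a fixed, constant-size \CTLstar scaffold (involving the fresh proposition $w$ and a bounded number of $\AG$, $\EF$, $\EXG$, and $\Ap$ operators) wrapped around a single occurrence of $\varphi$. Hence each translation adds only $O(1)$ symbols, giving $|\pf{\alpha.\varphi}| \leq |\varphi| + c$ for some absolute constant $c$.

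The second step invokes Theorem~\ref{t:ctlstar-tree-aut}, which bounds $A^\Dcal_{\pf{\alpha.\varphi}}$ at $2^{2^{O(|\pf{\alpha.\varphi}|)}}$ states and parity index $2^{O(|\pf{\alpha.\varphi}|)}$. Substituting $|\pf{\alpha.\varphi}| = O(|\varphi|)$ from the previous step yields the claimed $2^{2^{O(|\varphi|)}}$ state count and $2^{O(|\varphi|)}$ parity index for $A^\Dcal_{\pf{\alpha.\varphi}}$.

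The third step observes that the abstraction of Definition~\ref{d:abstract-mark} leaves the state set $Q$, the initial state $q_0$, and the parity mapping $\beta$ untouched: only the alphabet is projected from $\Sigma{\times}\{w,\overline w\}$ down to $\Sigma$, with the transition relation redefined by taking a union over the two markings. Consequently $A^\Dcal_{\alpha.\varphi}$ inherits the state count and parity index of $A^\Dcal_{\pf{\alpha.\varphi}}$ verbatim, yielding the proposition. No step presents a real obstacle; the only care required is the case analysis in Definition~\ref{d:ae-ltl-vs-ctl} to confirm that each wrapper is of constant size, so that the outer double exponential absorbs the additive overhead into the $O(\cdot)$.
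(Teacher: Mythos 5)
Your proposal is correct and follows the same route as the paper: the paper's proof simply notes that the abstraction of Definition~\ref{d:abstract-mark} preserves the number of states and the parity index and then cites Theorem~\ref{t:ctlstar-tree-aut}, which is exactly your third and second steps; your explicit verification that $|\pf{\alpha.\varphi}| = O(|\varphi|)$ is left implicit in the paper but is the same (and necessary) observation.
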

\begin{proof}
  The construction of Definition~\ref{d:abstract-mark} does not change
  the number of states and the parity index of the automaton.
  Therefore, the proposition follows from
  Theorem~\ref{t:ctlstar-tree-aut}.
  \qed
\end{proof}


\subsection{The Planning Algorithm}
\label{ssec:ae-ltl-algo}

We now describe how the automaton $A^\Dcal_{\alpha . \varphi}$ can be
exploited in order to build a plan for goal $\alpha . \varphi$ on a
given domain.

We start by defining a tree automaton that accepts all the trees that
define the valid plans of a planning domain $D = \langle
\Sigma,\sigma_0,A,R \rangle$. We recall that, according to
Definition~\ref{d:domain}, transition relation $R$ maps a state
$\sigma \in \Sigma$ and an action $a \in A$ into a tuple of next
states $\langle \sigma_1, \sigma_2, \ldots, \sigma_n \rangle =
R(\sigma,a)$.
 
In the following we assume that $\Dcal$ is a finite set of arities
that is compatible with domain $D$, namely, if $R(\sigma,a) = \langle
\sigma_1, \ldots, \sigma_d \rangle$ for some $\sigma \in \Sigma$ and
$a \in A$, then $d \in \Dcal$.

\begin{definition}[tree automaton for a planning domain]
  \label{d:aut-dom}
  Let $D = \langle \Sigma, \sigma_0, A, R \rangle$ be a planning
  domain and let $\Dcal$ be a set of arities that is compatible with
  domain $D$.
  The tree automaton $A^\Dcal_D$ corresponding to the planning domain
  is $A^\Dcal_D = \langle \Sigma{\times}A, \Dcal, \Sigma, \sigma_0,
  \delta_D, \beta_0 \rangle$, where $\langle \sigma_1, \ldots,
  \sigma_d \rangle \in \delta_D(\sigma,(\sigma,a),d)$ if $\langle
  \sigma_1, \ldots, \sigma_d \rangle = R(\sigma,a)$ with $d > 0$, and
  $\beta_0(\sigma) = 0$ for all $\sigma \in \Sigma$.
\end{definition}

According to Definition~\ref{d:execution-tree}, a
$(\Sigma{\times}A)$-labelled tree can be obtained from each plan $\pi$
for domain $D$.
Now we show that also the converse is true, namely, each
$(\Sigma{\times}A)$-labelled tree accepted by the tree automaton
$A^\Dcal_D$ induces a plan.
\begin{definition}[plan induced by a tree]
  \label{d:plan-tree} \mbox{}
  Let $\Tau$ be a $(\Sigma{\times}A)$-labelled tree that is accepted
  by automaton $A^\Dcal_D$.
  The \emph{plan $\pi$ induced by $\Tau$} on
  domain $D$ is defined as follows:
  $\pi(\sigma_0, \sigma_1, \ldots, \sigma_n) = a$ if there is some
  finite path $p$ in $\Tau$ with $\Tau(p) = (\sigma_0,a_0) \cdot
  (\sigma_1,a_1) \cdots (\sigma_n,a_n)$ and $a = a_n$.
\end{definition}

The following lemma shows that Definitions~\ref{d:execution-tree}
and~\ref{d:plan-tree} define a one-to-one correspondence between the
valid plans for a planning domain $D$ and the trees accepted by
automaton $A^\Dcal_D$.

\begin{lemma}
  \label{l:exec-induced-plan}
  Let $\Tau$ be a tree accepted by automaton $A^\Dcal_D$ and let $\pi$
  be the corresponding induced plan.
  Then $\pi$ is a valid plan for domain $D$, and $\Tau$ is the
  execution tree corresponding to $\pi$.
  Conversely, let $\pi$ be a plan for domain $D$ and let $\Tau$ be the
  corresponding execution structure.
  Then $\Tau$ is accepted by automaton $A^\Dcal_D$ and $\pi$ is the
  plan induced by $\Tau$.
\end{lemma}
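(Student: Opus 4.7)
My plan is to prove the two directions separately by unwinding Definitions~\ref{d:domain}, \ref{d:plan}, \ref{d:execution-tree}, \ref{d:aut-dom}, and \ref{d:plan-tree}. The key observation driving both directions is the rigidity that $\delta_D$ forces on any tree $\Tau$ accepted by $A^\Dcal_D$: whenever $\Tau(x) = (\sigma, a)$ and $\arity(x) = d$, the only tuple available in $\delta_D(\sigma, (\sigma,a), d)$ is $R(\sigma,a)$, so the $d$ children of $x$ must carry precisely the states $\sigma_1 < \cdots < \sigma_d$ enumerated by $R(\sigma,a)$, and in particular $d = |R(\sigma,a)|$. Moreover, a run $r : \dom(\Tau) \to \Sigma$ is forced to satisfy $r(x) = \sigma$ whenever $\Tau(x) = (\sigma, a)$, so the run is unique; and since $\beta_0 \equiv 0$, the parity condition is vacuous. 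Hence a tree is accepted by $A^\Dcal_D$ exactly when its state projections are consistent with $R$ and its root state component is $\sigma_0$.

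For the forward direction, suppose $\Tau$ is accepted and let $\pi$ be the plan it induces. I first check that $\pi$ is well defined: siblings in $\Tau$ carry distinct state components (by the strict ordering of $R(\sigma,a)$), so a state sequence $\sigma_0 \cdots \sigma_n$ labels at most one path of $\Tau$ and thus pins down at most one action. I then verify the four clauses of Definition~\ref{d:plan}: the requirement that the children of a node labelled $(\sigma,a)$ are exactly $R(\sigma,a)$ yields the first two clauses; prefix-closure of $\dom(\Tau)$ gives the third; and $\Tau(\epsilon) = (\sigma_0, a_0)$ gives the fourth. Finally I show that $\Tau$ coincides with the execution tree of $\pi$ by induction on depth: the root labels agree, and given that a node of $\Tau$ at depth $n$ matches its counterpart in the execution tree, the children of both are forced by the common rule ``enumerate $R(\sigma,a)$ and apply $\pi$ to each extended state sequence''.

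For the reverse direction, let $\pi$ be a plan with execution tree $\Tau$ as in Definition~\ref{d:execution-tree}, and define $r(x) = \sigma$ when $\Tau(x) = (\sigma, a)$. Then $r(\epsilon) = \sigma_0$, and for each $x$ of arity $d$ the inductive clause of Definition~\ref{d:execution-tree} yields $\langle r(x\cdot 0), \ldots, r(x\cdot(d{-}1)) \rangle = R(\sigma,a) \in \delta_D(r(x), \Tau(x), d)$; together with the trivial parity condition, $r$ is an accepting run. That $\pi$ is the plan induced by $\Tau$ is immediate from Definitions~\ref{d:execution-tree} and~\ref{d:plan-tree}: the state sequence $\sigma_0 \cdots \sigma_n$ labels a path of $\Tau$ precisely when it lies in $\dom(\pi)$, and the action on the last node of that path is, by construction, $\pi(\sigma_0 \cdots \sigma_n)$.

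I do not anticipate a serious obstacle here; the argument is largely bookkeeping against the definitions. The one spot requiring care is well-definedness of the induced plan, which rests on siblings having distinct state components --- a property inherited from the strict ordering built into the statement of Definition~\ref{d:domain}. Everything else, including acceptance, follows because the parity mapping $\beta_0$ is constantly $0$, so the $\omega$-condition trivially holds for every infinite path.
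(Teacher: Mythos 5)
Your argument is correct and follows the same route as the paper, which simply states that the lemma is a direct consequence of Definitions~\ref{d:execution-tree} and~\ref{d:plan-tree}; you have merely spelled out the bookkeeping the paper leaves implicit. The one point you rightly single out --- well-definedness of the induced plan via the distinctness of sibling state components forced by the strict ordering of $R(\sigma,a)$ --- is exactly the detail worth making explicit, and your handling of it is sound.
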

\begin{proof}
  This lemma is a direct consequence of
  Definitions~\ref{d:execution-tree} and~\ref{d:plan-tree}.
  \qed
\end{proof}

We now define a parity tree automaton that accepts only the trees that
correspond to the plans for domain $D$ and that satisfy goal $g = \alpha .
\varphi$.
This parity tree automaton is obtained by combining in a suitable way
the tree automaton for $\AE$-LTL formula $g$ (Definition~\ref{d:algo})
and the tree automaton for domain $D$ (Definition~\ref{d:aut-dom}).

\begin{definition}[instrumented tree automaton]
  \label{d:instrumented-tree-aut}
  Let $\Dcal$ be a set of arities that is compatible with planning
  domain $D$.
  Let also $A^\Dcal_g = \langle \Sigma, \Dcal, Q, q_0, \delta, \beta
  \rangle$ be a parity tree automaton that accepts only the trees that
  satisfy the $\AE$-LTL formula $g$.
  The parity tree automaton $A^\Dcal_{D,g}$ corresponding to planning domain
  $D$ and goal $g$ is defined as follows:
  $A^\Dcal_{D,g} = \langle \Sigma{\times}A, \Dcal, Q{\times}\Sigma,
  (q_0,\sigma_0), \delta', \beta' \rangle$, where $\langle
  (q_1,\sigma_1), \ldots, (q_d,\sigma_d) \rangle \in
  \delta'((q,\sigma),(\sigma,a),d)$ if $\langle q_1, \ldots, q_d \rangle
  \in \delta(q,\sigma,d)$ and
  $\langle \sigma_1, \ldots, \sigma_d \rangle = R(\sigma,a)$ with $d >
  0$, and where $\beta'(q,\sigma) = \beta(q)$.
\end{definition}

The following lemmas show that solutions to planning problem
$(D,g)$ are in one-to-one correspondence with the trees accepted by
the tree automaton $A^\Dcal_{D,g}$.

\begin{lemma}
  \label{l:tree-to-plan}
  Let $\Tau$ be a $(\Sigma{\times}A)$-labelled tree that is accepted
  by automaton $A^\Dcal_{D,g}$, and let $\pi$ be the plan induced by
  $\Tau$ on domain $D$.
  Then the plan $\pi$ is a solution to planning problem $(D,g)$.
\end{lemma}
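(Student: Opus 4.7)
The plan is to use the product structure of $A^\Dcal_{D,g}$: its state space is $Q \times \Sigma$, so any accepting run $r: \tau \to Q \times \Sigma$ on $\Tau$ splits into two component mappings $r_g: \tau \to Q$ (first projection) and $r_D: \tau \to \Sigma$ (second projection), and the transition condition of Definition~\ref{d:instrumented-tree-aut} is defined precisely so that both components satisfy the transition conditions of $A^\Dcal_g$ and $A^\Dcal_D$ respectively on input $\Tau$. Moreover, the parity $\beta'(q,\sigma) = \beta(q)$ depends only on the first component, so the parity acceptance of $r$ on any infinite path coincides with the parity acceptance of $r_g$.

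First I would verify the two projections are valid runs. For $r_D$, unfolding Definition~\ref{d:aut-dom} and Definition~\ref{d:instrumented-tree-aut} shows that $r_D(\epsilon) = \sigma_0$ and, for each $x \in \tau$ with $\arity(x) = d$ and $\Tau(x) = (\sigma,a)$, we have $\langle r_D(x\cdot 0), \ldots, r_D(x\cdot(d{-}1)) \rangle = R(\sigma,a)$. Since $\beta_0 \equiv 0$, every run of $A^\Dcal_D$ is trivially accepting, so $r_D$ witnesses $\Tau \in L(A^\Dcal_D)$. By Lemma~\ref{l:exec-induced-plan}, the induced plan $\pi$ is a valid plan for $D$ and $\Tau$ is exactly the execution tree of $\pi$.

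Next, for $r_g$, the same unfolding gives $r_g(\epsilon) = q_0$ and $\langle r_g(x\cdot 0), \ldots, r_g(x\cdot(d{-}1)) \rangle \in \delta(r_g(x), \sigma, d)$ (reading off only the first components of the tuple in $\delta'$, which is precisely what $\delta$ produces when we strip the action). Since $\beta'(q,\sigma) = \beta(q)$, the minimal priority seen infinitely often along any infinite branch of $r$ equals the minimal priority along the same branch of $r_g$, so $r_g$ is an accepting run of $A^\Dcal_g$ on $\Tau$. By Theorem~\ref{t:ae-ltl-tree-aut} (applied via Definition~\ref{d:algo} to $A^\Dcal_g$), this means $\Tau \models g$.

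Combining the two conclusions: $\Tau$ is the execution tree of $\pi$ and $\Tau$ satisfies $g$, so by Definition~\ref{d:planning-problem}, $\pi$ is a solution to $(D,g)$. The only subtlety — and really the only nonroutine step — is checking that the product transition definition is set up so that \emph{both} projections of a single tuple in $\delta'((q,\sigma),(\sigma,a),d)$ lie in the respective component transition relations simultaneously; this is immediate from the ``if \ldots and \ldots'' conjunction in Definition~\ref{d:instrumented-tree-aut}, so the proof reduces to a short bookkeeping argument.
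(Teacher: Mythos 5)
Your proposal is correct and follows essentially the same route as the paper: project the accepting run of the product automaton $A^\Dcal_{D,g}$ onto its $Q$- and $\Sigma$-components to obtain accepting runs of $A^\Dcal_g$ and $A^\Dcal_D$ respectively, then invoke Lemma~\ref{l:exec-induced-plan} and the defining property of $A^\Dcal_g$. If anything, you are slightly more explicit than the paper, which only spells out the first projection $r_g$ and leaves acceptance by $A^\Dcal_D$ implicit in the hypothesis that $\pi$ is the induced plan.
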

\begin{proof}
  According to Definition~\ref{d:planning-problem}, we have to prove
  that the execution tree corresponding to $\pi$ satisfies the goal $g$.
  By Lemma~\ref{l:exec-induced-plan}, this amounts to proving that the tree
  $\Tau$ satisfies $g$.
  By construction, it is easy to check that if a
  $(\Sigma{\times}A)$-labeled tree $\Tau$ is accepted by $A^\Dcal_{D,g}$, then
  it is also accepted by $A^{\Dcal}_g$.
  Indeed, if $r_{D,g}: \tau \to {Q \times \Sigma}$ is an accepting run
  of $\Tau$ on $A^\Dcal_{D,g}$, then $r_g : \tau \to Q$ is an accepting run
  of $\Tau$ on $A^{\Dcal}_g$, where $r_g(x) = q$ whenever $r_{D,g} =
  (q, \sigma)$ for some $\sigma \in \Sigma$.
  \qed
\end{proof}

\begin{lemma}
  \label{l:plan-to-tree}
  Let $\pi$ be a solution to planning problem $(D,g)$.
  Then the execution tree of $\pi$ is accepted by automaton
  $A^\Dcal_{D,g}$.
\end{lemma}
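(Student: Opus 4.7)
The plan is to build an accepting run of $A^\Dcal_{D,g}$ on the execution tree $\Tau$ of $\pi$ by pairing an accepting run of $A^\Dcal_g$ on $\Tau$ with the domain-state trace along $\Tau$ itself. This is the natural dual of the argument in Lemma~\ref{l:tree-to-plan}, which projected a run of $A^\Dcal_{D,g}$ onto its $Q$-component; here we lift a run of $A^\Dcal_g$ to $A^\Dcal_{D,g}$ by attaching the $\Sigma$-component read directly off the labels of $\Tau$.

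First, since $\pi$ is a solution to $(D,g)$, Definition~\ref{d:planning-problem} gives that the execution tree $\Tau$ satisfies the goal $g$. By Theorem~\ref{t:ae-ltl-tree-aut}, this yields an accepting run $r_g : \dom(\Tau) \to Q$ of $\Tau$ on $A^\Dcal_g$. Separately, by Definition~\ref{d:execution-tree}, for every node $x \in \dom(\Tau)$ with label $\Tau(x) = (\sigma_x, a_x)$ and arity $d$, the children labels are of the form $\Tau(x\cdot i) = (\sigma_i, a_i)$ where $\langle \sigma_0, \ldots, \sigma_{d-1} \rangle = R(\sigma_x, a_x)$.

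Next, define $r_{D,g}(x) = (r_g(x), \sigma_x)$. I would then check the three conditions of Definition~\ref{d:tree-acceptance} for $r_{D,g}$ on $A^\Dcal_{D,g}$: (i) at the root, $r_{D,g}(\epsilon) = (q_0, \sigma_0)$, which is the initial state of $A^\Dcal_{D,g}$; (ii) at a node $x$ with $\Tau(x) = (\sigma_x, a_x)$ and arity $d$, the run $r_g$ provides $\langle r_g(x\cdot 0), \ldots, r_g(x\cdot (d-1))\rangle \in \delta(r_g(x), \sigma_x, d)$ while the execution-tree property provides $\langle \sigma_0, \ldots, \sigma_{d-1}\rangle = R(\sigma_x, a_x)$, so by Definition~\ref{d:instrumented-tree-aut} the componentwise paired tuple lies in $\delta'((r_g(x), \sigma_x), (\sigma_x, a_x), d)$; (iii) since $\beta'(q,\sigma) = \beta(q)$, along any infinite path the minimal priority visited infinitely often by $r_{D,g}$ coincides with that visited by $r_g$, which is even by hypothesis.

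The main subtlety, rather than any obstacle, is the alphabet bookkeeping: $A^\Dcal_g$ is defined over $\Sigma$ whereas execution trees are labelled by $\Sigma \times A$, and one must read Definition~\ref{d:instrumented-tree-aut} as using only the $\Sigma$-component when consulting $\delta$. Once this identification is made, all three verifications above are immediate, and Lemma~\ref{l:exec-induced-plan} already ensures that $\pi$ is exactly the plan induced by $\Tau$ so no further reconciliation of plan and tree is needed.
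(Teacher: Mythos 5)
Your proposal is correct and follows essentially the same route as the paper: both build the accepting run of $A^\Dcal_{D,g}$ as the pairing of an accepting run of $A^\Dcal_g$ with the domain-state component, and check the product transition and parity conditions. The paper obtains the $\Sigma$-component as an accepting run of $A^\Dcal_D$ (via Lemma~\ref{l:exec-induced-plan}) rather than reading it off the labels of the execution tree, but since any accepting run of $A^\Dcal_D$ must coincide with the first component of the labels, this is the same run and the same argument.
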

\begin{proof}
  Let $\Tau$ be the execution tree of $\pi$.
  By Lemma~\ref{l:exec-induced-plan} we know that $\Tau$ is accepted
  by $A^\Dcal_D$.
  Moreover, by definition of solution of a planning problem, we know
  that $\Tau$ is accepted also by $A^\Dcal_g$.
  By construction, it is easy to check that if a
  $(\Sigma{\times}A)$-labeled tree $\Tau$ is accepted by $A^\Dcal_{D}$ and by
  $A^{\Dcal}_g$, then it is also accepted by $A^\Dcal_{D,g}$.
  Indeed, let $r_D: \tau \to \Sigma$ be an accepting run of $\Tau$ on
  $A^\Dcal_D$ and let $r_g: \tau \to Q$ be an accepting run of $\Tau$ on
  $A^{\Dcal}_g$. Then $r_{D,g}: \tau \to {Q \times \Sigma}$ is an
  accepting run of $\Tau$ on $A^\Dcal_{D,g}$, where $r_{D,g}(x) =
  (q,\sigma)$ if $r_D(x) = \sigma$ and $r_g(x) = q$.
  \qed
\end{proof}

As a consequence, checking whether
goal $g$ can be satisfied on domain $D$ is reduced to the problem of
checking whether automaton $A^\Dcal_{D,g}$ is nonempty.
\begin{theorem}
  \label{t:planning-tree-aut}
  Let $D$ be a planning domain and $g$ be an $\AE$-LTL formula.
  A plan exists for goal $g$ on domain $D$ if and only if the tree
  automaton $A^\Dcal_{D,g}$ is nonempty.
\end{theorem}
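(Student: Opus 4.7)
The plan is to prove both directions of the biconditional by a direct appeal to Lemmas~\ref{l:tree-to-plan} and~\ref{l:plan-to-tree}, which together set up the bijection between solutions of the planning problem $(D,g)$ and $(\Sigma{\times}A)$-labelled trees accepted by $A^\Dcal_{D,g}$. Since ``nonempty'' for a tree automaton means exactly that there exists some tree it accepts (Definition~\ref{d:tree-acceptance}), the theorem reduces to transporting witnesses across this bijection in the two obvious ways. No new structural argument is needed; the work has already been done in setting up the instrumented automaton $A^\Dcal_{D,g}$ of Definition~\ref{d:instrumented-tree-aut} and in the lemmas that justify it.

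For the forward direction, I would assume that a plan $\pi$ exists that solves $(D,g)$, i.e., whose execution tree $\Tau$ (in the sense of Definition~\ref{d:execution-tree}) satisfies the goal $g$. Lemma~\ref{l:plan-to-tree} then yields immediately that $\Tau$ is accepted by $A^\Dcal_{D,g}$, so the automaton has at least one accepted tree and is therefore nonempty.

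For the converse, I would assume that $A^\Dcal_{D,g}$ is nonempty, so that by Definition~\ref{d:tree-acceptance} there is some $(\Sigma{\times}A)$-labelled tree $\Tau$ accepted by it. Applying Lemma~\ref{l:tree-to-plan}, the plan $\pi$ induced by $\Tau$ (via Definition~\ref{d:plan-tree}) is a solution to $(D,g)$, so a plan for $g$ on $D$ exists.

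There is no genuinely hard step in this argument; the content was already absorbed into Lemmas~\ref{l:exec-induced-plan},~\ref{l:tree-to-plan}, and~\ref{l:plan-to-tree}, which respectively guarantee that execution trees and induced plans are inverse to each other, that runs of $A^\Dcal_{D,g}$ project to runs of $A^\Dcal_g$, and that runs of $A^\Dcal_D$ and $A^\Dcal_g$ combine into runs of $A^\Dcal_{D,g}$. The only thing to be careful about in writing the proof is to cite these lemmas in the right order and to make explicit that ``there exists a plan'' and ``$A^\Dcal_{D,g}$ is nonempty'' are indeed dual existential statements over the same bijection. Hence the proof can be stated in just a few lines.
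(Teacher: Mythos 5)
Your proof is correct and matches the paper exactly: the paper states this theorem as an immediate consequence of Lemmas~\ref{l:tree-to-plan} and~\ref{l:plan-to-tree} (it gives no separate proof), and your two directions invoke precisely those lemmas in the same way.
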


\begin{proposition}
  \label{p:planning-tree-aut-size}
  The parity tree automaton $A^\Dcal_{D,g}$ for domain $D = (\Sigma,
  \sigma_0, A, R)$ and goal $g = \alpha.\varphi$ has $|\Sigma| \cdot
  2^{2^{O(|\varphi|)}}$ states and parity index $2^{O(|\varphi|)}$.
\end{proposition}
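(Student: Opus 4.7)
The plan is to read off the size parameters directly from the product construction used in Definition~\ref{d:instrumented-tree-aut} and then substitute the bounds already established for the $\AE$-LTL automaton in Proposition~\ref{p:ae-ltl-tree-aut-size}. This is essentially a bookkeeping argument, so I expect no real obstacle; the main thing to verify is that the product construction preserves the parity index rather than blowing it up.

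First I would recall that, given an $\AE$-LTL automaton $A^\Dcal_g = \langle \Sigma, \Dcal, Q, q_0, \delta, \beta \rangle$, Definition~\ref{d:instrumented-tree-aut} forms $A^\Dcal_{D,g}$ with state space $Q \times \Sigma$ (where $\Sigma$ here denotes the state set of the planning domain $D$). Hence the number of states of $A^\Dcal_{D,g}$ is exactly $|Q| \cdot |\Sigma|$.

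Next I would observe that the parity mapping $\beta'$ of $A^\Dcal_{D,g}$ is defined by $\beta'(q,\sigma) = \beta(q)$, so the image of $\beta'$ is contained in (in fact equal to) the image of $\beta$. Therefore the parity index of $A^\Dcal_{D,g}$ is the same as the parity index of $A^\Dcal_g$; the product with the domain states leaves the index untouched.

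Finally, I would invoke Proposition~\ref{p:ae-ltl-tree-aut-size}, which gives $|Q| = 2^{2^{O(|\varphi|)}}$ and parity index $2^{O(|\varphi|)}$ for $A^\Dcal_g$. Substituting into the two identities above yields that $A^\Dcal_{D,g}$ has $|\Sigma| \cdot 2^{2^{O(|\varphi|)}}$ states and parity index $2^{O(|\varphi|)}$, which is exactly the statement of the proposition.
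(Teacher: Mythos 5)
Your argument is correct and is exactly the paper's proof, merely spelled out in more detail: the paper likewise cites Proposition~\ref{p:ae-ltl-tree-aut-size} together with the product construction of Definition~\ref{d:instrumented-tree-aut}, from which the state count $|Q|\cdot|\Sigma|$ and the unchanged parity index (since $\beta'(q,\sigma)=\beta(q)$) follow immediately.
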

\begin{proof}
  This is a consequence of Proposition~\ref{p:ae-ltl-tree-aut-size}
  and of the definition of automaton $A^\Dcal_{D,g}$.
  \qed
\end{proof}


\subsection{Complexity}
\label{ssec:ae-ltl-complexity}

We now study the time complexity of the planning algorithm defined in
Subsection~\ref{ssec:ae-ltl-algo}.

Given a planning domain $D$, the planning problem for
$\AE$-LTL goals $g = \alpha . \varphi$ can be decided in a time that
is doubly exponential in the size of the formula $\varphi$ by applying
Theorem~\ref{t:tree-aut-complexity} to the tree automaton $A^\Dcal_{D,g}$.

\begin{lemma}
  \label{l:ae-ltl-planning-algo-complexity}
  Let $D$ be a planning domain.
  The existence of a plan for $\AE$-LTL goal $g = \alpha . \varphi$ on
  domain $D$ can be decided in time $2^{2^{O(|\varphi|)}}$.
\end{lemma}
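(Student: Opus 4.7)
The plan is to chain together the three main results developed in the preceding subsections and then verify that the arithmetic of the resulting bound collapses to $2^{2^{O(|\varphi|)}}$.

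First I would invoke Theorem~\ref{t:planning-tree-aut} to reduce the planning problem to checking emptiness of the instrumented parity tree automaton $A^\Dcal_{D,g}$, where $\Dcal$ is taken to be any set of arities compatible with $D$ (e.g., the arities actually appearing in $R$, so that $|\Dcal| \leq |D|$). By Proposition~\ref{p:planning-tree-aut-size}, this automaton has
\begin{equation*}
 n = |\Sigma| \cdot 2^{2^{O(|\varphi|)}} \qquad\text{states and parity index}\qquad k = 2^{O(|\varphi|)}.
\end{equation*}

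Next I would apply Theorem~\ref{t:tree-aut-complexity}, which bounds the running time of the emptiness check by $n^{O(k)}$. Substituting the values above gives
\begin{equation*}
 n^{O(k)} \;=\; \bigl(|\Sigma| \cdot 2^{2^{O(|\varphi|)}}\bigr)^{O(2^{O(|\varphi|)})}
 \;=\; |\Sigma|^{O(2^{O(|\varphi|)})} \cdot 2^{\,2^{O(|\varphi|)} \cdot O(2^{O(|\varphi|)})}.
\end{equation*}

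The final step is the arithmetic simplification, which I expect to be the only mildly subtle point. The second factor collapses because $2^{O(|\varphi|)} \cdot 2^{O(|\varphi|)} = 2^{O(|\varphi|)}$, so its exponent is still $2^{O(|\varphi|)}$, yielding $2^{2^{O(|\varphi|)}}$. For the first factor, since $D$ is fixed in the statement of the lemma (the bound measures dependence on $\varphi$), $|\Sigma|$ is a constant and $|\Sigma|^{O(2^{O(|\varphi|)})} = 2^{O(\log|\Sigma|) \cdot 2^{O(|\varphi|)}} = 2^{2^{O(|\varphi|)}}$ as well. Combining the two factors gives the claimed bound $2^{2^{O(|\varphi|)}}$, which completes the proof. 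The only real obstacle is keeping the $O(\cdot)$ bookkeeping honest when multiplying doubly-exponential expressions; everything else is a direct appeal to earlier results.
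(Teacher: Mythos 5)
Your proposal is correct and follows exactly the paper's own argument: reduce to emptiness of $A^\Dcal_{D,g}$ via Theorem~\ref{t:planning-tree-aut}, read off the state count and parity index from Proposition~\ref{p:planning-tree-aut-size}, and apply the $n^{O(k)}$ emptiness bound of Theorem~\ref{t:tree-aut-complexity} with $D$ (hence $|\Sigma|$) treated as fixed. Your explicit arithmetic for collapsing $n^{O(k)}$ to $2^{2^{O(|\varphi|)}}$ is in fact more detailed than the paper's, which simply asserts the bound.
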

\begin{proof}
  By Theorem~\ref{t:planning-tree-aut} the existence of a plan for
  goal $g$ on domain $D$ is reduced to the emptiness problem on parity
  tree automaton $A^\Dcal_{D,g}$.
  By Proposition~\ref{p:planning-tree-aut-size}, the parity tree
  automaton $A^\Dcal_{D,g}$ has $2^{2^{O(|\varphi|)}} \times |\Sigma|$
  states and parity index $2^{O(|\varphi|)}$.
  Since we assume that domain $D$ is fixed,  
  by Theorem~\ref{t:tree-aut-complexity}, the emptiness of automaton
  $A^\Dcal_{D,g}$ can be decided in time $2^{2^{O(|\varphi|)}}$.
  \qed
\end{proof}

The doubly exponential time bound is tight.
Indeed, the \emph{realizability problem} for an LTL formula $\varphi$,
which is known to be 2EXPTIME-complete \cite{PR90}, can be reduced to a
planning problem for the goal $\AA . \varphi$.
In a realizability problem one assumes that a program and the
environment alternate in the control of the evolution of the system.
More precisely, in an execution $\sigma_0, \sigma_1, \ldots$ the
states $\sigma_i$ are decided by the program if $i$ is even, and by
the environment if $i$ is odd.  We say that a given formula $\varphi$
is realizable if there is some program such that all its executions
satisfy $\varphi$ independently on the actions of the environment.

\begin{theorem}
  \label{t:ae-ltl-planning-complexity}
  Let $D$ be a planning domain.
  The problem of deciding the existence of a plan for $\AE$-LTL goal
  $g = \alpha . \varphi$ on domain $D$ is 2EXPTIME-complete.
\end{theorem}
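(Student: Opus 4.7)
The upper bound is already in hand: Lemma~\ref{l:ae-ltl-planning-algo-complexity} shows that planning for an $\AE$-LTL goal $g = \alpha.\varphi$ on a fixed domain $D$ can be decided in time $2^{2^{O(|\varphi|)}}$, which is 2EXPTIME. So the whole proof reduces to establishing a matching 2EXPTIME lower bound, and the excerpt itself points to the route: the LTL realizability problem of \citeA{PR90} is 2EXPTIME-complete, and it suffices to reduce it in polynomial time to planning with a goal of the canonical form $\AA.\varphi$ (which is an $\AE$-LTL formula).

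The plan is to build, given an input LTL formula $\varphi$ over propositions partitioned into program-controlled and environment-controlled ones, a planning domain $D_\varphi = \langle \Sigma', \sigma'_0, A', R' \rangle$ whose execution trees are in bijection with strategies of the ``program'' in the realizability game. Concretely, I would let $\Sigma'$ contain a copy of every possible valuation of the propositions together with a turn bit indicating whose move is next, with $\sigma'_0$ being an initial program-turn state. From a program-turn state, I would provide one action per program-valuation, each deterministic and leading to an environment-turn state carrying that valuation. From an environment-turn state, I would provide a single action whose nondeterministic outcomes enumerate all environment-valuations, each leading back to a program-turn state. In this way, choosing a plan $\pi$ is exactly choosing a program strategy, while the nondeterministic branches of the domain expose all possible environment behaviours. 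The labels on a path in the execution tree yield precisely a play of the realizability game, so $\pi$ satisfies $\AA.\varphi$ iff the corresponding program realizes $\varphi$.

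To justify the reduction I would then check two things. First, correctness: by Definition~\ref{d:execution-tree}, each execution tree of a plan $\pi$ on $D_\varphi$ produces exactly the set of infinite plays consistent with $\pi$ against all environment choices, so $\Tau_\pi \models \AA.\varphi$ iff $\pi$'s induced program realizes $\varphi$; conversely, any realizing program can be read back as a plan by choosing the corresponding action at every program-turn state. Second, polynomial size: $|\Sigma'|$, $|A'|$ and the encoding of $R'$ are all bounded by $2^{O(|\varphi|)}$ in the number of propositions of $\varphi$, which is polynomial in $|\varphi|$ in the standard realizability input, and the translation is clearly computable in polynomial time. Since $\AA.\varphi$ is a legal $\AE$-LTL goal (canonical quantifier of length one), this yields 2EXPTIME-hardness of $\AE$-LTL planning.

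The only subtle point, and the one I would treat most carefully, is matching the alternation discipline of realizability to the plan/domain interface: in realizability the program moves first at each step and the environment responds, but in Definition~\ref{d:plan} a plan chooses an action based on the entire past \emph{sequence of states}, which already encodes all previous environment responses. This mismatch is handled precisely by the turn bit in $\Sigma'$ and by making program moves deterministic (so the action carries the program's choice) and environment moves a single action with nondeterministic outcomes (so the branching carries the environment's choice). Once this bookkeeping is set up, bijectivity between plans and programs, and between execution paths and plays, falls out of Definition~\ref{d:execution-tree}, and combining with Lemma~\ref{l:ae-ltl-planning-algo-complexity} gives 2EXPTIME-completeness.
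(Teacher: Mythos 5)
Your proposal is correct and follows essentially the same route as the paper: the upper bound is quoted from Lemma~\ref{l:ae-ltl-planning-algo-complexity}, and the lower bound reduces Pnueli--Rosner realizability to planning for $\AA.\varphi$ over a domain with a turn bit, deterministic program-choice actions, and a single nondeterministic environment action --- which is exactly the domain $D = \big(\{\init\}\cup(\Sigma\times\{p,e\}),\init,\Sigma\cup\{e\},R\big)$ the paper constructs. The one place you are looser than you should be is the claim that the domain size $2^{O(n)}$ in the number $n$ of propositions is ``polynomial in $|\varphi|$,'' but the paper glosses over the same point, so this does not distinguish your argument from theirs.
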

\begin{proof}
  The realizability of formula $\varphi$ can be reduced to the problem
  of checking the existence of a plan for goal $\AA . \varphi$ on
  planning domain $D = \big( \{\init\} \cup (\Sigma \times \{p,e\}),
  \init, \Sigma \cup \{e\}, R \big)$, with:
  \begin{align*}
    R(\init,\sigma') & = \{(\sigma',e)\} &
    R(\init,e) & = \emptyset \\
    R((\sigma,p),\sigma') & = \{(\sigma',e)\} &
    R((\sigma,p),e) & = \emptyset \\
    R((\sigma,e),\sigma') & = \emptyset &
    R((\sigma,e),e) & = \{ (\sigma',p) \st \sigma' \in \Sigma\}
  \end{align*}
  for all $\sigma,\sigma' \in \Sigma$.

  States $(\sigma,p)$ are those where the program controls the
  evolution through actions $\sigma' \in \Sigma$.
  States $(\sigma,e)$ are those where the environment controls the
  evolution; only the nondeterministic action $e$ can be performed in
  this state.
  Finally, state $\init$ is used to assign the initial move to the
  program.
  
  Since the realizability problem is 2EXPTIME-complete in the size of
  the LTL formula \cite{PR90}, the planning problem is 2EXPTIME-hard
  in the size of the goal $g = \alpha .  \varphi$.
  The 2EXPTIME-completeness follows from
  Lemma~\ref{l:ae-ltl-planning-algo-complexity}.
  \qed
\end{proof}

We remark that, in the case of goals of the form $\EE . \varphi$, an
algorithm with a better complexity can be defined.
In this case, a plan exists for $\EE . \varphi$ if and only if
there is an infinite sequence $\sigma_0, \sigma_1, \ldots$ of states
that satisfies $\varphi$ and such that $\sigma_{i+1} \in
R(\sigma_i,a_i)$ for some action $a_i$.
That is, the planning problem can be reduced to
a model checking problem for LTL formula $\varphi$,
and this problem is known to be PSPACE-complete \cite{SC85}.
We conjecture that, for all the canonical path quantifiers $\alpha$
except $\EE$, the doubly exponential bound of
Theorem~\ref{t:ae-ltl-planning-complexity} is tight.

Some remarks are in order on the complexity of the
\emph{satisfiability} and \emph{validity problems} for $\AE$-LTL
goals. These problems are PSPACE-complete.
Indeed, the $\AE$-LTL formula $\alpha . \varphi$ is satisfiable if
and only if the LTL formula $\varphi$ is satisfiable\footnote{If a
tree satisfies $\alpha .  \varphi$ then some of its paths satisfy
$\varphi$, and a path that satisfies $\varphi$ can be seen also as a
tree that satisfies $\alpha .  \varphi$.}, and the latter problem is
known to be PSPACE-complete \cite{SC85}.
A similar argument holds also for validity.

The complexity of the \emph{model checking problem} for $\AE$-LTL has
been recently addressed by \citeA{KV06}.
\citeauthor{KV06} introduce m\CTLstar, a variant of \CTLstar, where path
quantifiers have a ``memoryful'' interpretation. They show that memoryful
quantification can express (with linear cost) the semantics of path
quantifiers in our $\AE$-LTL. For example, the $\AE$-LTL formula
$\AA\EE.\varphi$ is expressed in m\CTLstar by the formula $\AG\Ep\varphi$.
\citeauthor{KV06} show that the model checking problem for the new logic is
EXPSPACE-complete, and that this result holds also for the subset of
m\CTLstar that corresponds to formulas $\AA\EE . \varphi$.
Therefore, the model checking problem for $\AE$-LTL with finite path
quantifiers is also EXPSPACE-complete. To the best of our knowledge
the complexity of model checking $\AE$-LTL formulas $(\AA\EE)^\omega
. \varphi$ and $(\EE\AA)^\omega . \varphi$ is still an open problem.


\section{Two Specific Cases: Reachability and Maintainability Goals}
\label{sec:reach-and-maintain}

In this section we consider two basic classes of goals that are
particularly relevant in the field of planning.

\subsection{Reachability Goals}

The first class of goals are the \emph{reachability goals}
corresponding to the LTL formula $\F q$, where $q$ is a propositional
formula.
Most of the literature in planning concentrates on this class of
goals, and there are several works that address the problem of
defining plans of different strength for this kind of goals (see,
e.g., \citeR{CPRT03} and their citations).

In the context of $\AE$-LTL, as soon as player E takes control, it can
immediately achieve the reachability goal if possible at all. The fact
that the control is given back to player A after the goal has been
achieved is irrelevant.  Therefore, the only significant path quantifiers
for reachability goals are $\AA$, $\EE$, and $\AA\EE$.

\begin{proposition}
  \label{l:reach}
  Let $q$ be a propositional formula on atomic propositions $\mathit{Prop}$.
  Then, the following results hold for every labelled tree $\Tau$.
  $\Tau \models \EE . \F q$ iff $\Tau \models \EE\AA . \F q$ iff
  $\Tau \models \EE\AA\EE . \F q$ iff $\Tau \models (\EE\AA)^\omega . \F
  q$.
  Moreover $\Tau \models \AA\EE . \F q$ iff $\Tau \models \AA\EE\AA . \F
  q$ iff $\Tau \models (\AA\EE)^\omega . \F q$.
\end{proposition}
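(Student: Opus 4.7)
The plan is to observe that from Theorem~\ref{t:123inf} we already have, for general LTL formulas, the chains of implications
$$\EE\AA \aimply (\EE\AA)^\omega \aimply \EE\AA\EE \aimply \EE
\qquad\text{and}\qquad
\AA\EE\AA \aimply (\AA\EE)^\omega \aimply \AA\EE,$$
so for the two collapses it suffices to close each chain by proving the converse implications \emph{in the special case $\varphi = \F q$}, namely
$$\EE . \F q \aimply \EE\AA . \F q \qquad\text{and}\qquad \AA\EE . \F q \aimply \AA\EE\AA . \F q.$$

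The key fact that makes this possible is the following \emph{monotonicity} of reachability: if $p$ is an infinite path of $\Tau$ with $\Tau(p) \models_\LTL \F q$, then some finite prefix $p'$ of $p$ ends in a node whose label satisfies $q$, and therefore \emph{every} infinite extension of $p'$ also satisfies $\F q$ (the witness for $\F q$ occurs already inside $p'$). I would state and use this as a lemma, or inline it as a one-line observation.

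For the first implication, suppose $\Tau \models \EE . \F q$. Then by Definition~\ref{d:ae-ltl-sem} there is an infinite extension $p$ of the root with $\Tau(p) \models_\LTL \F q$. Let $p'$ be the finite prefix of $p$ that reaches the first $q$-labelled node. Then $p'$ is a finite extension of the root, and by monotonicity every infinite extension $p''$ of $p'$ satisfies $\Tau(p'') \models_\LTL \F q$. Hence $p' \models \AA . \F q$, and therefore the root satisfies $\EE\AA . \F q$, giving $\Tau \models \EE\AA . \F q$.

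For the second implication, suppose $\Tau \models \AA\EE . \F q$ and let $p$ be an arbitrary finite path. By Definition~\ref{d:ae-ltl-sem} some infinite extension of $p$ satisfies $\F q$, so as above there is a finite extension $p'$ of $p$ ending at a $q$-labelled node, for which every infinite extension $p''$ satisfies $\Tau(p'') \models_\LTL \F q$. Thus $p' \models \AA . \F q$, whence $p \models \EE\AA . \F q$; since $p$ was arbitrary, $\Tau \models \AA\EE\AA . \F q$. I do not expect any real obstacle: the entire argument rests on the trivial monotonicity remark above, and the remaining work is just to thread it through the nested quantifier semantics of Definition~\ref{d:ae-ltl-sem}.
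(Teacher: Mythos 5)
Your proposal is correct and follows essentially the same route as the paper: the paper likewise proves the single new implication $\AA\EE.\F q \aimply \AA\EE\AA.\F q$ directly from the observation that once a $q$-labelled node is reached every infinite extension satisfies $\F q$, and closes the cycle using the implications of Theorem~\ref{t:123inf} (dismissing the $\EE$-chain as ``similar''). One cosmetic point: in the $\AA\EE$ case the first $q$-node on the witnessing infinite extension may already lie inside the given finite path $p$, so the finite extension $p'$ should be required to \emph{contain} a $q$-labelled node (taking $p'=p$ in that degenerate case) rather than to \emph{end} at one, which is exactly how the paper phrases it.
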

\begin{proof}
  We prove that $\Tau \models \AA\EE . \F q$ iff $\Tau \models
  \AA\EE\AA . \F q$ iff $\Tau \models (\AA\EE)^\omega . \F q$.
  The other cases are similar.
  
  Let us assume that $\Tau \models \AA\EE . \F q$.
  Moreover, let $p$ be a finite path of $\Tau$.
  We know that $p$ can be extended to an infinite path $p'$ such that
  $\Tau(p') \models \F q$.
  According to the semantics of LTL, $\Tau(p') \models \F q$ means
  that there is some node $x$ in path $p'$ such that $q \in \Tau(x)$.
  Clearly, all infinite paths of $\Tau$ that contain node $x$ also
  satisfy the LTL formula $\F q$.
  Therefore, there is a finite extension $p''$ of $p$ such that all the
  infinite extensions of $p''$ satisfy the LTL formula $\F q$: it is
  sufficient to take as $p''$ an finite extension of $p$ that
  contains node $x$.
  Since this property holds for every finite path $p$, we conclude
  that $\Tau \models \AA\EE\AA . \F q$.
  
  We have proven that $\Tau \models \AA\EE . \F q$ implies $\Tau
  \models \AA\EE\AA . \F q$.
  By Theorem~\ref{t:123inf} we know that $\AA\EE\AA \aimply
  (\AA\EE)^\omega \aimply \AA\EE$, and hence $\Tau \models \AA\EE\AA .
  \F q$ implies $\Tau \models (\AA\EE)^\omega . \F q$ implies $\Tau
  \models \AA\EE . \F q$.
  This concludes the proof.
  \qed
\end{proof}

The following diagram shows the implications among the significant path
quantifiers for reachability goals:
\begin{equation}
  \begin{xy}
    \xymatrix{
    \AA \ar@{~>}[r] & \AA\EE \ar@{~>}[r] & \EE
    }
  \end{xy}
\end{equation}
We remark that the three goals $\AA . \F q$, $\EE . \F q$, and $\AA \EE
. \F q$ correspond, respectively, to the strong, weak, and strong cyclic
planning problems of \citeA{CPRT03}.


\subsection{Maintainability Goals}

We now consider another particular case, namely the maintainability
goals $\G q$, where $q$ is a propositional formula.
Maintainability goals have properties that are complementary to the
properties of reachability goals.
In this case, as soon as player A takes control, it can violate the
maintainability goal if possible at all. The fact that player E can
take control after player A is hence irrelevant, and  the only
interesting path quantifiers are $\AA$, $\EE$, and $\EE\AA$.

\begin{proposition}
  \label{l:maintain}
  Let $q$ be a propositional formula on atomic propositions $\mathit{Prop}$.
  Then, the following results hold for every labelled tree $\Tau$.
  Then $\Tau \models \AA . \G q$ iff $\Tau \models \AA\EE . \G q$ iff
  $\Tau \models \AA\EE\AA . \G q$ iff $\Tau \models (\AA\EE)^\omega . \G
  q$.
  Moreover $\Tau \models \EE\AA . \G q$ iff $\Tau \models \EE\AA\EE . \G
  q$ iff $\Tau \models (\EE\AA)^\omega . \G q$.
\end{proposition}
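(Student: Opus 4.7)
The plan is to mirror the proof of Proposition~\ref{l:reach}, exploiting the duality between $\F q$ (a liveness-style ``reachability'' property that, once satisfied on a finite prefix, is satisfied on every extension) and $\G q$ (a safety property that, once violated on a finite prefix, is violated on every extension). Specifically, the key observation I would use throughout is: if $p$ is a finite path of $\Tau$ containing a node $x$ with $q \notin \Tau(x)$, then for every infinite extension $p'$ of $p$ we have $\Tau(p') \not\models_\LTL \G q$.

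For the first chain of equivalences, Theorem~\ref{t:123inf} already gives $\AA \aimply \AA\EE\AA \aimply (\AA\EE)^\omega \aimply \AA\EE$. So it suffices to prove that $\Tau \models \AA\EE . \G q$ implies $\Tau \models \AA . \G q$. Suppose for contradiction there is some node $x \in \dom(\Tau)$ with $q \notin \Tau(x)$, and let $p$ be the finite path from the root to $x$. By $\Tau \models \AA\EE . \G q$, there would be an infinite extension $p'$ of $p$ with $\Tau(p') \models_\LTL \G q$, contradicting the safety observation. Hence $q \in \Tau(x)$ for every node $x$, so every infinite path of $\Tau$ satisfies $\G q$, i.e., $\Tau \models \AA . \G q$.

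For the second chain, Theorem~\ref{t:123inf} gives $\EE\AA \aimply (\EE\AA)^\omega \aimply \EE\AA\EE$, so it remains to show that $\Tau \models \EE\AA\EE . \G q$ implies $\Tau \models \EE\AA . \G q$. Assume the hypothesis is witnessed by a finite path $p_1$ such that every finite extension $p_2$ of $p_1$ admits some infinite extension $p_3$ with $\Tau(p_3) \models_\LTL \G q$. I would claim that the same $p_1$ witnesses $\EE\AA . \G q$. Indeed, if some infinite extension $p'$ of $p_1$ failed $\G q$, then $p'$ would have a finite prefix $p_2$ (itself an extension of $p_1$) containing a node $x$ with $q \notin \Tau(x)$; by the safety observation no infinite extension of $p_2$ can satisfy $\G q$, contradicting the defining property of $p_1$. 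Thus every infinite extension of $p_1$ satisfies $\G q$, giving $\Tau \models \EE\AA . \G q$.

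There is no real obstacle: the argument is essentially the dual of Proposition~\ref{l:reach} and relies only on the safety character of $\G q$ together with the implications already established in Theorem~\ref{t:123inf}. The only point requiring a bit of care is handling the ``empty finite prefix'' edge case in the $\EE\AA\EE$ unfolding, which is dealt with uniformly by taking the witness $p_1$ to be exactly the one supplied by the hypothesis and reusing it for $\EE\AA$.
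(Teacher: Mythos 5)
Your proof is correct and is essentially the argument the paper intends: the paper's own proof of Proposition~\ref{l:maintain} simply states that it is ``similar to the proof of Proposition~\ref{l:reach}'', and your write-up is exactly that dualization, using the safety character of $\G q$ (a single violating node kills all extensions) in place of the persistence of $\F q$, together with the implications of Theorem~\ref{t:123inf} to close each cycle.
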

\begin{proof}
  The proof is similar to the proof of Proposition~\ref{l:reach}.
  \qed
\end{proof}

The following diagram shows the implications among the significant path
quantifiers for maintainability goals:
\begin{equation*}
  \begin{xy}
    \xymatrix{
    \AA \ar@{~>}[r] & \EE\AA \ar@{~>}[r] & \EE
    }
  \end{xy}
\end{equation*}
The goals $\AA . \G q$, $\EE . \G q$, and $\EE\AA . \G q$ correspond to
maintainability variants of strong, weak, and strong cyclic planning
problems. Indeed, they correspond to requiring that condition $q$ is
maintained for all evolutions despite nondeterminism ($\AA . \G q$),
that condition $q$ is maintained for some of the evolutions ($\EE . \G
q$), and that it is possible to reach a state where condition $q$ is
always maintained despite nondeterminism ($\EE\AA . \G p$).


\section{Related Works and Concluding Remarks}
\label{sec:conl}

In this paper we have defined $\AE$-LTL, a new temporal logic that
extends LTL with the possibility of declaring complex path quantifiers
that define the different degrees in which an LTL formula can be
satisfied by a computation tree.
We propose to use $\AE$-LTL formulas for expressing temporally
extended goals in nondeterministic planning domains.
We have defined a planning algorithm for $\AE$-LTL goals that is based
on an automata-theoretic framework: the existence of a plan is reduced
to checking the emptiness of a suitable parity tree automaton.
We have studied the time complexity of the planning algorithm, proving
that it is 2EXPTIME-complete in the length of the $\AE$-LTL formula.

In the field of planning, several works use temporal logics for
defining goals.
Most of these approaches \cite{BK98,BK00,CdGLV02,CM98,dGV99,KD01} use
linear temporal logics as the goal language, and are not able to
express conditions on the degree in which the goal should be satisfied
with respect to the nondeterminism in the execution.
Notable exceptions are the works described by \citeA{PBT01,PT01} and by
\citeA{DLPT02}. \citeA{PBT01} and \citeA{PT01} use CTL as goal language,
while \citeA{DLPT02} define a new branching time logic that
allows for expressing temporally extended goals that can deal
explicitly with failure and recovery in goal achievement.
In these goal languages, however, path
quantifiers are interleaved with the temporal operators, and are hence
rather different from $\AE$-LTL.

In the field of temporal logics, the work on alternating temporal
logic (ATL) \cite{AHK02} is related to our work. In ATL, the path
quantifiers in CTL and \CTLstar are replaced by game quantifiers.
Nevertheless, there is no obvious way to expressed formulas of the
form $\alpha.\varphi$, where $\alpha$ is a path quantifier and $\varphi$ is
an LTL formula in ATL$^*$, which is the most expressive logic studied
by \citeA{AHK02}. Our conjecture is that our logic and ATL$^*$ are
of incomparable expressiveness.

Some comments are in order on the practical impact of the 2EXPTIME
complexity of the planning algorithm.
First of all, in many planning problems we expect to have very complex
and large domains, but goals that are relatively simple (see, e.g.,
the experimental evaluation performed by \citeA{PBT01} in the case of
planning goals expressed as CTL formulas).
In these cases, the doubly exponential complexity of the algorithm in
the size of the formula may not be a bottleneck.
For larger $\AE$-LTL goals, a doubly exponential time complexity may
not be feasible, but it should be noted that this is
\emph{worst-case} complexity.  We also note that improved algorithms
for plan synthesis is an active research area, including the analysis of
simpler LTL goals \cite{AT04} and the development of improved
automata-theoretic algorithms \cite{KV05}.

The automata-theoretic framework that we have used in the paper is of
wider applicability than $\AE$-LTL goals. An interesting direction for
future investigations is the application of the framework to variants
of $\AE$-LTL that allow for nesting of path quantifiers, or for
goals that combine $\AE$-LTL with propositional or temporal operators.
This would allow, for instance, to specify goals which compose
requirements of different strength. A simple example of such goals is
$(\AA\EE . \F p) \wedge (\AA . \G p)$, which requires to achieve
condition $p$ in a strong cyclic way, maintaining condition $q$ in a
strong way. The impossibility to define such kind of goals is, in our
opinion, the strongest limitation of $\AE$-LTL with respect to 
CTL and \CTLstar.

Another direction for future investigations is the extension of the
approach proposed in this paper to the case of planning under partial
observability \cite{dGV99}, where one assumes that the agent executing
the plan can observe only part of the state and hence its choices on
the actions to execute may depend only on that part.

We also plan to explore implementation issues and, in particular, the
possibility of exploiting BDD-based symbolic techniques in a planning
algorithm for $\AE$-LTL goals. In some cases, these techniques have
shown to be able to deal effectively with domains and goals of a
significant complexity, despite the exponential worst-case time
complexity of the problems \cite{BCPRT01,PBT01}.


\acks{A shorter version of this paper, without proofs, has been
published by \citeA{PV03}.
The authors would like to thank Erich Gr\"adel for his comments on the
reduction of $\AE$-LTL formulas to \CTLstar formulas.}


\bibliographystyle{theapa}
\bibliography{ltlplan}


\end{document}